%%%% kr-instructions.tex -- version 1.3 (11-Jan-2021)

\typeout{KR2023 Instructions for Authors}

% These are the instructions for authors for KR-23.

\documentclass{article}
\pdfpagewidth=8.5in
\pdfpageheight=11in

\usepackage{kr}

% Use the postscript times font!
\usepackage{times}
\usepackage{soul}
\usepackage{url}
\usepackage[hidelinks]{hyperref}
\usepackage[utf8]{inputenc}
\usepackage[small]{caption}
\usepackage{graphicx}
\usepackage{amsmath}
\usepackage{amsthm}
\usepackage{booktabs}
\usepackage{algorithm}
\urlstyle{same}
%\usepackage{todonotes}

% Our Macros from AAMAS

% to solve the clashes between packages
\usepackage{savesym}
\savesymbol{Bbbk}
\savesymbol{openbox}

\usepackage{amssymb}
\usepackage{amsthm}
\usepackage{MnSymbol}
\usepackage{color}
\usepackage{url}
\usepackage{multirow}
\usepackage{xcolor}
\usepackage{dsfont}
\usepackage{algorithm}
\usepackage[noend]{algorithmic}

\restoresymbol{NEW}{Bbbk}
\restoresymbol{NEW}{openbox}

% Line numbers
\usepackage[switch]{lineno}
%\linenumbers

% For footnotes inside tables
\usepackage{tablefootnote}

%For subfigures
\usepackage{subcaption}

\newcommand{\FT}[1]{\textcolor{black}{#1}} 

\newcommand{\A}[1]{\textcolor{black}{#1}}
\newcommand{\AR}[1]{\textcolor{black}{#1}}
\newcommand{\todo}[1]{\textcolor{magenta}{#1}}
\newcommand{\delete}[1]{\textcolor{olive}{$\times$ #1}}
\newcommand{\cut}[1]{\textcolor{orange}{%#1
}}

% ABBREVIATIONS

\def\resp{resp.}
\def\Iff{iff}

\def\RR{RR}
\def\CR{CR}
\def\PR{PR}
\def\CA{CA}

\def\public{%public?common?explanatory?shared?
exchange}

\newcommand{\Args}{\ensuremath{\mathcal{X}}}
\newcommand{\Atts}{\ensuremath{\mathcal{A}}}
\newcommand{\Supps}{\ensuremath{\mathcal{S}}}
\newcommand{\Agents}{\ensuremath{{AG}}}
\newcommand{\SpeakerM}{\ensuremath{\mathcal{C}}}

\newcommand{\Range}{\ensuremath{\mathbb{I}}}

\newcommand{\BAF}{\ensuremath{\mathcal{B}}}
\newcommand{\QBAF}{\ensuremath{\mathcal{Q}}}
\newcommand{\SF}{\ensuremath{\sigma}}
\newcommand{\BS}{\ensuremath{\tau}}
\newcommand{\RPos}{\ensuremath{\Range^+}}
\newcommand{\RNeg}{\ensuremath{\Range^-}}
\newcommand{\RNeu}{\ensuremath{\Range^0}}
\newcommand{\Stance}{\ensuremath{\Sigma}}

\newcommand{\AGa}{\ensuremath{\alpha}}
\newcommand{\AGb}{\ensuremath{\beta}}
\newcommand{\AGh}{\ensuremath{\eta}}
\newcommand{\AGm}{\ensuremath{\mu}}

\newcommand{\Int}{\ensuremath{%p
%\arge
x}}

\newcommand{\eff}{\ensuremath{\epsilon}}
\newcommand{\Pros}{\ensuremath{\mathsf{pro}}}
\newcommand{\Cons}{\ensuremath{\mathsf{con}}}

\newcommand{\argpaths}{\mathsf{paths}}

\newcommand{\BAFi}{\ensuremath{\BAF_\Int}}

\newcommand{\QBAFa}{\ensuremath{\QBAF_\AGa}}
\newcommand{\QBAFb}{\ensuremath{\QBAF_\AGb}}
\newcommand{\QBAFh}{\ensuremath{\QBAF_\AGh}}
\newcommand{\QBAFm}{\ensuremath{\QBAF_\AGm}}

\newcommand{\ViewA}{\ensuremath{\QBAF_{\AGa v}}}

\newcommand{\ArgsA}{\ensuremath{\Args_\AGa}}
\newcommand{\ArgsB}{\ensuremath{\Args_\AGb}}
\newcommand{\ArgsH}{\ensuremath{\Args_\AGh}}
\newcommand{\ArgsM}{\ensuremath{\Args_\AGm}}
\newcommand{\ArgsI}{\ensuremath{\Args_\Int}}

\newcommand{\ArgsVA}{\ensuremath{\Args_{\AGa v}}}

\newcommand{\AttsA}{\ensuremath{\Atts_\AGa}}

\newcommand{\AttsH}{\ensuremath{\Atts_\AGh}}
\newcommand{\AttsM}{\ensuremath{\Atts_\AGm}}
\newcommand{\AttsI}{\ensuremath{\Atts_\Int}}

\newcommand{\AttsVA}{\ensuremath{\Atts_{\AGa v}}}

\newcommand{\SuppsA}{\ensuremath{\Supps_\AGa}}

\newcommand{\SuppsH}{\ensuremath{\Supps_\AGh}}
\newcommand{\SuppsM}{\ensuremath{\Supps_\AGm}}
\newcommand{\SuppsI}{\ensuremath{\Supps_\Int}}

\newcommand{\SuppsVA}{\ensuremath{\Supps_{\AGa v}}}

\newcommand{\SFa}{\ensuremath{\SF_\AGa}}
\newcommand{\SFb}{\ensuremath{\SF_\AGb}}
\newcommand{\SFh}{\ensuremath{\SF_\AGh}}
\newcommand{\SFm}{\ensuremath{\SF_\AGm}}

\newcommand{\BSa}{\ensuremath{\BS_\AGa}}

\newcommand{\BSh}{\ensuremath{\BS_\AGh}}
\newcommand{\BSm}{\ensuremath{\BS_\AGm}}

\newcommand{\BSva}{\ensuremath{\BS_{\AGa v}}}

\newcommand{\RangeA}{\ensuremath{\Range_\AGa}}

\newcommand{\RangeH}{\ensuremath{\Range_\AGh}}
\newcommand{\RangeM}{\ensuremath{\Range_\AGm}}

\newcommand{\RPosA}{\ensuremath{\RPos_\AGa}}
\newcommand{\RNegA}{\ensuremath{\RNeg_\AGa}}
\newcommand{\RNeuA}{\ensuremath{\RNeu_\AGa}}

\newcommand{\RPosH}{\ensuremath{\RPos_\AGh}}
\newcommand{\RNegH}{\ensuremath{\RNeg_\AGh}}
\newcommand{\RNeuH}{\ensuremath{\RNeu_\AGh}}
\newcommand{\RPosM}{\ensuremath{\RPos_\AGm}}
\newcommand{\RNegM}{\ensuremath{\RNeg_\AGm}}
\newcommand{\RNeuM}{\ensuremath{\RNeu_\AGm}}

\newcommand{\StanceA}{\ensuremath{\Stance_\AGa}}
\newcommand{\StanceB}{\ensuremath{\Stance_\AGb}}
\newcommand{\StanceH}{\ensuremath{\Stance_\AGh}}
\newcommand{\StanceM}{\ensuremath{\Stance_\AGm}}

\newcommand{\turn}{\ensuremath{\pi}}

\newcommand{\arga}{\ensuremath{a}}
\newcommand{\argb}{\ensuremath{b}}
\newcommand{\argc}{\ensuremath{c}}
\newcommand{\argd}{\ensuremath{d}}
\newcommand{\arge}{\ensuremath{e}}
\newcommand{\argf}{\ensuremath{f}}

% the following package is optional:
%\usepackage{latexsym}

% See https://www.overleaf.com/learn/latex/theorems_and_proofs
% for a nice explanation of how to define new theorems, but keep
% in mind that the amsthm package is already included in this
% template and that you must *not* alter the styling.
\newtheorem{example}{Example}

\newtheorem{definition}{Definition}
\newtheorem{proposition}{Proposition}

\newtheorem{property}{Property}

% Following comment is from ijcai97-submit.tex:
% The preparation of these files was supported by Schlumberger Palo Alto
% Research, AT\&T Bell Laboratories, and Morgan Kaufmann Publishers.
% Shirley Jowell, of Morgan Kaufmann Publishers, and Peter F.
% Patel-Schneider, of AT\&T Bell Laboratories collaborated on their
% preparation.

% These instructions can be modified and used in other conferences as long
% as credit to the authors and supporting agencies is retained, this notice
% is not changed, and further modification or reuse is not restricted.
% Neither Shirley Jowell nor Peter F. Patel-Schneider can be listed as
% contacts for providing assistance without their prior permission.

% To use for other conferences, change references to files and the
% conference appropriate and use other authors, contacts, publishers, and
% organizations.
% Also change the deadline and address for returning papers and the length and
% page charge instructions.
% Put where the files are available in the appropriate places.
%PDF Info Is REQUIRED.
\pdfinfo{
/TemplateVersion (KR.2022.0, KR.2023.0)
}

\title{%\textcolor{red}{CONFIDENTIAL - PLEASE DO NOT DISTRIBUTE\\ \quad \\}
Interactive Explanations by Conflict Resolution via Argumentative Exchanges
 %\delete{Multiagent Dynamics in} Gradual Bipolar Argumentation for Interactive XAI\delete{Explanations}: \\the Strongest Argument may not always be the Most Effective
 }

% Single author syntax
\iffalse % (remove the multiple-author syntax below and \iffalse ... \fi here)
\author{%
    Author name
    \affiliations
    Affiliation
    \emails
    email@example.com    % email
}
\fi
% Multiple author syntax
\author{%
Antonio Rago\and
Hengzhi Li\And
Francesca Toni \\
\affiliations
Department of Computing, Imperial College London, UK\\
\emails
\{a.rago, hengzhi.li21, ft\}@imperial.ac.uk
}

\begin{document}

\maketitle

\begin{abstract}
As the field of {explainable AI} (XAI) is maturing, %the
calls for 
interactive explanations for (the outputs of) AI models
are growing,  but the state-of-the-art predominantly focuses on %explanations that are delivered statically% and can host only shallow reasoning
\emph{static explanations}.
In this paper, we focus instead on interactive explanations framed as \emph{conflict resolution}  between agents (i.e. AI models and/or humans) % offers a novel perspective on, and alternative methods for, this task. W}e 
by leveraging on {computational argumentation}%, which we believe is \delete{uniquely}\AR{particularly} well placed for harbouring, analysing and delivering the information in interactive explanations
. Specifically, we define \emph{Argumentative eXchanges (AXs)} %, a mechanism
for %representing a dynamic form of 
dynamically sharing, in multi-agent systems, information harboured in individual agents' \emph{quantitative bipolar argumentation frameworks} towards resolving conflicts amongst the agents. We then deploy %argumentative exchanges 
AXs in the XAI setting in which a machine and a human interact about the machine's predictions% when the human and the machine disagree
.  
We identify and assess several theoretical properties characterising %argumentative exchanges 
AXs that are suitable for XAI. %\AR{theoretically how they satisfy various desirable properties for this setting.} 
Finally, we instantiate %our argumentative exchanges 
AXs for XAI by
defining various agent behaviours, e.g. capturing %human-like phenomena, such as 
counterfactual patterns of reasoning in machines and highlighting the effects of cognitive biases in humans. We
show experimentally (in a simulated environment) %and theoretically 
the comparative
advantages of these behaviours %in the context of argumentative exchanges 
in terms of %supporting interactivity and non-shallow reasoning
conflict resolution, and show that
the strongest argument may not always be the most effective.
\end{abstract}

%\todo{``Finally, the notation seems sometimes a bit heavy, and intuitions are sometimes missing (for instance I think the different behaviours could be presented more simply)’’ We aimed at a fully reproducible framework. We will add intuitions and try to simplify where possible. \A{I would ignore this, I think we've done enough?}}

\section{Introduction}

The need for interactivity in  explanations of the outputs of AI models has long been called for \cite{Cawsey_91}, and the recent wave of explainable AI (XAI) has given rise to renewed urgency in the matter. In \cite{Miller_19}, it is stated that explanations need to be  social, and thus for machines to truly explain themselves, they must be interactive, %noting that XAI should not just be `
so that XAI is not just ``more AI'', but a human-machine interaction problem.
Some have started exploring explanations as dialogues~\cite{Lakkaraju_22X}
, while several are exploring forms of interactive machine learning for model debugging~\cite{Teso_23}. 
%\delete{There are also clear indications in application that it would be helpful if humans could argue with machines \cite{Hirsch_18}.} 
It has also been claimed that it is our responsibility to create machines which can argue with humans \cite{Hirsch_18}.
However, despite the widespread acknowledgement of the need for interactivity, typical approaches to XAI deliver \emph{``static'' explanations}, whether they be based on feature attribution (e.g. as in \cite{Lundberg_17}), counterfactuals (e.g. as in \cite{Wachter_17X}) or other factors such as prime implicants (e.g. as in \cite{Shih_18,Ignatiev_19}).
These explanations typically focus exclusively on aspects of the input %\AR{features}  I HAVE USED ASPECTS AS IT SEEMS MORE GENERIC....
deemed responsible (in different ways, according to the method used) for the outputs of the explained AI model, and offer %limited opportunities 
little opportunity for interaction.
%usually consisting of some categorisation of the feature variables for a given (or modified) input and do not %offer the opportunity for users to provide feedback, 
%allow for accommodating human feedback.
For illustration, consider a recommender system providing positive and negative evidence drawn from input features
as an explanation for a movie %\AR{being poorly rated} WHY DO WE CARE HERE? IT IS A DISTRACTION FROM THE MAIN POINT
recommendation
to a user: this form of explanation is static in that %do not contain mechanisms for interactively handling conflicts 
it does not support interactions between the system and the user, e.g. if the latter %\delete{wants to enquire about or \todo{I would avoid mentioning this since we don't allow for enquiry}} 
disagrees with the role of the input features in the explanation towards the recommendation, or %\delete{if the user disagrees on} 
with the system's recommendation itself. % FT THIS DOES NOT FIT HERE - WE NEED TO MENTION THIS IN THE SECTION WHERE THESE THINGS ARE DICUSSED Further, such explanations provide limited capability for addressing known challenges in XAI such as providing human-like reasoning \cite{Miller_19} or accounting for cognitive biases in users \cite{Bertrand_22}.

\iffalse FT: DANGEROUS SENTENCE, IT RAISES EXPECTATIONS - WE DO NOT DO HUMAN EVALUATION AT ALL
Further, these methods are often evaluated wrt a set of (undoubtedly useful) machine-centric metrics, e.g. size, computation time or completeness, without any consideration of the %human factors which play a role in social explanation, e.g.   
role that humans can play in the explanatory process \todo{\cite{?}}, the importance of human-like reasoning mechanisms \cite{Miller_19} or the accounting for cognitive biases present in users of XAI \cite{Bertrand_22}. 
\todo{The next sentence is confrontational and unnecessary? FOCUS ON WHAT WE DO INSTEAD}We argue that these trends are leading the field of XAI towards a rigid and computational view of explanation, \delete{whereas} \AR{and that} in order to achieve trustworthy AI, explanations for the outputs of AI models must be delivered %via reasoning mechanisms which 
to
closely align with those of humans %to allow 
for interactivity.
\fi

%\todo{model adjustment/alignment (xai as reconciliation)}

A %recent 
parallel research direction %in XAI
focuses on %the use of 
\emph{argumentative explanations} for AI models of various types (see \cite{Cyras_21,Vassiliades_22} for recent overviews), 
often motivated by the appeal of argumentation in  explanations amongst humans, e.g. as in \cite{Antaki_92}, within the broader view that XAI should take findings from %studies in 
the social sciences into account~\cite{Miller_19}.
%NOT REALLY - MILLER NEVER SAID USE COMPUTATIONAL ARGUMENTATION Such explanations have been advocated in the social sciences, not least due to argumentation's capability to support challenges from humans \cite{Miller_19} and since the majority of statements made in explanations are, in fact, argumentative in that they not only report causes but also provide reasoning to justify why causes hold (or are thought to hold) \cite{Antaki_92}. 
Argumentative explanations in XAI employ \emph{computational argumentation} (see \cite{AImagazine17,handbook} for overviews)%as a knowledge representation and reasoning mechanism 
%for defining explanations
, leveraging upon (existing or novel)  argumentation frameworks, semantics and properties.

Argumentative explanations seem well suited to support interactivity
when the mechanics of AI models  can be abstracted away %in argumentative terms
argumentatively (e.g.  as %in the case of 
for 
some recommender systems  \cite{Rago_18} or neural networks \cite{Albini_21,Potyka_21}).
For illustration, consider  the case of a movie review aggregation system, as in \cite{Cocarascu_19}, and assume that its recommendation %\delete{for}
of a %\delete{low rating for some} 
movie $x$ and its reasoning therefor
%is explained, to start with, by means of the 
can be %\delete{abstracted away in terms of}
represented by the \emph{bipolar argumentation framework} (BAF) \cite{Cayrol:05} 
$\langle \Args, \Atts, \Supps \rangle$ with \emph{arguments} $\Args \!= \!\{ \arge, m_1, m_2 %\delete{, \argc} 
\}$, \emph{attacks} %(i.e. negative relations between arguments)} THIS IS KR
$\Atts \!= \! \emptyset %\{(\arga, \arge)\}
$ and \emph{supports} %(i.e. positive relations between arguments)} 
$\Supps \!=\! \{(m_1, \arge), (m_2, m_1)\}$ (see left of Figure~\ref{fig:toy} for a graphical visualisation).  
Then, %as in the case of static explanations, 
by %\delete{attacking}
supporting $\arge$, $m_1$ (statically) conveys shallow evidence for the output (i.e. %\delete{a low rating for $m$}
movie $x$ being recommended).
%\todo{why are we making our life difficult? it seems counterintuitive that the movie is not recommended...why can't we have that the movie is recommended but not strongly, so that we have a support from a which is attacked by c? and may be b attacks m? it would be easier to explain and less convoluted \AR{we could do this but then we need to change the examples throughout, it's doable but I just need to spend some time making sure everything fits} 
%\delete{In addition, $\argc$ provides evidence for $\arga$, going beyond the shallow nature of state-of-the-art explanations in XAI.  Now, if the .}
Argumentative explanations may go beyond the shallow nature of state-of-the-art explanations by facilitating dynamic, interactive explanations, e.g. by allowing a human explainee who does not agree with the machine's output or the evidence it provides (in other words, there is a \emph{conflict} between the machine and the human) %\delete{, the human can} 
to provide feedback (in Figure~\ref{fig:toy}, 
by introducing %\delete{support $(\argb,\arge)$ and} 
\cut{either/both of the} attacks $(h_1,\arge)$ or $(h_2,m_1)$%\delete{, \resp}
), while also allowing for the system to provide additional information (in Figure~\ref{fig:toy}, 
by introducing the support %\delete{$(\argc,\arga)$}
$(m_2,m_1)$).
The resulting interactive explanations can be seen as a \emph{conflict resolution} process, 
%may naturally arise from dialogues between the system and the human, 
e.g. as in \cite{Raymond_20}.
%their potential for giving rise to genuinely interactive explanations has not been explored to date. 
Existing approaches %(including \delete{the aforementioned}\AR{that of} \cite{Raymond_20}) 
focus on %the dialogical aspects of interactive explanations, often in 
specific settings. Also, although the need for studying properties of explanations is well-acknowledged \cut{in general} (e.g. see~\cite{Sokol_20,Amgoud_22})% and for argumentative explanations (e.g. see~\cite{Amgoud_22})
, to the best of our knowledge properties of  
interactive explanations, e.g. relating to how well they represent %the existence and possible resolution of 
and %\delete{possibly}
resolve any conflicts, have been neglected to date.
\begin{figure}
    \centering
    \includegraphics[width=1\linewidth]{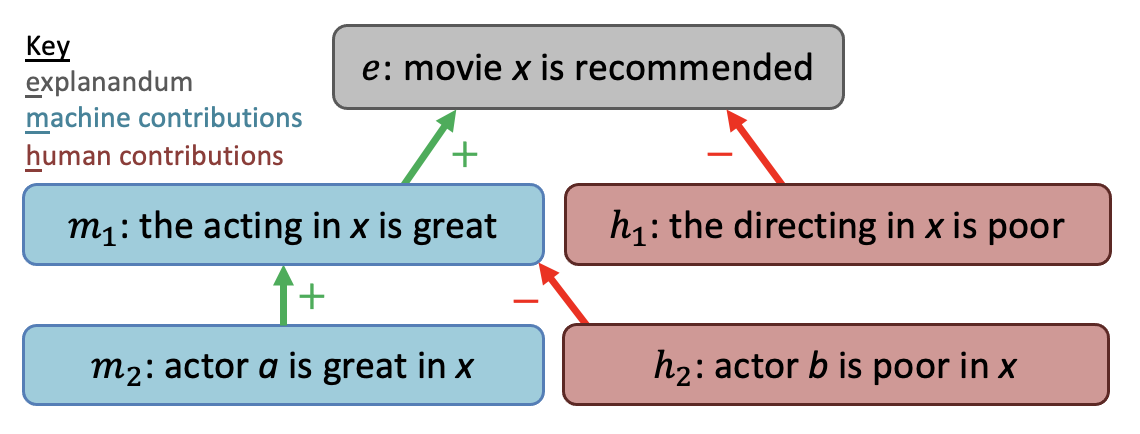}
    \caption{%A BAF $\langle \Args, \Atts, \Supps \rangle$ with arguments $\Args = \{ \arge, \arga, \argb,\argc  \}$,  $\Atts = \{(\arga, \arge)\}$, $\Supps = \{(\argb, \arge), (\argc,\arga)\}$, visualised as a graph. Here, $\arge$ may stand for ``the movie is poorly rated'', $\arga$ for ``the acting was poor'' and $\argb$ for ``the directing was great'', and $\argc$ may be of the form ``reviewers said [\ldots] about  the acting''. 
An argumentative explanation for a review aggregation system, %\delete{resulting from}
amounting to the interactions between a machine  %interactively explains its 
and a human sharing their 
reasoning following a recommendation for 
 $x$.
 %\todo{``From the title, I expected to have a clear idea of what could be an explanation in an interactive setting, but I may have missed something.  For instance, what is considered an explanation in Figure 2 at time 3? The common public tree? Or the private one for the human agent?’’ Your intuition about the “common public tree” as explanations for humans is correct. We will emphasise this, specifically for Figs1 and 3.}
 }
    \label{fig:toy}
\end{figure}
%
%In this paper, we 
We 
fill these gaps by providing a general argumentative framework for interactive explanations as conflict resolution, as well as properties and instantiations thereof, backed by simulated experiments. Specifically\cut{,after covering related work (§\ref{sec:related}) and the necessary preliminaries (§\ref{sec:preliminaries}), we make %the following 
contributions as follows}:
\begin{itemize}
\item We %leverage on computational argumentation to define the novel concept of 
define \emph{Argumentative eXchanges} (AXs, §\ref{sec:exchanges}), in which agents, whose \cut{internal} reasoning is represented as 
\emph{quantitative bipolar argumentation frameworks} (QBAFs) under gradual semantics~\cite{Baroni_18}, contribute attacks/supports between arguments, to interactively obtain BAFs as in Figure~\ref{fig:toy}  towards resolving conflicts on the agents' \emph{stances} on \cut{arguments acting as} explananda. 
We use QBAFs, which are BAFs where arguments are equipped with intrinsic strengths,
as they are well suited to modelling
private viewpoints, public conflicts, and resolutions, as well as cognitive \emph{biases}, \A{which are} important in XAI \cite{Bertrand_22}. We use gradual semantics  to capture individual \emph{evaluations} of stance, taking biases into account.
%\todo{``I felt the paper spends too much space on the infrastructure and relatively few words on why this infrastructure is the right one in the first place.'' We agree that better motivation will improve the paper; we will add to it. In a nutshell, it comes from XAI (e.g. Miller2019), and in particular from the idea that users have their own mental models representing their own knowledge/reasoning when engaging in explanations with models, which also host their own knowledge/reasoning. The use of argumentation is motivated by its versatility in capturing private viewpoints, public conflicts, and resolutions.} 
%\todo{``For most of the paper it was not so clear to me why the paper emphasized so much XAI (this is to some extent addressed by the agents' strategies and the discussion on explanatory properties, but I remain not fully convinced by the perspective taken in the paper – more on this point below).'' 
%We draw motivation from XAI for design choices, we will add discussion (see reply to Reviewer1)}

\item We identify and assess several %theoretical 
properties (§\ref{sec:xai}) which AXs may satisfy to be rendered suitable in an XAI setting. These properties concern, amongst others, the representation and possible resolution of conflicts within interactive explanations drawn from AXs.
%\todo{``Specifically, with so much focus put on XAI at the beginning of the paper, I would have expected properties of explanations (connectedness, acyclicity, and contributor irrelevance) to be presented explicitly much earlier, and discussed further. In terms of explanation, my intuition is that the core properties are resolution and conflict representation, which guarantee some 'trace' of the outcome.’’ We need to define AXs before properties, but we will mention the properties earlier in the paper as motivation.}

\item
We instantiate AXs %in the case of two-agent systems composed of
to the %traditional 
standard XAI setting of
two agents, a machine and a human, \cut{engaged in interactive explanations,} and define a catalogue of  agent behaviours for  \cut{the agents participating in these AXs in}
this setting
(§\ref{sec:agents}). We experiment in a simulated %setting
environment (§\ref{sec:evaluation}) with the behaviours, exploring \A{five}
%5 
hypotheses \cut{on relations between them by machine and human } %with regards to 
about conflict resolution and the accuracy of contributed arguments %in achieving 
towards it, noting that the strongest argument %may 
\A{is} not always %be 
the most effective. 
\end{itemize}
\cut{Finally, we conclude and look ahead to future work (§\ref{sec:conclusions}).}

%%%%%%%%%%%%%%%%%%%%%%%%%%%%%%%%%%%%%%%%%%%%%%%%%%%%%%%%%%%%%%%%%%%%%%%%%%%%%%%%%%%%%%%%%%%%%%%%%%%%

\section{Related Work}
\label{sec:related}

There is a vast literature on \emph{multi-agent argumentation}, 
e.g. recently, 
\cite{Raymond_20}
define an argumentation-based human-agent architecture integrating
regulatory compliance, suitable for human-agent path deconfliction and based on abstract argumentation~\cite{Dung_95}; 
\cite{Panisson_21} develop a multi-agent frameworks whereby agents can exchange information to jointly reason with argument schemes and critical questions; and 
 \cite{Tarle_22}
let agents debate using a shared abstract argumentation framework%~\cut{\cite{Dung_95}}
.
These works mostly focus on narrow settings using structured and abstract argumentation under extension-based semantics, and %, for the most part, do not consider 
mostly ignore the XAI angle (\cite{Raymond_20,Calegari_22} are exceptions). Instead, with XAI as our core drive, we focus on (quantitative) bipolar argumentation under gradual semantics,  motivated by their usefulness in several \cut{argumentation-based} XAI approaches (e.g. in \cite{Cocarascu_19,Albini_21,Potyka_21,Rago_22}). Other works consider (Q)BAFs in multi-agent argumentation, e.g. \cite{Kontarinis_15}, but not for XAI. We adapt some aspects of these works on multi-agent argumentation approaches, specifically the
idea of agents contributing attacks or supports (rather than arguments) to debates~\cite{Kontarinis_15} and the restriction to trees rooted at explananda under gradual semantics from \cite{Tarle_22}. We leave other interesting aspects %\AR{that} NOT NEEDED
they cover to future work, notably handling maliciousness~\cite{Kontarinis_15}, regulatory  compliance~\cite{Raymond_20}, and defining suitable utterances~\cite{Panisson_21}.

%\todo{argumentation for explanation? NOT SURE NEEDED \AR{agreed, we've mentioned the surveys so I think it's enough}}
Several approaches to obtain argumentative explanations for \cut{various} AI models exist (see \cite{Cyras_21,Vassiliades_22} for %recent 
overviews), often relying upon argumentative abstractions of the models\cut{ to be explained}. Our approach %here 
is orthogonal, %in that 
as we assume that suitable QBAF abstractions of models and humans % in terms of QBAFs 
exist, focusing instead on formalising and validating interactive explanations\cut{as argumentation}.

Our AXs and agent behaviours are designed to \emph{resolve conflicts} and are thus related to %other works aiming at 
works on conflict resolution, e.g. \cite{Black_11,Fan_12}, or centered around conflicts, e.g.~\cite{Pisano_22}, but these works have different purposes \A{to} interactive XAI and use forms of argumentation other than (Q)BAFs under gradual semantics.  
Our agent behaviours can also be seen as attempts at \emph{persuasion} in that they aim at selecting most efficacious arguments for changing the mind of the other agents, as %in computational persuasion (e.g. see
e.g. in \cite{Fan_12_MD,Hunter_18,Calegari_21,Donadello_22}.
\cut{Our focus on (Q)BAFs is in line  with work in this area, advocating the need for a support relation, as without it argumentation ``does not account for all of the positive relations between the statements viewed by the participants'' \cite{Hunter_18}. 
}
Further, our AXs can be seen as supporting forms of  \emph{information-seeking and inquiry}, %in that 
as they allow agents to share information, and are thus related to work in this spectrum (e.g. \cite{Black_07,Fan_15_PRIMA}).   
Our framework however differs from general-purpose forms of argumentation-based persuasion/information-seeking/inquiry in its focus on interactive XAI supported by (Q)BAFs under gradual semantics.

The importance of machine handling of \emph{information from humans} when explaining outputs, rather than the humans exclusively receiving information, has been highlighted e.g. %throughout various areas of AI, from users correcting 
for recommender systems \cite{Balog_19,Rago_20} %to %experts humans debugging classifiers
and debugging \cite{%Lucas17,
Lertvittayakumjorn_20} or other human-in-the-loop methods %\cite{Schramowski_20} 
(see \cite{Wu_22} for a survey).
%In these works, interaction amounts to  human feedback, whereas 
Differently from these works, we capture \emph{two-way} interactions.
%\todo{ contestability?   possibly mention ...} \todo{\cite{Miller_23X}?}

Some works advocate \emph{interactivity} in XAI~\cite{guilherme}, but do not make concrete suggestions on how to support it.
Other works advocate %the use of 
dialogues for XAI \cite{Lakkaraju_22X}, but it is unclear how these can be generated%to fully support interactivity
. We contribute to grounding the problem of generating interactive explanations by %giving 
a computational framework % and implementing it 
implemented 
in a simulated environment. 

%\todo{\cite{Wachsmuth_17}: comparison of argument evaluation...? NOT SURE NEEDED \AR{agreed, it'll just raise questions}}

\section{Preliminaries}
\label{sec:preliminaries}

%\todo{add some running example? preparing for Fig 1?} \AR{yes, I think we could push the depth of explanations. do we want to have it in the XAI setting?} \FT{yes, it would be nice} \AR{I think it needs to start in the next section}

A %bipolar argumentation framework (BAF) 
\A{BAF} \cite{Cayrol:05} is a triple
$\langle \Args, \Atts, \Supps \rangle$ such that $\Args$ is a finite set (whose elements are \emph{arguments}), $\Atts \subseteq \Args \times \Args$ (called the \emph{attack} relation) and $\Supps \subseteq \Args \times \Args$
(called the \emph{support} relation), where $\Atts$ and $\Supps$ are disjoint.
A %quantitative bipolar argumentation framework (QBAF) \todo{twicex2}
\A{QBAF} \cite{Baroni_15} is a quadruple
$\langle \Args, \Atts, \Supps, \BS \rangle$ such that $\langle \Args, \Atts, \Supps\rangle$ is a BAF and 
$\BS: \Args \rightarrow \Range$ ascribes \emph{base scores} to arguments; these  are values in some given $\Range$ representing the arguments' intrinsic strengths. 
Given BAF  
$\langle \Args, \Atts, \Supps\rangle $ or QBAF 
$\langle \Args, \Atts, \Supps, \BS \rangle $, for 
any $\arga \in \Args$, we call $\{ \argb \in \Args | (\argb, \arga) \in \Atts \}$  the \emph{attackers} of $\arga$ and $\{ \argb \in \Args | (\argb, \arga) \in \Supps \}$  the \emph{supporters} of $\arga$.

%In this paper, we 
We 
make use of the following notation:
%\begin{itemize}
%    \item 
given BAFs $\BAF\!=\!\langle \Args, \Atts, \Supps \rangle$, $\BAF'\!=\!\langle \Args', \Atts', \Supps' \rangle$, we say that $\BAF \sqsubseteq \BAF'$ \Iff\ $\Args\subseteq \Args'$,
$\Atts\subseteq \Atts'$ and
$\Supps\subseteq \Supps'$; 
also, we use $\BAF' \setminus \BAF$ to denote $\langle \Args' \setminus \Args, \Atts' \setminus \Atts, \Supps' \setminus \Supps \rangle$.
Similarly,
given QBAFs $\QBAF\!=\!\langle \Args, \Atts, \Supps, \BS \rangle $, $\QBAF'\!=\!\langle \Args', \Atts', \Supps', \BS' \rangle$, we say that $\QBAF \sqsubseteq \QBAF'$ \Iff\ $\Args\subseteq \Args'$,
$\Atts\subseteq \Atts'$, 
$\Supps\subseteq \Supps'$ and
$\forall \arga \in \Args\cap \Args'$ (which, by the other conditions, is exactly $\Args$), it holds that $\BS'(\arga)=\BS(\arga)$. Also, we use $\QBAF'\! \setminus \! \QBAF$ to denote $\langle \Args' \setminus \Args, \Atts' \setminus \Atts, \Supps' \setminus \Supps, \BS'' \rangle$, where $\BS''$ is $\BS'$ restricted to the arguments in $\Args' \setminus \Args$.\footnote{Note that $\BAF'\! \setminus \! \BAF$, $\QBAF' \! \setminus  \! \QBAF$ may not be BAFs, QBAFs, \resp, as  they may include    no arguments but non-empty attack/support relations.} %\todo{``we have $B'\setminus B$ defined in a way that seems to allow $B'\setminus B$ to have an empty set of arguments but a non-empty set of attacks/supports.'' You are right that the set of arguments in $B'\setminus B$ may be empty and thus $B'\setminus B$ may not amount to a BAF. We will add a comment but make no changes, as we need the current notion in Defs.5-6, in particular to accommodate the case where no arguments are learnt but relations are. } }}
%\end{itemize}
Given a BAF $\BAF$ and a QBAF $\QBAF=\langle \Args, \Atts, \Supps, \BS \rangle $, with an abuse of notation
we use $\BAF \sqsubseteq \QBAF$ to stand for
$\BAF \sqsubseteq \langle \Args, \Atts, \Supps \rangle$ and $\QBAF \sqsubseteq \BAF$ to stand for
$\langle \Args, \Atts, \Supps \rangle \sqsubseteq \BAF$.
For any BAFs or QBAFs $\mathcal{F}, \mathcal{F'}$,
we say that $\mathcal{F} \!=\! \mathcal{F}'$ \Iff\ $\mathcal{F} \!\sqsubseteq\! \mathcal{F}'$ and 
$\mathcal{F}' \!\sqsubseteq\! \mathcal{F}$, and $\mathcal{F} \!\sqsubset\! \mathcal{F}'$ \Iff\ $\mathcal{F} \!\sqsubseteq\! \mathcal{F}'$ but $\mathcal{F} \!\neq\! \mathcal{F}'$.
%We also deploy similar notations for $\cup$. \todo{AR to get rid - define in SM}

Both BAFs and QBAFs may be equipped with a \emph{gradual semantics} $\SF$, e.g. as in \cite{Baroni_17} for BAFs and as in \cite{Potyka_18} for QBAFs (see \cite{Baroni_19} for an overview), ascribing to arguments a \emph{dialectical strength} from within some given $\Range$ (which, in the case of QBAFs, is typically the same as for base scores): thus, for a given BAF or QBAF $\mathcal{F}$ and argument $\arga$, $\SF(\mathcal{F},\arga) \in \Range$. 
%
%\delete{In this paper, any agent $\AGa$ is equipped with a QBAF and a gradual semantics: the former provides an abstraction of the agent's %knowledge
%reasoning, with the base score representing  \emph{biases} over arguments, and the latter can be seen as an \emph{evaluation method} for arguments. 
%To reflect the use of QBAFs in our multi-agent explanatory setting, we adopt this terminology (of biases and evaluation methods) in the remainder. For illustration, in the setting of Figure~\ref{fig:toy}, %base scores/
%biases  may result from aggregations of votes from reviews for the machine and personal views for the human, and %gradual semantics/
%evaluation methods allow %to compute 
%the computation of the machine/human stance on the recommendation during the interaction %\delete{, as we shall see}
%(as in \cite{Cocarascu_19}).}

Inspired by~\cite{Tarle_22}'s use of (abstract) 
argumentation frameworks \cite{Dung_95} of a restricted  kind (amounting to  trees rooted with a single argument of focus), we use restricted BAFs and QBAFs% as follows.
:

\begin{definition}
\label{def:tree}
 Let $\mathcal{F}$ be a BAF $\langle \Args, \Atts, \Supps \rangle$ or
QBAF $\langle \Args, \Atts, \Supps, \BS \rangle$. 
For any arguments 
$\arga, \argb \in \Args$,
let a \emph{path} from $\arga$ to $\argb$ be defined as %$\argpaths: \Args \times \Args \rightarrow (\Atts \cup \Supps) \times \ldots \times (\Atts \cup \Supps)$ where for any  
$%\argpaths(\arga,\argb) = 
(\argc_0,\argc_1), \ldots, (\argc_{n-1}, \argc_{n})$ for some $n>0$ (referred to as the \emph{length} of the path) where $\argc_0 = \arga$, $\argc_n = \argb$ and, for any $1 \leq i \leq n%\in \{1, \ldots, n \}
$, $(\argc_{i-1}, \argc_{i}) \in \Atts \cup \Supps$.\footnote{Later, we  
%Then, the distance between $\arga$ and $\argb$ is $n$. %$\argdist: \Args \times \Args \rightarrow \mathbb{Z}^+$ where for any $\arga, \argb \in \Args$, $\argdist(\arga,\argb) = | minpath |$ where $minpath$ is the shortest path between $\arga$ and $\argb$.
will use $\argpaths(\arga,\argb)$ to indicate the set of all paths between arguments $\arga$ and $\argb$, leaving the (Q)BAF implicit, and  use $|p|$ %to indicate 
for the length of path $p$. Also, we may see paths as sets of pairs.} 
Then, 
for $\arge \in \Args$,
$\mathcal{F}$ is 
a \emph{BAF/QBAF} (\resp)  \emph{for $\arge$} \Iff\
%\begin{itemize}
    %\item 
    i) $\nexists (\arge,\arga) \in \Atts \cup \Supps$;
    ii) $\forall \arga \in \Args \setminus \{\arge\}$, %$\exists \arga_1, \ldots, \arga_n \in \Args$ for $n > 1$ such that $\arga=\arga_1$, $\arga_n=\arge$, and $\forall %i \in \{1, \ldots, n-1\}
    %1 \leq i \leq n-1$, $(\arga_i, \arga_{i+1}) \in \Atts \cup \Supps$, and
    there is a path from $\arga$ to $\arge$; and 
    iii) %$\nexists \arga_1, \ldots, \arga_n \in \Args$ for $n > 1$ such that $\arga_1=\arga_n$ and $\forall %i \in \{1, \ldots, n-1\}
    %1 \leq i \leq n-1$, $(\arga_i, \arga_{i+1}) \in \Atts \cup \Supps$. 
$\nexists \arga \in \Args$  with a path from $\arga$ to $\arga$.
%\end{itemize}
%$\arge \in \Args$ is the root of the graph formed by the nodes of $\Args$ and the edges of $\Atts \cup \Supps$ where all attacks and supports are directed toward the root $\arge$, i.e. for all $\arga, \argb \in \Args$, if there exists a path from $\arga$ to $\argb$, then that path is a subset of a path from $\arga$ to $\arge$. \todo{make more formal?} \todo{use \cite{Fan_15}}
\end{definition}
Here $\arge$ plays the role of an \emph{explanandum}.\footnote{Other terms to denote the ``focal point'' of BAFs/QBAFs could be used. We use \emph{explanandum} given our focus on the XAI setting.} When interpreting the BAF/QBAF as  a graph (with arguments as nodes and attacks/supports as edges), %the first bullet 
i) amounts to sanctioning that $\arge$ admits no outgoing edges, %the second bullet 
ii) that %there is a path from any  node to $\arge$
$\arge$ is reachable from any other node, and %the third bullet 
iii) that there are no cycles in the graph (and thus, when combining the three %bullets
requirements, the graph is a multi-tree rooted at $\arge$). %\todo{not really, as there could be multiple paths....: multi-tree? set of trees?}
The restrictions in Definition~\ref{def:tree} impose that every argument in a BAF/QBAF for \cut{an argument}
$\arge$ are ``related'' to $\arge$, in the spirit of \cite{Fan_15}. %\delete{(who, however, focus on abstract argumentation and \emph{ABA frameworks},  rather than BAFs/QBAFs, and extension-based, rather than gradual, semantics)}. 

In all illustrations %in the paper 
(and in some of the experiments in §\ref{sec:evaluation}) we use the \emph{DF-QuAD} gradual semantics  \cite{Rago_16} for QBAFs for explananda%(see \cite{Baroni_19} for an application to QBAFs)
. This uses $\mathbb{I}=[0,1]$ and: %relies upon
\begin{itemize}
    \item 
a \emph{strength aggregation function} $\Sigma$ such that
$\Sigma(())\!=\!0$ and, for $v_1,\ldots,v_n \!\in \![0,1]$ ($n \geq 1$),
if $n=1$ then $\Sigma((v_1))=v_1$,
if $n=2$ then $\Sigma((v_1,v_2))=%v_1+(1-v_1)\cdot v_2 = 
v_1 + v_2 - v_1\cdot v_2$,
and if $n>2$ then
$\Sigma((v_1,\ldots,v_n)) = \Sigma (\Sigma((v_1,\ldots, v_{n-1})),v_n)$;
% : %\mathbb{I} [0,1]^* \rightarrow %\mathbb{I} [0,1]$, where for $\mathcal{S} = (v_1,\ldots,v_n) \in %\mathbb{I} [0,1]^*$: if $n=0$, $\Sigma(S) = 0$; %\\
%if $n=1$, $\Sigma(S) = v_1$; %\\
%if $n=2$, $\Sigma(S) = f(v_1, v_2)$; %\\
%if $n>2$, $\Sigma(S) = f(\Sigma(v_1,\ldots,v_{n-1}), v_n)$; %\\
%\end{gather*} 
%and with \emph{base function} $f:%\mathbb{I}\times\mathbb{I}\rightarrow\mathbb{I}
%[0,1]\times [0,1] \rightarrow [0,1]$ defined as:
%\begin{gather*} $f(v_1,v_2)=v_1+(1-v_1)\cdot v_2 = v_1 + v_2 - v_1\cdot v_2$ for $v_1, v_2\in%\mathbb{I} [0,1]$.
%\end{gather*}
\item
a \emph{combination function} $c% :[0,1]\times [0,1]\times  [0,1]\rightarrow [0,1]
$ %combines $v^-$ and $v^+$ with the argument's base score ($v^0$) 
such that, for $v^0,v^-,v^+ \in [0,1]$:
%\begin{gather*} 
if $v^-\geq v^+$ then $c(v^0,v^-,v^+)=v^0-v^0\cdot\mid v^+ - v^-\mid$ and
if $v^-< v^+$, then
$c(v^0,v^-,v^+)=v^0+(1-v^0)\cdot\mid v^+ - v^-\mid$.
\end{itemize}
Then, for $\mathcal{F}\!=\!\langle \Args, \Atts, \Supps, \BS \rangle$ and any 
$\arga \!\in \!\Args$, given
$\Atts(\arga) \!=\! \{ \argb \!\in\! \Args | (\argb, \arga) \!\in\! \Atts \}$ and
$\Supps(\arga) \!=\! \{ \argb \!\in\! \Args | (\argb, \arga) \!\in\! \Supps \}$,
$\SF(\mathcal{F},\arga)=c(\BS(\arga),\Sigma(\SF(\mathcal{F},
\Atts(\arga))),\Sigma(\SF(\mathcal{F},
\Supps(\arga))))$
%\end{gather*} 
where, for any  $S \!\subseteq\! \Args$%of arguments \HL{in a framework $\mathcal{F}$} FT: we have already given \mathcal{F} up front
,  $\SF(\mathcal{F},S)\!=\!(\SF(\mathcal{F},\arga_1),\ldots,\SF(\mathcal{F},\arga_k))$ for $(\arga_1,\ldots,\arga_k)$, an arbitrary permutation of %\cut{the elements in} 
$S$. %\todo{this needs how $\SF$ is used on a set} DONE?
%\todo{good spot but the framework needs to be first as I've corrected it. also I think when the strength takes a set of arguments it needed my addition in turquoise as it's a bit messy otherwise?}

\section{Argumentative Exchanges (AXs)}
\label{sec:exchanges}

%\todo{``What is the dialectical strength of an argument? What does it mean to have "biases" over arguments?’’ Basic descriptions of dialectical strength and bias are in Section3. We will add intuitions for interactive explanations in Sections 3 and 4 (in a nutshell, dialectical strength gives a measure of agents’ stances and biases represent their view on the “quality” of arguments before other arguments are considered). }

We define AXs as a general framework in which \emph{agents} %to 
argue
%exchange arguments 
with the goal of conflict resolution.  
%\delete{We use desiderata of argumentative agent protocols, introduced by \cite{McBurney_02} for domains such as persuasion, negotiation and inquiry, as guiding principles, some of which we will elaborate on.}
%\todo{I would avoid vague statements such as this:  which ones do we not consider?  are we really using them? very vague mentions later...either we give them as proposetrties or not say much really} 
%\delete{While the examples here and the sections which follow are placed in the specific setting of XAI, the main content of this section is deliberately kept general, since we believe (as future work) our approach could be applied in other domains such as those mentioned.}
The conflicts may arise when agents  hold different \emph{stances} on explananda. %\delete{(in line with \emph{diversity of individual purposes} \cite{McBurney_02})}
To model these settings, we rely upon QBAFs for explananda as abstractions of agents' %inner knowledge and workings
internals%\delete{; then agents' stances are determined by their own biases ($\BS$ in their  QBAFs) and evaluation methods (%using each agent's own $\SF$)}
. 
%
%We assume that 
Specifically, we assume that each agent $\AGa$ is equipped with a QBAF and a gradual semantics ($\SF$): the former provides an abstraction of the agent's knowledge/reasoning, with the base score ($\BS$) representing  \emph{biases} over arguments; the latter can be seen as an \emph{evaluation method} for arguments. 
To reflect the use of QBAFs in our multi-agent explanatory setting, we adopt this terminology (of biases and evaluation methods) in the remainder.
Intuitively, biases and evaluations  represent agents' views on the quality of arguments before and after, \resp, other arguments are considered.
For illustration, in the setting of Figure~\ref{fig:toy}, %base scores/
biases  may result from aggregations of votes from reviews for the machine and from personal views for the human, and %gradual semantics/
evaluation methods allow %to compute 
the computation of the machine/human stance on the recommendation during the interaction %\delete{, as we shall see}
(as in \cite{Cocarascu_19}).
Agents may choose their  own \emph{evaluation range} for measuring  biases/evaluating  arguments.

\begin{definition}
An \emph{evaluation range} $\Range$ is  a set equipped with a pre-order $\leq$ (where, as usual $x < y$ denotes $x \leq y$ and $y \nleq x$) such that $\Range = \RPos \cup \RNeu \cup \RNeg$
where $\RPos$, $\RNeu$ and $\RNeg$ are disjoint and for any $i \in \RPos$, $j \in \RNeu$ and $k \in \RNeg$, $k < j < i$. 
%\todo{what about elements within EACH OF THE THREE SETS? could they be ordered? incomparable?}
We refer to 
$\RPos$, $\RNeu$ and $\RNeg$,  \resp, as \emph{positive}, \emph{neutral} and \emph{negative evaluations}.
\end{definition}
Thus, an evaluation range discretises the space of possible evaluations into three categories.\footnote{We choose three discrete values only for simplicity.
This may mean \cut{, somewhat counter-intuitively,} that very close values, e.g. 0.49 and 0.51, belong to different categories. 
%\A{noting that this is an imperfect way of partitioning the evaluation ranges. We leave}\delete{leaving} 
We leave to future work the analysis of further value categorisations, e.g. a distinction between strongly and mildly positive values or %the 
\emph{comfort zones} %of an agent, as in 
\cite{Tarle_22}.  
%\todo{``Agents positions/stances are mapped to a 3-level qualitative scale: positive, neutral, or negative. An argumentative exchange (AX) is said to be resolved when agents reach the same qualitative level. But note that it means that an AX can be resolved with agents holding opinions (in terms of their values) much further away than in some unresolved case (eg. 0.49/0.51 vs. 0.01/0.49). This means that there is a compelling semantics attached to these values.’’ We agree and will add your point to footnote3. }
}

\begin{definition}
\label{def:agent}
A \emph{private triple} for an agent $\AGa$ and an explanandum $\arge$
is $(\RangeA,\QBAFa,\SFa)$ where: 
    \begin{itemize}
        \item  $\RangeA = \RPosA \cup \RNegA \cup \RNeuA$ %, referred to as 
        is an evaluation range, referred to as $\AGa$'s \emph{private 
        evaluation range};
        \item  $\QBAFa = \langle \ArgsA, \AttsA, \SuppsA, \BSa\rangle$ is a QBAF for $\arge$, referred to as $\AGa$'s \emph{private QBAF}, such that $\forall \arga \in \ArgsA$,  $\BSa(\arga) \in \RangeA$;
        \item  $\SFa$  is an evaluation method, referred to as $\AGa$'s \emph{private 
        evaluation method}, such that, for any QBAF $\QBAF \!=\! \langle \Args, \Atts, \Supps, \BS \rangle$ (%with
        $\BS:\! \Args\!\! \rightarrow \! \RangeA$) and,  for any $\arga\!\in \!\Args$,
         $\SFa(\QBAF,\arga) \!\in \!\RangeA$.
         %\todo{would this be more convenient/tidy if it was relation $\rightarrow t \times$ agent? FT: unclear what you mean}
    \end{itemize}
\end{definition}

Agents' stances on explananda %result from the evaluation thereof using the agents' private evaluation methods. Thus, they 
are determined by their private biases %($\BS$ in their  QBAFs) 
and evaluation methods% (using each agent's own $\SF$)
.

\begin{definition}
\label{def:stance}
Let  $(\RangeA,\QBAFa,\SFa)$ be  a private triple for agent  $\AGa$  (for some $\arge$),  with $\QBAFa = \langle \ArgsA, \AttsA, \SuppsA, \BSa\rangle$. Then, for $\arga \in \ArgsA$,
$\AGa$'s \emph{stance on $\arga$} is defined, for $* \in \{ -, 0, + \}$%\todo{where $-<0<+$}
, 
as $\StanceA(\QBAFa,\arga) = *$ iff $\SFa(\QBAFa, \arga) \in \RangeA^*$.
\end{definition}
Note that $\arga$ may be the explanandum  %$\arge$ 
or any other argument (namely, an agent may hold a stance on any arguments in its private QBAF). Also, abusing notation, we will lift the pre-order over elements of $\mathbb{I}$ to stances, whereby $-<0<+$.

In general,  agents %in a given \emph{multi-agent system} 
may hold different evaluation ranges, biases, QBAFs and evaluation methods, but the 
 discretisation of the agents' evaluation ranges to obtain their stances allows for direct comparison across agents.

\begin{figure}[t]
\centering
    \includegraphics[width=1.04\linewidth]{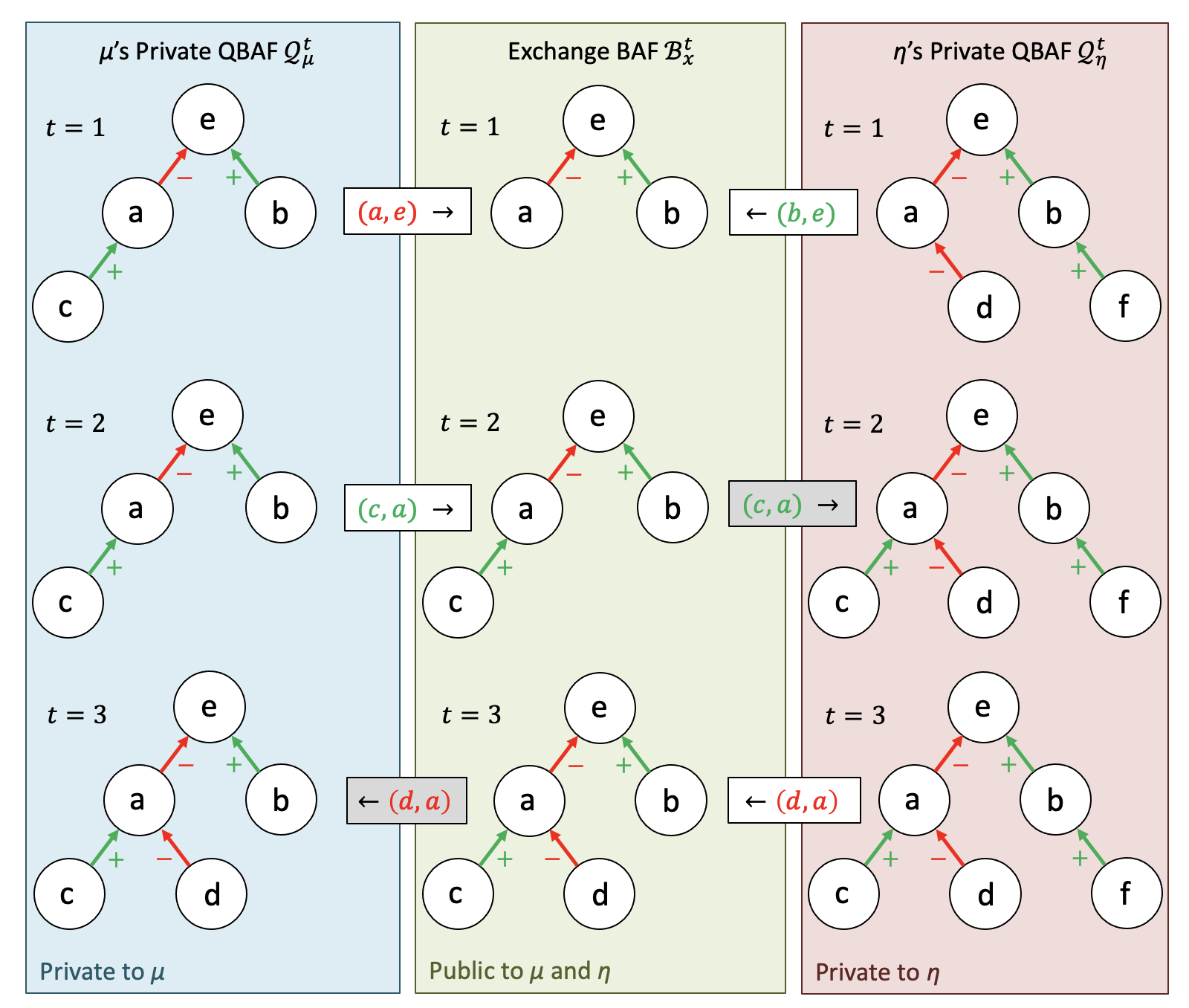}
    \protect\caption{%A graphical view of an 
    AX for %\AR{the} 
    explanandum $\arge$ amongst agents $\Agents=\{\AGm, \AGh\}$, with the exchange BAF representing an interactive explanation. White (grey) boxes represent contributions (learnt relations, \resp).
    %\todo{``From the title, I expected to have a clear idea of what could be an explanation in an interactive setting, but I may have missed something.  For instance, what is considered an explanation in Figure 2 at time 3? The common public tree? Or the private one for the human agent?’’ Your intuition about the “common public tree” as explanations for humans is correct. We will emphasise this, specifically for Figs1 and 3.}
    }
    \label{fig:structure}
\end{figure}

\begin{example}\label{ex:agents}
Consider %two agents $\AGa$ and $\AGb$
%a machine agent $\AGm$ explaining the movie review $\arge$ to a human agent $\AGh$,
a machine agent $\AGm$ and a human agent $\AGh$
equipped \resp\ with private triples $(\RangeM,\QBAFm,\SFm)$ and $(\RangeH,\QBAFh,\SFh)$, with $\QBAFm\!=\!\langle \ArgsM, \AttsM, \SuppsM, \BSm\rangle$, $\QBAFh=\langle \ArgsH, \AttsH, \SuppsH, \BSh \rangle$
QBAFs for the same %explanandum 
$\arge$  and: 
\begin{itemize}
    \item $\RNegM = \RNegH = [0,0.5)$, $\RNeuM =\RNeuH = \{0.5\}$ and $\RPosM = \RPosH = (0.5,1]$; 
    %
    %\item $\RNegA = [-1,0)$, $\RNeuA = \{0\}$ and $\RPosA = (0,1]$; 
    %
    %\item $\RNegB = [0,0.5)$, $\RNeuB = \{0.5\}$ and $\RPosB = (0.5,1]$;
    %
    \item $\ArgsM = \{ \arge, \arga, \argb, \argc \}$, 
    $\AttsM = \{(\arga, \arge)\}$, 
    $\SuppsM = \{ (\argb, \arge), (\argc, \arga)\}$ (represented graphically on the top left 
    of Figure \ref{fig:structure}) and 
    $\BSm(\arge) = 0.7$,
    $\BSm(\arga) = 0.8$,
    $\BSm(\argb) = 0.4$, and
    $\BSm(\argc) = 0.6$;
    \item $\ArgsH \!\!=\!\! \{ \arge, \arga, \argb, \argd, \argf \}$,$\AttsH \!\!=\!\! \{(\arga, \arge), (\argd, \arga)\}$,$\SuppsH \!\!=\!\! \{(\argb, \arge), (\argf, \argb)\}$ (represented %graphically 
    on the top right 
    of Figure \ref{fig:structure}) and 
    $\BSh(\arge) = 0.6$,
    $\BSh(\arga) = 0.8$,
    $\BSh(\argb) = 0.2$,
    $\BSh(\argd) = 0.6$ and
    $\BSh(\argf) = 0.5$.
    %
    %\item $\SFa$ is the max-based semantics in \cite{Amgoud_08}, giving 
    %$\SFa(\QBAFa,\arge) = -0.25$,
    %$\SFa(\QBAFa,\arga) = 0.5$,
    %$\SFa(\QBAFa,\argb) = 0$, and
    %$\SFa(\QBAFa,\argc) = 0$; 
    %
    \item $\SFm$ is the DF-QuAD semantics, giving 
    $\SFm(\QBAFm,\arge) = 0.336$,
    $\SFm(\QBAFm,\arga) \!=\! 0.92$,
    $\SFm(\QBAFm,\argb) \!=\! 0.4$, and
    $\SFm(\QBAFm,\argc) \!=\! 0.6$; 
    \item $\SFh$ is also %the DF-QuAD semantics, 
    %in \cite{Rago_16}, 
     DF-QuAD% too
     , giving 
    $\SFh\!(\!\QBAFh,\arge\!) \!\!=\!\! 0.712$,$\SFh(\QBAFh,\arga) \!=\! 0.32$,
    $\SFh(\QBAFh,\argb) \!=\! 0.6$,
    $\SFh(\QBAFh,\argd) \!=\! 0.6$,
    $\SFh(\QBAFh,\argf) \!=\! 0.5$.
\end{itemize}
%\AR{Here, $\argc$ and $\argd$ may represent different actors performing poorly or well, \resp, while $\argf$ may represent a certain scene being shot beautifully.}
Thus, the machine and human agents hold entirely different views on the arguments (based on their private %\delete{evaluation ranges, QBAFs and evaluation methods} 
QBAFs and their evaluations% of arguments
)
and 
$\StanceM(\QBAFm,\arge) = -$ while $\StanceH(\QBAFh,\arge) = +$. Thus, there is a conflict between the agents' stances 
 on $\arge$.
\end{example}
We define AXs so that they can provide the ground to identify and resolve 
%\AR{facilitate the identification and resolution of} 
conflicts in stance amongst agents.

 \begin{definition}
\label{def:exchange}
An \emph{Argumentative eXchange (AX) for an explanandum $\arge$ amongst %$k$ 
agents} %\delete{$\AGa(1), \ldots \AGa(k)$ (for $k \geq 2$)} 
$\Agents$ (where $|\Agents| \geq 2$)
is a tuple
$\langle %\delete{\BAFu,}   
\BAFi^0, \ldots, \BAFi^{n},   
\Agents^0, \ldots,\Agents^n, 
\SpeakerM \rangle$ where $n>0$ and:
%\AR{$$\langle \BAFu,    \BAFi^0, \ldots, \BAFi^{n},    \Agents^0, \ldots,\Agents^n, \SpeakerM \rangle$$}
%begins at timestep $t=0$ with
%where $n > 0$ and:

\begin{itemize}
    %\item \delete{$\BAFu$=$\langle \ArgsU, \AttsU, \SuppsU \rangle$ is a BAF for $\arge$, called the \emph{\universal\ BAF};}
    \item for every timestep $0 \leq t \leq n$:
     \begin{itemize}
\item %for every timestep $0 \leq t \leq n$, 
     $\BAFi^t = \langle \ArgsI^t, \AttsI^t, \SuppsI^t \rangle$ is a BAF for $\arge$, called the \emph{\public\ 
      BAF at $t$},  such that
     %\delete{$\BAFi^t \sqsubseteq \BAFu$
     %i.e. $\forall \arga, \argb \in \ArgsI^t$, if $(\arga, \argb) \in \AttsI^t$ then $(\arga, \argb) \in \Atts_u$ and if $(\arga, \argb) \in \SuppsI^t$ then $(\arga, \argb) \in \Supps_u$
     %; also,} 
     $\ArgsI^0 = \{ \arge \}$, $\AttsI^0 = \SuppsI^0 = \emptyset$ and for
     $t > 0$,
     %$\ArgsI^t \supseteq \ArgsI^{t-1}$, $\AttsI^t \supseteq \AttsI^{t-1}$ and $\SuppsI^t \supseteq \SuppsI^{t-1}$
     $\BAFi^{t-1} %\sqsubseteq 
     %\FT{\sqsubset} \BAFi^{t}$; \todo{NEEDS CHANGING FOR PASS?}
     \sqsubseteq \BAFi^{t}$;
    
    \item %for every timestep $0 \leq t \leq n$, 
    $\Agents^t$ is a set of 
    %\delete{$k$} 
    %elements
    private triples $(\RangeA^t,\QBAFa^t,\SFa^t)$ for $\arge$, one for each  agent %\delete{$\AGa \in \{\AGa(1), \ldots, \AGa(k)\}$} 
    $\AGa \in \Agents$,  
    where%$\QBAFa^t=\langle \ArgsA^t, \AttsA^t, \SuppsA^t, \BSa^t \rangle$,   $\langle \ArgsA^{t}, \AttsA^{t}, \SuppsA^{t} \rangle \sqsubseteq \BAFu$, 
    , for $t >0$, $\RangeA^{t-1}=\RangeA^t$, $\SFa^{t-1}=\SFa^t$,
                $\QBAFa^{t-1} \sqsubseteq \QBAFa^{t}$ and
                 %($\ArgsA^t\setminus \ArgsA^{t-1}) \subseteq (\ArgsI^t \setminus \ArgsI^{t-1})$, $(\AttsA^{t}\setminus\AttsA^{t-1}) = (\AttsI^t \setminus \AttsI^{t-1}) \cap (\ArgsA^t \times \ArgsA^t$) and  $(\SuppsA^{t}\setminus\SuppsA^{t-1}) = (\SuppsI^t \setminus \SuppsI^{t-1}) \cap (\ArgsA^t \times \ArgsA^t$); 
                $\QBAFa^{t} \setminus \QBAFa^{t-1} \sqsubseteq 
                \BAFi^t \setminus \BAFi^{t-1}$; 
               % \AR{$\BAFi^t \setminus \BAFi^{t-1} \sqsubseteq \QBAFa^{t}$}; \todo{FT is not sure about deleting the first bit: we need to say where the learnt arguments are coming from - namely the new bits in the exchange - else agents could completely invent new stuff; the added condition is too strong in my mind, and should be a choice of the agents -- in the policy, not in this definition (this is a definition that looks at what happens, and enforces behaviour minimally} \AR{okay I'm happy to enforce something like what I added in later sections}

     \end{itemize}
     
    \item $\SpeakerM$, referred to as the \emph{contributor mapping}, is a mapping such that, for every
    $(\arga,\argb) \in \AttsI^{n} \cup \SuppsI^{n}$: $\SpeakerM((\arga,\argb))=(\AGa,t)$ with $0 < t \leq n$ and %\delete{$\AGa \in \{\AGa(1), \ldots, \AGa(k)\}$} 
    $\AGa \in \Agents$. %\todo{I think we need to say that the contributions come from the private QBAFs? NO - CHOICE - ADD LATER also that they can't be repeated? NO, BECAUSE MAPPING - SO THEY CANNOT BE REPEATED }
    \end{itemize}
    \end{definition}
 %
     %\delete{Intuitively, the \emph{\universal\ BAF} serves as a lingua franca amongst agents: we enforce that all arguments, attacks and supports of the \emph{\public\ BAF} and of the agents' \emph{private QBAF} and \emph{private view of \public\ QBAF} are drawn from the \universal\ BAF , so that they agree on whether arguments attack/support arguments,  once they are presented with them.} 
    %\delete{Argumentative exchanges are designed to treat all agents equally (in line with \emph{fairness} \cite{McBurney_02}), with}
    
    Agents' private  triples thus %capture their private knowledge and biases. These 
    change over time during AXs, with
    %We impose 
    several restrictions% on agents during argumentative exchanges
    , in particular that  %they 
    agents do not change their evaluation ranges and methods, and that their biases on known arguments propagate across timesteps (but note that Definition~\ref{def:exchange} does not impose any restriction on the agents' private triples at timestep 0, other than they are all for $\arge$). 
    %\AR{Note also that agents can only contribute attacks or supports which are new to the exchange and have not already been contributed. \todo{this is not the case} \todo{I think we wrote said this because the contributor mapping has one individual timestep?}}
    %\todo{WE DO NOT - HERE at least - MAY BE LATER We also impose that any attack/support  which is added to the \public\ BAF by an agent is \emph{learnt} by the other agents, i.e. the argument is added to the other agents private QBAFs (though its status and thus its impact therein depends on the agents).}
The restriction that all BAFs/QBAFs in exchanges are \emph{for the explanandum}, means that all contributed attacks and supports (and underlying arguments) are ``relevant'' to the explanandum. 
Implicitly, while we do not assume that agents share arguments, we %also
assume  that %agents 
they
agree on %the argumentative structure as a 
an underpinning `lingua franca', so that, in particular, if two agents are both aware of two arguments, they must agree on any attack or support between them, e.g. it cannot be that an argument attacks another argument for one agent but not for another (%this assumption  is 
in line with other works, e.g. %Assumption 2 in \cite{Tarle_22},  cultures in \cite{Raymond_20} and consensus over attacks in \cite{Gao_16}
\cite{Tarle_22,Raymond_20%,Gao_16
}).
We leave to future work the study of the impact of this assumption in practice when AXs take place between machines and humans.
%\delete{\A{Note that agents may implicitly express disagreement on contributed attacks or supports via their biases on the corresponding attacking or supporting, \resp, arguments, which we will discuss shortly.}}
%\todo{`` In the setting, do we consider that agents share the same set of arguments?  In other words, is $a \in X_\mu$ the same as $a \in X_\eta$ in Figure 2?  which seems a big assumption in an interaction between a human agent and an artificial agent. ’’ We do not assume that “agents share the same sets of arguments”, but we do assume, in line with, e.g., (de Tarlé Et Al 2022 and Raymond Et Al. 2020), that agents have a lingua franca (see discussion in the paragraph after Def5). We will expand that discussion, mentioning limitations and future work. }
    
    During AXs,  
    agents contribute elements of the attack/support relations, thus ``arguing'' with one another.  These elements cannot be withdrawn once contributed, in line with human practices, and, by definition of $\SpeakerM$, each element is said once by exactly one agent, thus avoiding repetitions that may occur in human exchanges. 
    %\delete{(in line with \emph{discouragement of disruption} \cite{McBurney_02})}. 
    %\FT{don't we need something like this speaker mapping to say who said what? else how do we go about analysing the impact of what agents said? Note that it needs to be defined on the relations, not the arguments, as the same argument may be in relations with different things and we do not want to force agents to say everything at once...} \AR{I'm not sure we need to know who said what? But yes maybe}
    Note that we do not require that all agents contribute something to an AX, namely
    it may be that $\{\AGa | \SpeakerM(%n,
    (a,b))=(\AGa,t), (a,b) \in \AttsI^n\cup \SuppsI^n\} \subset \Agents$% \delete{$\{\AGa(1), \ldots, \AGa(k)\}$}
    . %i.e. agents may keep quiet during an explanatory exchange
    %; however, necessarily $C \neq \emptyset$, since $n >0$ and $\BAFi^{t-1} \sqsubset \BAFi^{t}$ for every timestep $0 < t \leq n$   \FT{IF WE REPLACE $\BAFi^{t-1}     \sqsubset \BAFi^{t}$ WITH $\BAFi^{t-1}      \sqsubseteq \BAFi^{t}$ THIS COMMENT NEEDS TO BE CHANGED, as $C$ may be empty i.e. WE COULD HAVE THAT NOBODY CONTRIBUTES ANYTHING}. Also, by virtue of the latter restriction, at each timestep  other than the last some agents contribute some    attack or support to the exchange. \todo{FT: THIS MEANS THAT AGENTS CANNOT PASS; IF WE WANT TO ALLOW PASSES, THEN WE NEED TO REPLACE $\BAFi^{t-1} \sqsubset \BAFi^{t}$ WITH $\BAFi^{t-1}     \sqsubseteq \BAFi^{t}$}.
     Also, we do not force agents to contribute something at every timestep (i.e. it may be the case that $\BAFi^{t-1} =\BAFi^{t}$ at some timestep $t$). Further, while the definition of AX does not impose that agents are truthful%, e.g. they could contribute attacks or supports not in their private QBAFs. However,} 
     , from now on we will focus on truthful agents only and thus assume that if $(\arga,\argb) \in \AttsI^{n}$ or $\SuppsI^{n}$ and $\SpeakerM((\arga,\argb))=(\AGa,t)$ (with $0 < t \leq n$% and $\AGa \in \Agents$
     ),  then, \resp, $(\arga,\argb) \in \AttsA^{t-1}$ or $\SuppsA^{t-1}$%,  (thus when an agent shares an attack or support it is so in its private QBAF at the time)
     .

In the remainder, 
 we %will often 
 may denote the private triple $(\RangeA^t,\QBAFa^t,\SFa^t)$ %of an agent $\AGa$ %at timestep $t$ %simply 
 as $\AGa^t$ and the stance $\StanceA(\QBAFa^t,%\arge
 \arga)$  %simply 
 as $\StanceA^t(%\arge
 \arga)$.

\begin{example}\label{ex:exchange} 
%\FT{(Example~\ref{ex:agents} continued)}
    An AX 
    %for %the explanandum 
    %$\arge$ 
    amongst %the two agents %from Example \ref{ex:agents} 
    $\{\AGm,\AGh\}$ from Example~\ref{ex:agents}  may be  $\langle %\BAFu, 
    \BAFi^0, \BAFi^{1}, \Agents^0, \Agents^1, \SpeakerM \rangle$  such that (%partially visualised in the 
    see top row of Figure~\ref{fig:structure}):
    \begin{itemize}
        %\item \delete{$\BAFu \!\!=\!\! \langle \{ \arge, \arga, \argb, \argc, \argd, \argf \}, \{(\arga,\arge), (\argd,\arga) \}, \{ (\argb,\arge), (\argc,\arga), (\argf,\argb) \} \rangle$;}
        %
        \item $\BAFi^0=\langle \{\arge\},\emptyset,\emptyset\rangle$,  $\BAFi^1 = \langle \{ \arge, \arga, \argb\}, \{(\arga,\arge) \}, \{(\argb,\arge) \} \rangle$;
        %$\BAFi^2 = \langle \{ \arge, \arga, \argb \}, \{(\arga,\arge) \}, \{ (\argb,\arge) \} \rangle$ 
        \item $\AGm^0 = \AGm^1 %=\AGa^2
        $ and $\AGh^0 = \AGh^1 %=\AGa^2
        $ are as in Example~\ref{ex:agents};
        \item 
      $\SpeakerM((\arga, \arge)) = (\AGm,1)$ and $\SpeakerM((\argb, \arge)) = (\AGh,1)$, 
        i.e. $\AGm$ and $\AGh$ contribute, \resp, attack $(\arga,\arge)$  and support $(\argb,\arge)$ %in $\BAFi^1$ 
        at \cut{timestep} 
        1. 
                \end{itemize}
       Here, each agent contributes a single attack or support justifying their stances (negative for $\AGm$ and positive for $\AGh$), but, in general, 
       multiple agents may contribute multiple relations at single timesteps, or no relations at all.
           \end{example}

%\delete{Note that here and in the examples which follow, agents contribute a single attack/support at each timestep: in general, they could contribute multiple attacks/supports at the same timestep, e.g. $\AGa$ could have contributed both attack $(\arga,\arge)$  and support $(\argc,\arga)$ in one go.  Also, in general, there may be timesteps at which nothing is contributed, e.g. it may be the case that $\BAFi^1=\BAFi^0$. Finally, in the example only one agent contributes to the exchange BAF at timestep 1: in general, they may contribute attacks/supports in parallel, at the same timestep.}

%\delete{, though turn-making functions may be designed to constrain this freedom depending on the application of the explanatory exchange}. %\todo{FT: turn-making does not belong here in my mind - it is - again - about behaviour - part of agents. If mentioned at all we should say: we will see that turn-making can impose....possibly}

%\delete{As a result of argumentative exchanges,} 
When contributed attacks/supports 
%\FT{relations} 
are new to agents,
%agents 
they may 
\emph{(rote) learn} them, %attacks/supports
with the arguments %thereby introduced
they introduce.

\begin{definition}
Let $\langle %\BAFu, 
\BAFi^0, \ldots, \BAFi^{n},  \Agents^0,\ldots,\Agents^n, \SpeakerM \rangle$ be an AX  amongst agents 
$\Agents$. Then, for any  $\AGa \in \Agents$, with private tuples $(\RangeA^0,\QBAFa^0,\SFa^0)$, \ldots, $(\RangeA^n,\QBAFa^n,\SFa^n)$:
\begin{itemize}
    \item for any $0 < t \leq n$, for
    $\langle \ArgsA, \AttsA, \SuppsA, \BSa\rangle = \QBAFa^t \setminus \QBAFa^{t-1}$,  
    %\\
    $\ArgsA, \AttsA$, and $\SuppsA$ are, \resp, the \emph{learnt arguments, attacks, and supports} by $\AGa$ \emph{at timestep} $t$;
    \item for
    $\langle \ArgsA, \AttsA, \SuppsA, \BSa\rangle = \QBAFa^n \setminus \QBAFa^{0}$, 
    %\\ 
    $\ArgsA, \AttsA, $ and $\SuppsA$ are, \resp, the \emph{learnt arguments, attacks, and supports} by $\AGa$% during the AX
    .
\end{itemize} 
\end{definition}
Note that,
by definition of AXs, all learnt arguments, attacks and supports are from the (%relevant applicable 
corresponding) exchange BAFs.  
Note also that in Example~\ref{ex:exchange} neither agent learns anything\cut{ in the AX therein
}, as indeed each contributed an attack/support
        already present in the other agent's private QBAF.  %As a result, that exchange has no effect on either agent's stance on the explanandum. 
        \cut{Next we illustrate
        agents' learning
        during AXs% can affect         the agents'stances
        .}

\begin{example}\label{ex:learning}
Let us extend the AX from Example~\ref{ex:exchange} to obtain 
$\langle %\BAFu, 
\BAFi^0, \BAFi^1, \BAFi^{2}, \Agents^0, \Agents^1, \Agents^2, \SpeakerM \rangle$ %(partially visualised in the top two rows of Figure \ref{fig:structure}) 
such that (see the top two rows of Figure \ref{fig:structure}):
\begin{itemize}
        \item $\BAFi^2 = \langle \{ \arge, \arga, \argb, \argc \}, \{(\arga,\arge) \}, \{ (\argb,\arge), (\argc, \arga) \} \rangle$ 
        \item $\AGm^2 = \AGm^1 =\AGm^0
        $; $\AGh^2$ is such that $\QBAFh^2 \sqsupset \QBAFh^1$ where $\ArgsH^2 = \ArgsH^1 \cup \{ \argc \}$, $\AttsH^2 = \AttsH^1$, $\SuppsH^2 = \SuppsH^1 \cup \{ (\argc,\arga) \}$ and $\BSh^2(\argc) = 0.2$;
        \item $\SpeakerM((\argc, \arga)) = (\AGm,2)$, namely $\AGm$ contributes the support $(\argc,\arga)$ in $\BAFi^2$ at timestep 2. 
        \end{itemize}
%\FT{Note that, if $\AGb$ had instead  assigned a lower bias to  the learnt argument $\argc$ at timestep 2, this argument (and the learnt support) would have had a more marginal effect on the evaluation of $\arge$ in $\AGb$'s private QBAF, possibly not enough to change the stance. For illustration, with $\BSb^2(\argc) = 0.2$, then, the argument evaluations are $\SFb^2(\QBAFb^2,\arge) = 0.7$,         $\SFb^2(\QBAFb^2,\arga) = 0.35$ and         $\SFb^2(\QBAFb^2,\argc) = 0.2$,  and thus $\StanceB^2(\arge) = +$ still, as in Examples \ref{ex:agents} and \ref{ex:exchange}. }
\end{example}

We will impose that any attack/support  which is added to the \public\ BAF by an agent is \emph{learnt} by the other agents, alongside any new arguments %in
introduced by those attacks/supports.
Thus,
for any %agent 
$\AGa \!\!\in \!\!\Agents$ %, for any timestep 
and $t\!>\!0$, $\BAFi^t \!\setminus\! \BAFi^{t-1} \!\sqsubseteq \!\QBAFa^{t}$.
However, \cut{note that} agents have a choice on their biases on the learnt arguments.
These biases could reflect, e.g., their trust on the %agent contributing the arguments 
contributing agents or the intrinsic quality of the arguments. 
Depending on these biases,  learnt attacks and supports  may influence the agents' stances on the explanandum differently. For %example
illustration, in %the earlier example
Example~\ref{ex:learning}, $\AGh$ opted for a low bias (0.2% in [0.1]
) on the learnt argument $\argc$, %which could be, for example, due to the human not believing the machine's argument regarding the actor's poor performance to be valid,
resulting in $\SFh^2(\QBAFh^2,\arge) \!=\! 0.648$,         $\SFh^2(\QBAFh^2,\arga) \!=\! 0.32$ and         $\SFh^2(\QBAFh^2,\argc) \!=\! 0.2$,  and thus $\StanceH^2(\arge) \!= \! +$ still, as in Examples~\ref{ex:agents}, ~\ref{ex:exchange}. If, instead, $\AGh$ had chosen a high bias on the new argument, e.g. $\BSh^2(\argc) = 1$, this would have given $\SFh^2(\QBAFh^2,\arge) = 0.432$,         $\SFh^2(\QBAFh^2,\arga) = 0.88$ and         $\SFh^2(\QBAFh^2,\argc) = 1$, leading to $\StanceH^2(\arge) = -$, thus resolving the conflict\cut{ in the agents' stances}.
This illustration shows that learnt attacks, supports and arguments may fill gaps% in the agents' knowledge
, change agents' stances on explananda %\delete{(in line with \emph{enablement of self-transformation} \cite{McBurney_02})} 
and pave the way to the resolution of conflicts% amongst agents
.

\begin{definition}\label{def:resolution}
    Let $E=\langle %\BAFu,
    \BAFi^0, \ldots, \BAFi^{n},  \Agents^0,\ldots,\Agents^n, \SpeakerM \rangle$ be an AX for explanandum $\arge$ amongst agents %\delete{$\AGa(1), \ldots \AGa(k)$} 
    $\Agents$
    such that  $\StanceA^0(\arge) \neq \StanceB^0(\arge)$ for some $\AGa, \AGb \in \Agents$% \delete{$\{\AGa(1), \ldots \AGa(k)\}$}
    . Then: 
    
    \begin{itemize}
        \item $E$ is 
        \emph{resolved at timestep $t$}, for some $0< t \leq n$, \Iff\ $\forall \AGa, \AGb \in \Agents$, $\StanceA^{t}(\arge) = \StanceB^{t}(\arge)$, and is \emph{unresolved at $t$} otherwise;
        \item $E$ is \emph{resolved} \Iff\ it is resolved at timestep $n$ and it is unresolved at every timestep $0 \leq t <n$; %\todo{it makes sense to me to say that you stop as soon as you resolve, is it? this is a choice, and does not need to be in the definition, it could be an assumption we make oiutside } 
        \item $E$ is \emph{unresolved} \Iff\ it is unresolved at every %timestep 
        $0 %\leq 
        < t \leq n$. %\todo{do we need this notion?}
               \end{itemize}
    \end{definition}

Thus, a resolved AX starts with  a conflict between at least two agents and ends %as soon as 
when no conflicts amongst any of the agents exist or when the agents give up on trying to find a resolution. 
Practically,
AXs may be governed by a \emph{turn-making function} $\turn\!:\! \mathbb{Z}^+ \!\rightarrow \! 2^\Agents$ %imposing that at most one agent contributes
determining which agents should contribute %arguments NO
at any timestep% (we have used an interleaving turn-making function implicitly in all examples so far)
. 
Then, an AX may be deemed to be unresolved if, for example, all agents decide, when their turn comes, against contributing% since the\FT{ir} last contribution% was made
. 
%\delete{no arguments are contributed for two consecutive timesteps, i.e. at $n-1$ and $n$.} \todo{AR: always two? not sure about this} \todo{FT: you are right, it needs to be the number of agents...}
%\todo{AR: I think I would just remove this sentence; we could state it as a possibility but it's up to the designers of the arg exchange really?}

%Note that other notions of resolution may be useful in some settings, e.g. for inquiry agents may want to continue an exchange even after they have resolved all conflicts, as unearthing further conflicts may be useful in the search of truth. 
%
Note that, while agents' biases and evaluations are kept private during AXs, we %implicitly 
assume that agents share their stances on the explanandum% (at every timestep or, equivalently, when these stances change)
, so that they are aware of whether the underpinning conflicts are resolved. 
Agents' stances, when ascertaining whether an AX is resolved, are %solely 
evaluated internally by the agents, %given their private tuples, 
without any %explicit 
shared evaluation of the exchange BAF, unlike, e.g. in \cite{Tarle_22} and other works we reviewed in §\ref{sec:related}.
%\delete{\A{Note also that we define AXs with no distinction between types of agents at this point, but allow for restrictions for more asymmetric cases such as XAI where a machine is explaining to a human, as we will see in §\ref{sec:xai}.}}

Finally, note that our definition of AX is neutral as to the role of agents therein, allowing in particular that agents have symmetrical roles (which is natural, e.g., for inquiry) as well as asymmetrical roles (which is natural, e.g., when machines explain to  humans: this will be our focus from §\ref{sec:xai}).

\section{Explanatory Properties of AXs %for XAI
} 
\label{sec:xai}

%
\iffalse
\todo{something of the below should be in section 4 instead? if at all} \AR{or maybe here?} \FT{It is already somehow in section 4 - it belongs there, as it is a design choice, nothing to do with XAI? }
We are then able to impose that, for any agent $\AGa \in \Agents$ at timestep $t$, $\BAFi^t \setminus \BAFi^{t-1} \sqsubseteq \QBAFa^{t}$ in \AR{AX}s in this setting, i.e. requiring that contributed attacks/supports \emph{must} be incorporated into agents' private QBAFs via learning. 
The acceptance or rejection of the learnt arguments in an agent is modelled with the assignment of a high or low, \resp, bias from the agent.
Another option would have been to define \AR{AX}s without requiring that relations added to the \public\ BAF are incorporated to agents' private QBAFs, but the results are (initially) identical if the argument has no effect on other arguments' evaluations.
Our reasoning is that our approach allows for a learnt argument which has been initially rejected by an agent to be ``revived'' by more reasoning in the form of supporting arguments added at future timesteps, which would not be possible (or would be, at least, less elegant) if learning was based on an argument's inclusion (or not) in the private QBAFs.
\fi

 Here  we focus on 
 %positioning argumentative exchanges 
singling out desirable properties that  AXs may need satisfy to support interactive XAI.
Let us assume as given an AX %. Given the t\^ete-\`a-t\^ete nature of %most explanations in XAI, we limit the exchanges to those 
$E=\langle \BAFi^0,\ldots, \BAFi^{n}, \Agents^0, \ldots, \Agents^n, \SpeakerM \rangle$ for $\arge$ as in Definition~\ref{def:exchange}.
%
%\todo{add arguments as to why these properties are useful for (interactive)  - I have started here and there}
%
%
%\subsection{Properties \delete{of Argumentative Exchanges}}
%
The first three properties impose basic requirements on AXs so that they result in fitting explanations% for XAI landscape where a machine is explaining to a human
.
%They are all satisfied by design, as we will see.

\begin{property} 
\label{prop:connectedness}
$E$ \emph{satisfies}
    \emph{connectedness} 
    \Iff\ 
    for any $0 \leq t\leq n$, if
    $|\ArgsI^t| > 1$ then $\forall \arga \in \ArgsI^t%\FT{\setminus \{\arge\}} ????
    $, $\exists \argb \in \ArgsI^t$ such that $(\arga, \argb) \in \AttsI^t \cup \SuppsI^t$ or $(\argb, \arga) \in \AttsI^t \cup \SuppsI^t$.
\end{property}
Basically, connectedness imposes that there should be no floating arguments and no ``detours'' in the  exchange BAFs, at any stage during the AX.  It is  linked to directional connectedness in \cite{Cyras_22}\cut{, where an argument's strength depends only on the strengths of arguments with a path to it, since we %impose arguments are relevant to the explanandum
focus on (Q)BAFs \emph{for %the explanandum
$\arge$}}. A violation of this property would lead to counter-intuitive (interactive) explanations, with agents seemingly ``off-topic''.

\begin{property} 
\label{prop:acyclicity}
$E$ \emph{satisfies} \emph{acyclicity}
    \Iff\ for any $0 \leq t\leq n$, $\nexists \arga \in \ArgsI^t$ such that $\argpaths(\arga,\arga) \neq \emptyset$.
\end{property}

Acyclicity %\delete{prevents circular reasoning} 
ensures that all reasoning is directed towards the explanandum in AXs. A violation of this property %\delete{would again}
may lead to seemingly non-sensical (interactive) explanations.

\begin{property} 
\label{prop:contributor}
$E$ \emph{satisfies}
    \emph{contributor irrelevance} 
    \Iff\ for any  AX for $\arge$ $\langle {\BAFi^0}', \ldots, {\BAFi^{n}}', {\Agents^0}', \ldots, {\Agents^n}', \SpeakerM' \rangle$, if ${\BAFi^0}' \!= \! \BAFi^0$,  ${\BAFi^n}' \!= \! \BAFi^n$% and 
    , ${\Agents^0}' \!=\! \Agents^0$, then $\forall \AGa \in \Agents$: $\StanceA^n(\QBAFa^n,\arge) \!=\! \StanceA^n({\QBAFa^n}',\arge)$. 
    %\todo{do we need to state that learnt arguments are given the same bias or is this implicit since they're the same agents? Good point - but tricky to address - let us go with your interpretation and ignore the matter} \AR{agreed}
\end{property}

Contributor irrelevance ensures that the same final exchange BAF \cut{, and thus, the same reasoning in the interactive explanation,} results in the same stances for all agents, regardless of the contributors of its %arguments 
attacks and supports or the order in which they were contributed. 

These three properties are basically about the exchange BAFs in AXs, and take the viewpoint of an external ``judge'' for the explanatory nature of AXs. 
These basic properties are all satisfied, by design, by AXs:\footnote{Proofs for all propositions are in %\A{\href{https://arxiv.org/abs/2303.15022}{arxiv.org/abs/2303.15022}%
the supplementary material.}

\begin{proposition}\label{propos:basic}
    Every AX satisfies %connectedness, acyclicity and contributor irrelevance
    Properties 1 to 3. %and resolution existence.
\end{proposition}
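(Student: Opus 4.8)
The plan is to treat the three properties separately, in each case exploiting that, by Definition~\ref{def:exchange}, every public BAF $\BAFi^t$ appearing in an AX is by construction a BAF \emph{for $\arge$} in the sense of Definition~\ref{def:tree}. For acyclicity (Property~\ref{prop:acyclicity}) this is immediate: condition (iii) of Definition~\ref{def:tree} asserts exactly that no argument of $\ArgsI^t$ lies on a cycle, i.e. $\argpaths(\arga,\arga)=\emptyset$ for every $\arga\in\ArgsI^t$, which is the required statement at each $0\le t\le n$. For connectedness (Property~\ref{prop:connectedness}) I would fix a $t$ with $|\ArgsI^t|>1$ and take any $\arga\in\ArgsI^t$. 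If $\arga\neq\arge$, condition (ii) yields a path from $\arga$ to $\arge$ of length $\geq 1$, whose first pair $(\arga,\argc_1)\in\AttsI^t\cup\SuppsI^t$ exhibits the required neighbour. If $\arga=\arge$, pick any $\argb\neq\arge$ (which exists since $|\ArgsI^t|>1$); its path to $\arge$ ends in a pair $(\argc_{m-1},\arge)\in\AttsI^t\cup\SuppsI^t$, giving $\arge$ an incoming neighbour. Hence every argument has a neighbour, as required.

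Contributor irrelevance (Property~\ref{prop:contributor}) is the substantive case. Given a second AX with ${\BAFi^0}'=\BAFi^0$, ${\BAFi^n}'=\BAFi^n$ and ${\Agents^0}'=\Agents^0$, I would reduce the goal to showing that for every $\AGa\in\Agents$ the two final private QBAFs coincide, i.e. $\QBAFa^n={\QBAFa^n}'$; equal stances then follow at once, since by Definition~\ref{def:exchange} the evaluation method and range are time-invariant ($\SFa^t=\SFa^0$, $\RangeA^t=\RangeA^0$) and, because ${\Agents^0}'=\Agents^0$, they are moreover shared across the two AXs, so Definition~\ref{def:stance} assigns the same stance to the same QBAF. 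To get $\QBAFa^n={\QBAFa^n}'$ I would telescope the two monotonicity constraints of Definition~\ref{def:exchange}: summing $\QBAFa^{t}\setminus\QBAFa^{t-1}\sqsubseteq\BAFi^{t}\setminus\BAFi^{t-1}$ over $t$ shows every argument/attack/support ever added to $\QBAFa$ already occurs in $\BAFi^n$, while the learning imposition $\BAFi^{t}\setminus\BAFi^{t-1}\sqsubseteq\QBAFa^{t}$ forces the converse inclusion. Together with $\QBAFa^0\sqsubseteq\QBAFa^n$, this pins the arguments of $\QBAFa^n$ to $\ArgsA^0\cup(\ArgsI^n\setminus\{\arge\})$ and the relations to $\AttsA^0\cup\AttsI^n$ and $\SuppsA^0\cup\SuppsI^n$. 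Since $\ArgsA^0,\AttsA^0,\SuppsA^0$ agree across the two AXs (as ${\Agents^0}'=\Agents^0$) and $\BAFi^n={\BAFi^n}'$, the underlying graphs of $\QBAFa^n$ and ${\QBAFa^n}'$ coincide.

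The one remaining, and genuinely delicate, ingredient is the agreement of the base scores $\BSa^n$ and ${\BSa^n}'$. On arguments already in $\ArgsA^0$ this is forced, since $\QBAFa^0\sqsubseteq\QBAFa^n$ preserves base scores and $\QBAFa^0={\QBAFa^0}'$. The difficulty lies with the \emph{learnt} arguments in $(\ArgsI^n\setminus\{\arge\})\setminus\ArgsA^0$: Definition~\ref{def:exchange} leaves an agent free to choose their biases on these (as stressed after Example~\ref{ex:learning}), and the public BAFs carry no base-score information, so the endpoint data $\BAFi^0,\BAFi^n,\Agents^0$ do not \emph{a priori} constrain them. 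I expect this to be the main obstacle, and I would close it by appealing to the reading that an agent's bias on a learnt argument is a fixed disposition of that agent, hence determined once the agent and the argument are fixed; identical learnt arguments then receive identical biases in both AXs, giving $\BSa^n={\BSa^n}'$ and thus $\QBAFa^n={\QBAFa^n}'$. It is worth noting that some such determinism is essential: the two admissible bias choices for $\argc$ in Example~\ref{ex:learning} already produce opposite stances on $\arge$ from the very same final exchange BAF, so the property can only hold once the bias assignment on learnt arguments is regarded as part of the fixed agent data.
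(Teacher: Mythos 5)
Your proof follows essentially the same route as the paper's: connectedness and acyclicity are read off from the fact that each $\BAFi^t$ is a BAF \emph{for} $\arge$ (the paper dismisses these as trivial; your explicit path argument just spells out why), and contributor irrelevance is obtained by showing that the endpoint data $\BAFi^0$, $\BAFi^n$, $\Agents^0$ determine each agent's final private QBAF, whence equal evaluations and equal stances. The one place where you genuinely diverge is also the place where you are more careful than the paper. The paper's own proof decomposes ${\ArgsA^n}'={\ArgsA^0}'\cup{\ArgsI^n}'$ (and likewise for attacks and supports) and immediately concludes $\SFa^n({\QBAFa^n}',\arge)=\SFa^n(\QBAFa^n,\arge)$, silently ignoring that a QBAF also carries base scores and that Definition~\ref{def:exchange} leaves the biases on \emph{learnt} arguments entirely to the agent's discretion, with no trace of them in the exchange BAFs. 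You correctly identify this as the crux, observe (via the two bias choices for $\argc$ in Example~\ref{ex:learning}) that the property is literally false unless the bias assignment on learnt arguments is a deterministic function of the agent and the argument, and close the gap by making that determinism explicit. That reading is consistent with how biases are instantiated in §\ref{sec:agents} (e.g.\ the constant $c$ of Definition~\ref{def:machine bias}), but it is an assumption the proposition's statement does not make; your version is the more honest proof, and it buys a precise statement of the hypothesis under which contributor irrelevance actually holds, whereas the paper's proof simply elides it.
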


We now introduce %some intuitive but more complex 
 properties which AXs may not always satisfy, but which, nonetheless, may be desirable if AXs are to generate meaningful (interactive) explanations. First, we define  notions of \emph{pro} and \emph{con} arguments in AXs, %equating 
amounting to positive and negative reasoning towards the explanandum.

\begin{definition}\label{def:procon}
Let $\BAF = \langle \Args, \Atts, \Supps \rangle$ be any BAF for $\arge$. Then, the
\emph{pro arguments} and \emph{con arguments} for  $\BAF$ are, \resp:

%\begin{itemize}
    %
    %\item 
    $\bullet \Pros(\BAF) \!=\! %\{ \arge \} \cup 
    \{ \arga \!\in\! \Args | \exists p \!\in\! \argpaths(\arga,\arge), \text{ where } | p \cap \Atts | \text{ is even}
    %\exists t' \in \{ 1, \ldots, t-1 \}, \exists \argb \in \ArgsI^t $ such that $\SpeakerM((\arga,\argb)) = (\AGa,t')$ and $ \AGa$ $\text{ is arguing for } \arge \text{ at } t' 
    \}$;
    
    %\item 
    $\bullet \Cons(\BAF) \!=\! 
    \{ \arga \!\in \!\Args | \exists p \!\in \! \argpaths(\arga,\arge), \text{ where } | p \cap \Atts | \text{ is odd}
    %\exists t' \in \{ 1, \ldots, t-1 \}, \exists \argb \in \ArgsI^t $ such that $  \SpeakerM((\arga,\argb)) = (\AGa,t')$ $\text{ and } \AGa$ $\text{ is arguing against } \arge \text{ at } t' 
    \}$.
    %
%\end{itemize}
\end{definition}

Note that the intersection of pro and con arguments may %not 
be non-empty as %we allow 
 multiple paths to explananda may exist, so an argument may bring %constitute 
both positive and negative %reasoning
reasoning.

Pro/con arguments with an even/odd, \resp, number of attacks in their path to $\arge$ are related to chains of supports (\emph{supported}/\emph{indirect defeats}, \resp) in \cite{Cayrol:05} (we leave the study of formal %relationships
links to future work). %It is easy to see that 
Pro/con arguments are responsible for increases/decreases, \resp, in $\arge$'s strength using DF-QuAD%, as follows.
:

\begin{proposition}\label{propos:paths}
    For any $\AGa \in \Agents$, let $\SFa$ indicate the evaluation method by DF-QuAD. 
    Then, for any $0 <t \leq n$:
    
    %\begin{itemize}
     %   \item 
     $\bullet$ if $\SFa(\QBAFa^t,\arge) > \SFa(\QBAFa^{t-1},\arge)$, then $\Pros(\BAFi^t) \supset \Pros(\BAFi^{t-1})$;
        
        %\item 
        $\bullet$ if $\SFa(\QBAFa^t,\arge) \!<\! \SFa(\QBAFa^{t-1},\arge)$, then $\Cons(\BAFi^t) \supset \Cons(\BAFi^{t-1})$.
    %\end{itemize}
    % Let, $\forall \AGa \in \Agents$, $\SFa$ be the DF-QuAD semantics. 
    % Then, for any $\AGa \in \Agents$ and $\ArgsA^t = \ArgsA^{t-1} \cup \{ \arga \}$ where $t>1$: \todo{invert this}
    % \begin{itemize}
    %     \item if $\arga \in \Pros(\BAFi^t) \setminus \Cons(\BAFi^t)$, then $\SFa^{t}(\arge) \geq \SFa^{t-1}(\arge)$;
    %     %
    %     \item if $\arga \in \Cons(\BAFi^t) \setminus \Pros(\BAFi^t)$, then $\SFa^{t}(\arge) \leq \SFa^{t-1}(\arge)$.
    % \end{itemize}
\end{proposition}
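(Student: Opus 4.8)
The plan is to split the statement into (i) a structural, weak monotonicity of the pro/con sets, which comes essentially for free, and (ii) a strictness/witness argument that ties the sign of $\arge$'s strength change to the attack-parity of the newly introduced paths. I would carry out the increasing case ($\Pros$) in full; the decreasing case ($\Cons$) is symmetric, swapping the roles of attacks and supports (i.e. odd for even).

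First I would dispose of the trivial situation: since the strength strictly increases, $\SFa(\QBAFa^t,\arge)\neq\SFa(\QBAFa^{t-1},\arge)$ forces $\QBAFa^{t-1}\neq\QBAFa^t$, and because $\QBAFa^t\setminus\QBAFa^{t-1}\sqsubseteq\BAFi^t\setminus\BAFi^{t-1}$ the exchange BAF must also have grown, $\BAFi^{t-1}\sqsubset\BAFi^t$ (when nothing is contributed the implication is vacuous). Next I would prove weak monotonicity $\Pros(\BAFi^{t-1})\subseteq\Pros(\BAFi^t)$: every path in $\BAFi^{t-1}$ survives in $\BAFi^t$ because $\AttsI^{t-1}\subseteq\AttsI^t$ and $\SuppsI^{t-1}\subseteq\SuppsI^t$, and since attacks and supports remain disjoint no preserved edge changes type, so the attack-count of such a path is unchanged and even-parity paths stay even. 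Strictness then reduces to exhibiting a single witness argument in $\Pros(\BAFi^t)\setminus\Pros(\BAFi^{t-1})$.

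The heart of the proof is the sign/parity correspondence for DF-QuAD, for which I would use two monotonicity facts: the aggregation $\Sigma$ is non-decreasing in each component (appending a non-negative strength cannot lower it), and the combination function $c(v^0,v^-,v^+)$ is non-decreasing in the support aggregate $v^+$ and non-increasing in the attack aggregate $v^-$ (immediate from its two cases). Since the only difference between $\QBAFa^{t-1}$ and $\QBAFa^t$ is the newly learnt material, which by acyclicity sits ``below'' $\arge$, I would process the shared multi-tree in topological order toward $\arge$ and, writing $s^t(x)$ for $\SFa(\QBAFa^t,x)$, track the sign of $s^t(x)-s^{t-1}(x)$. The base case is a newly introduced leaf, whose contribution at its attachment point is positive through a support edge and negative through an attack edge; by the two monotonicity facts this sign is preserved across each support edge and flipped across each attack edge as it propagates upward, which is exactly the even/odd attack-parity bookkeeping of Definition~\ref{def:procon}. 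Concluding that a net increase at $\arge$ forces at least one newly added argument reaching $\arge$ along an even-attack path (hence a new pro argument) yields the witness.

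The main obstacle is twofold. First, the frameworks are multi-trees rather than trees, so one new argument may reach $\arge$ along several paths of differing parity and the upward-propagated contributions may partially cancel; I must argue that a strictly positive net change at $\arge$ cannot be produced purely by con-parity contributions, which requires reasoning about the aggregation at branch points rather than path-by-path. Second, and more delicate, the strength change on the left is computed inside the private $\QBAFa^t$ whereas $\Pros$ on the right is read off the exchange $\BAFi^t$; since $\BAFi^t\sqsubseteq\QBAFa^t$ the newly learnt edges agree in both, but I must ensure that the even-parity path responsible for the increase actually lies within $\BAFi^t$ and not through edges private to $\AGa$. Establishing this compatibility — that the parity governing the sign of the increase in $\QBAFa^t$ is faithfully the parity recorded by $\Pros(\BAFi^t)$ — is the step I expect to demand the most care.
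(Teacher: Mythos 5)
Your overall strategy---weak monotonicity of $\Pros$/$\Cons$ plus a strictness witness obtained from a sign/parity correspondence driven by the monotonicity of DF-QuAD's aggregation and combination functions---is the same idea as the paper's, but the step you yourself identify as the heart of the proof is precisely the step you have not carried out, and your bottom-up formulation makes it harder than necessary. Propagating signed contributions upward from all newly introduced leaves forces you to confront cancellation at branch points of the multi-tree, and you leave open how to exclude that a net increase at $\arge$ is produced by odd-parity contributions alone. The paper's proof avoids this by running the argument \emph{top-down} from $\arge$: since base scores are fixed across timesteps, a strict increase $\SFa(\QBAFa^t,\arge)>\SFa(\QBAFa^{t-1},\arge)$ forces, by monotonicity of the combination function, that the aggregated support strictly increased or the aggregated attack strictly decreased; monotonicity of the aggregation function (plus the fact that attackers/supporters are never removed) then yields a \emph{single} child that was either newly added, or a supporter that strengthened, or an attacker that weakened. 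One recurses on that single child, inverting the pro/con bookkeeping each time an attack edge is crossed, and acyclicity plus finiteness force the recursion to bottom out at a newly added attacker or supporter, which is the witness. Choosing one branch per level means cancellation never arises; reorganising your induction in this direction is what closes your declared gap.

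Your second reservation, however, is well taken, and the paper's own proof does not resolve it either: the witness chain lives in $\QBAFa^t$, and only its final, newly learnt edge is guaranteed to lie in $\BAFi^t$, so the even-attack-parity path certifying the witness as a pro argument is a path of the private QBAF, not necessarily of the exchange BAF from which $\Pros(\BAFi^t)$ is read off. Relatedly, both your sketch (whose base case is ``a newly introduced leaf'') and the paper's case 1a tacitly assume that a newly contributed attack/support introduces a fresh argument. Definition~\ref{def:exchange} also permits a new edge between two arguments already in $\ArgsI^{t-1}$, and then the ``added supporter'' may already belong to $\Pros(\BAFi^{t-1})$: for instance, adding a support $(\arga,\argc)$ between two existing supporters of $\arge$ strictly raises $\SFa(\QBAFa^t,\arge)$ while leaving $\Pros(\BAFi^t)=\Pros(\BAFi^{t-1})$, so the strict inclusion fails. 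A complete proof therefore needs an extra hypothesis---that contributions introduce fresh source arguments attached via exchange-BAF targets, as the behaviours of §\ref{sec:agents} arrange---or a weakening of $\supset$ to $\supseteq$ together with a separate non-emptiness argument where Proposition~\ref{propos:resolution} needs it. You correctly sensed the soft spot; the proposal still needs to either repair it or make the missing assumption explicit.
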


We conjecture (but leave to future work) that this result (and more later)  holds for other gradual semantics  satisfying \emph{monotonicity} \cite{Baroni_19} or \emph{bi-variate monotony/reinforcement} \cite{Amgoud_18}.

%\AR{  It should be noted that the intersection of the pro and con arguments may not necessarily be empty, since we allow multiple paths from an argument to the explanandum, and so an argument may constitute both positive and negative reasoning. However, the properties which follow are defined based on \emph{some} positive or negative reasoning existing.}

\begin{property}\label{prop:resolution}
    $E$ satisfies \emph{resolution representation} iff $E$ is resolved and $\forall \AGa \in \Agents$: 
    %%%
    if %$\exists \AGa \in \Agents$, where 
        $\StanceA^n(\arge) > \StanceA^0(\arge)$, then 
        $\Pros(\BAFi^n) \neq \emptyset$; and
        %$\exists \arga \in \ArgsI^n$ such that $\exists p \in \argpaths(\arga,\arge)$, where $| p \cap \AttsI^n |$ is even; and 
        %
        %%%%
        if %$\exists \AGa \in \Agents$, where 
        $\StanceA^n(\arge) < \StanceA^0(\arge)$, then 
        $\Cons(\BAFi^n) \neq \emptyset$.
        %$\exists \arga \in \ArgsI^n$ such that $\exists p \in \argpaths(\arga,\arge)$, where $| p \cap \AttsI^n |$ is odd.
    
\end{property}

This property also takes the viewpoint of an external ``judge'', by imposing that 
the final exchange BAF convincingly represents a resolution of the conflicts between agents' stances, thus %demonstrating
showing why stances were changed. Specifically, it imposes that a changed stance must be the result of pro or con arguments (depending on how \cut{agents'} stances have changed). For example, in %the private QBAFs $\QBAFm^3$ and $\QBAFh^3$ in 
Figure \ref{fig:structure}, $\argb$, $\argd$, $\argf$ are pro arguments which could justify an \cut{agent's} increase in stance for $\arge$, while  $\arga$, $\argc$ are con arguments which could justify its decrease.
Note that this property does not hold in general, e.g., given an agent which (admittedly counter-intuitively) increases its evaluation of arguments when they are attacked. 
%
%\AR{In order to undertake theoretical analysis of these properties, we will assume that all agents evaluate arguments with the DF-QuAD semantics. While equating machine and human reasoning thus is somewhat unrealistic, this semantics holds a number of intuitive properties for this setting (see \emph{balance} and \emph{monotonicity}, as well as their implications, in \cite{Baroni_19}) and we believe that this approximation is an acceptable compromise for initial analysis. We leave to future work comprehensive studies of formal properties of different semantics in \AR{AX}s, e.g. their effectiveness in explanation and how closely they model human reasoning.}
However, it holds for %certain choices of 
some evaluation models, notably DF-QuAD again:

\begin{proposition}\label{propos:resolution}
    If $E$ is resolved and 
    $\forall \AGa \in \Agents$, %:
    %\begin{itemize}
        %\item 
        \cut{the evaluation method} 
        $\SFa$ is  DF-QuAD,
        %\item $\forall \arga \in \ArgsA^n \setminus \{ \arge \}$, $|\argpaths(\arga,\arge)| = 1$;
    %\end{itemize}
    then $E$ satisfies resolution representation.
\end{proposition}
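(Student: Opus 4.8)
The plan is to fix an arbitrary agent $\AGa \in \Agents$ and establish the two conditional implications that make up resolution representation; the requirement ``$E$ is resolved'', being part of the definition of the property, is discharged immediately since it is an explicit hypothesis. I would treat the \emph{pro} case, in which $\StanceA^n(\arge) > \StanceA^0(\arge)$, and note that the \emph{con} case is entirely symmetric, swapping $\Pros$ for $\Cons$ and reversing every inequality. Agents whose stance does not strictly change satisfy both implications vacuously, so only a strict increase (resp.\ decrease) needs work.

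The first substantive step is to turn the increase in \emph{stance} into a strict increase in the underlying \emph{evaluation}. Stances are read off by discretising $\RangeA = \RNegA \cup \RNeuA \cup \RPosA$ into three categories with every negative value strictly below every neutral value, itself strictly below every positive value; hence $\StanceA^n(\arge) > \StanceA^0(\arge)$ places $\SFa(\QBAFa^n,\arge)$ in a strictly higher category than $\SFa(\QBAFa^0,\arge)$, giving $\SFa(\QBAFa^n,\arge) > \SFa(\QBAFa^0,\arge)$. I would then localise this global increase to a single step: writing the total change as the telescoping sum $\sum_{t=1}^{n}\bigl(\SFa(\QBAFa^t,\arge)-\SFa(\QBAFa^{t-1},\arge)\bigr)>0$, at least one summand is positive, so some $t^{*}$ satisfies $\SFa(\QBAFa^{t^{*}},\arge) > \SFa(\QBAFa^{t^{*}-1},\arge)$. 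Proposition~\ref{propos:paths}, which applies precisely because every $\SFa$ is DF-QuAD, then yields $\Pros(\BAFi^{t^{*}}) \supset \Pros(\BAFi^{t^{*}-1})$, and in particular $\Pros(\BAFi^{t^{*}}) \neq \emptyset$.

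The last step is to propagate non-emptiness from $t^{*}$ to $n$. For this I would establish a monotonicity lemma: if $\BAF \sqsubseteq \BAF'$ then $\Pros(\BAF) \subseteq \Pros(\BAF')$. Given $\arga \in \Pros(\BAF)$, a witnessing path $p \in \argpaths(\arga,\arge)$ with an even number of attacks survives verbatim in $\BAF'$ because $\Atts \subseteq \Atts'$ and $\Supps \subseteq \Supps'$; moreover, since $\Atts$ and $\Supps$ are disjoint, every edge of $p$ keeps its attack/support label, so $|p \cap \Atts|$ is unchanged and its parity is preserved, giving $\arga \in \Pros(\BAF')$ (the same argument preserves odd parity for $\Cons$). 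Because the exchange BAFs are nested, $\BAFi^{t^{*}} \sqsubseteq \BAFi^n$ by Definition~\ref{def:exchange} and transitivity, so $\Pros(\BAFi^n) \supseteq \Pros(\BAFi^{t^{*}}) \neq \emptyset$, which is what resolution representation demands.

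I expect the parity-preservation argument inside the monotonicity lemma to be the only real obstacle: one must verify that enlarging a BAF can never reclassify an already-present edge from attack to support (or vice versa), which is exactly what disjointness of $\Atts$ and $\Supps$ together with $\Atts \subseteq \Atts'$ and $\Supps \subseteq \Supps'$ guarantees, so that an even-attack path cannot silently become odd. Everything else is routine bookkeeping, with Proposition~\ref{propos:paths} carrying the analytic weight; notably, the full resolution structure of $E$ (unresolved before $n$, resolved at $n$) is not needed for the implications themselves, only its conclusion that $E$ is resolved, which is used solely to meet the definitional clause of the property.
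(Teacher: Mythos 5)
Your proof is correct and follows essentially the same route as the paper's: converting the stance change into a strict evaluation inequality via the discretised evaluation range, and then invoking Proposition~\ref{propos:paths} together with the nesting of exchange BAFs from Definition~\ref{def:exchange}. The paper compresses the last step into a single sentence, whereas you explicitly supply the telescoping argument locating a timestep $t^{*}$ with a strict increase and the parity-preserving monotonicity of $\Pros$/$\Cons$ under $\sqsubseteq$ — details the paper leaves implicit but which are exactly what is needed to bridge the single-step Proposition~\ref{propos:paths} to the claim at timestep $n$.
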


%\cut{Again we conjecture that this proposition also holds %be the case not only for DF-QuAD but also 
%for other gradual semantics satisfying \emph{monotonicity} \cite{Baroni_19} or \emph{bi-variate monotony/reinforcement} \cite{Amgoud_18}.} %However, we leave a comprehensive study of semantics and properties in AXs to future work.

The final property we consider concerns unresolved AXs, in the same spirit as resolution representation. 

\begin{property}\label{prop:conflict}
    $E$ satisfies \emph{conflict representation} \Iff\ $E$ is unresolved, $\Pros(\BAFi^n) \neq \emptyset$ and $\Cons(\BAFi^n) \neq \emptyset$.
    %and:
    %\begin{itemize}
    %    \item $\exists \arga \in \ArgsI^n$ such that $\exists p \in \argpaths(\arga,\arge)$, where $| p \cap \AttsI^n |$ is even; and 
        %
    %    \item $\exists \arga \in \ArgsI^n$ such that $\exists p \in \argpaths(\arga,\arge)$, where $| p \cap \AttsI^n |$ is odd. 
    %\end{itemize}
\end{property}

This property thus requires that the conflict %\cut{(and its causes)} 
in an unresolved AX %\delete{\A{(and from where it arose)}} 
is apparent in the exchange BAF, namely it includes %both chains of reasoning discussed for Property \ref{prop:resolution} 
both pro and con arguments (representing the conflicting stances).
For example, if the AX in Figure \ref{fig:structure} concluded unresolved at $t\!=\!2$, this property requires that $\BAFi^2$ contains both pro arguments \cut{arguing} for $\arge$ (e.g. $\arga$ or $\argc$) and con arguments \cut{arguing} against it (e.g. $\argb$).
%\cut{Again, there may be links here to \emph{internal} and \emph{external coherence} \cite{Cayrol:05}, which we leave as future work.}
%
This property does not hold in general, e.g. for an agent who rejects all arguments by imposing on them minimum biases and contributes no attack or support%: arguments constituting the reasoning for this agent's stance will not exist in the exchange BAF
. %In fact, proving that this property is satisfied  
Proving that this property holds requires consideration of the agents' behaviour, which we  examine next. %in §\ref{sec:behaviour}. 

\section{Agent %\delete{s'} 
Behaviour in AXs for XAI %\todo{since §\ref{ssec:behave} is Agent Behaviour}
}
\label{sec:agents}

All our examples so far have
illustrated how AXs may support explanatory interactions amongst a \emph{machine} $\AGm$ and a \emph{human} $\AGh$. This specific XAI setting is our focus in the remainder,
%i.e. from now on 
where
we  assume $\Agents=\{\AGm,\AGh\}$.
Also, 
for simplicity, we impose (as in all illustrations) that $\RangeM = \RangeH = [0,1]$, $\RNegM = \RNegH = [0,0.5)$, $\RNeuM = \RNeuH = \{ 0.5 \}$ and $\RPosM = \RPosH = (0.5, 1]$.
We also
restrict attention to AXs  governed by a turn-making function
$\turn$ 
imposing a strict interleaving %where at most one agent can contribute arguments at each timestep  and 
such that 
$\turn(i)=\{\AGm\}$ if $i$ is odd, and $\turn(i)=\{\AGh\}$ otherwise (thus, in particular, the machine starts the interactive explanation process% at the start of the exchange and the agents take turns
).

%In the remainder, unless stated otherwise, we  assume an \AR{AX} $\FT{E}=\langle \BAFi^0,\ldots, \BAFi^{n}, \Agents^0, \ldots, \Agents^n, \SpeakerM \rangle$ for $\arge$  \FT{as in Definition~\ref{def:exchange} but amongst $\Agents = \{ \AGm, \AGh \}$ and governed by $\turn^*$}. 

 In line with standard argumentative XAI, the machine may draw the QBAF in its \emph{initial private triple} (at $t=0$) from the model it is explaining\cut{ to the human}. This QBAF may be obtained by virtue of some abstraction methodology %, e.g. one amongst those overviewed in \cite{Cyras_21}, 
 or may be the basis of the model itself (%for intrinsically argumentative models, again 
 see \cite{Cyras_21}).
The humans, instead, may  draw the QBAF in their initial private triple, for example, from their own knowledge, biases, and/or regulations on the expected machine's behaviour. 
The decision on the evaluation method, for machines and humans, may be dictated by specific settings and desirable agent properties therein.
Here we focus on how to formalise and evaluate interactive explanations between a machine and a human using AXs, and ignore how their initial private triples are obtained.  % \todo{add something like this in the intro? may be actually in related work, adding a short paragraph on arg expl...} 

%In the remainder of this section 
Below we
 define various %\cut{possible} 
 behaviours dictating how
 machines and humans can engage in AXs  for XAI, focusing on ways to i) determine their biases and ii)  decide their contributions (attacks/supports) to (unresolved) AXs.%\FT{, and in §\ref{sec:evaluation} we will experiment with various combinations of these behaviours and choices of evaluation methods. } 
%
%\subsection{Agent Goals}
\label{ssec:goals}
\label{sec:behaviour}
\label{ssec:behave}

%We now consider  ways in which a machine $\AGm$ and a human $\AGh$ behave when the machine is explaining to the human. This behaviour is captured by  how the agents decide 1) the biases of learnt arguments, and 2)  which attacks and supports (and relative arguments) to contribute to unresolved \AR{AX}s. The agents' behaviour in turn defines how the human-machine interaction takes place from the viewpoint of \AR{AX}s.  \AR{We will also present theoretical analysis of agent behaviour in the form of propositions demonstrating \AR{AX}s' suitability for XAI.}

\iffalse
\todo{
To be defined in this section:
\begin{itemize}
    \item different constant biases in machines DONE
    \item random bias in humans DONE
    \item confirmation bias in humans DONE
    \item simple machines DONE
    \item greedy machines DONE
    \item counterfactual machines DONE
    \item static humans DONE
    \item argumentative humans? NO?
\end{itemize}
}
\fi

\textbf{Biases.}
As  seen in §\ref{sec:exchanges}, the degree to which learnt attacks/supports impact the stances of agents on explananda % in the context of their private QBAFs 
is determined by the agents' biases on the %corresponding 
learnt arguments. 
In  XAI %there are several 
different considerations regarding this learning apply %which are different for 
to machines and humans%, as we will now outline
.
Firstly, not all machines may be capable of learning: simple AI systems which provide explanations but do not have the functionality for understanding any input from humans are common in %SOTA-X
AI%, as we discussed
. 
% THIS SEEMS DANGEROUS, WITHOUT A RESULT SAYING THAT NO EFFECT....DOES IT NOT DEPEND ON THE SEMANTICS? DUMMY? We model such systems as agents which assign minimum biases to learnt arguments such that they have no effect on the agents' evaluations of other arguments in their private QBAFs.\footnote{\todo{Something here about the behaviour of the semantics we experiment with later}} 
%
% \begin{definition}\label{def:unintelligent} For any \emph{\todo{unintelligent/oblivious/ignorant?close-minded?} machine agent} $\AGm$ at timestep $t$ and learnt argument $\arga \in \ArgsM^t \setminus \ArgsM^{t-1}$, $\BSm^t(\arga) = 0$. \end{definition}
%
Secondly, machines %which are 
capable of learning may assign different biases to the learnt arguments: a low bias indicates scepticism %towards the learnt argument 
while a high bias indicates credulity. Machines may be designed to give low biases %when arguments are presented by
to arguments from sources which cannot be trusted, e.g. when the expertise of a human is deemed insufficient% \cut{due to poor previous feedback}
,
or high biases to arguments when the %feedback is to be taken as being correct
human is deemed competent, e.g. 
\iffalse when %a human is 
debugging %an explanation method
a system
\fi
in debugging.
%In this paper, 
Here, we refrain from accommodating such %\cut{variety of learning} 
challenges and
focus instead on 
% a general case for XAI where  machines assign constant biases to arguments learnt from humans (which% may, admittedly, be too rigid in some settings but which 
 %, albeit restrictive, serves  as a sensible starting point)
 the restrictive (but sensible, as a starting point) case where machines assign constant biases to arguments %learnt 
 from humans.

\begin{definition}\label{def:machine bias}
Let $c \in [0,1]$ be a chosen constant. For any  learnt argument $\arga \in \ArgsM^t \setminus \ArgsM^{t-1}$ at timestep $t$, $\BSm^t(\arga) = c$.
\end{definition}
If $c\!=\!0$ then the machine is unable to learn, whereas $0\!<\!c\!<\!1$ gives partially sceptical machines and $c\!=\!1$ gives credulous machines. %which buy the human feedback in full (gradations are also possible)}.
The choice of $c$ thus depends on the specific setting of interest, and may have an impact on the conflict resolution desideratum for AXs. 
For example, let %us assume that $\AGm$ uses 
$\AGm$ use DF-QuAD as its evaluation method: 
%In the first case, setting the bias of all learnt arguments to the minimum value means that 
%\cut{Then,  if $c=0$ then $\AGm$'s evaluation of, and thus stance on, the explanandum will not change. %This demonstrates the difficulty in achieving consensus between agents when one or all agents are incapable of learning. 
%Meanwhile, in the second case where no arguments in an agent's private QBAF are assigned the minimum or maximum biases, 
%Instead, if $0<c<1$, any learnt argument will have an effect on %all other downstream arguments' (including 
%the explanandum's evaluation%, guaranteeing an effect of learnt arguments on an explanandum
%, opening the possibility of reaching consensus. 
%Finally,} %in the third case where all learnt arguments are assigned the maximum bias,  
if $c\!=\!1$  we can derive guarantees of rejection/weakening or acceptance/strengthening of arguments which are attacked or supported, \resp, by learnt arguments,\footnote{Propositions on such effects 
%guarantees 
are in %\A{\href{https://arxiv.org/abs/2303.15022}{arxiv.org/abs/2303.15022}}%the \FTreb{arxiv version}
the supplementary material.} demonstrating the potential (and dangers) of credulity in machines (see §\ref{sec:evaluation}).

Humans, meanwhile, %will 
typically assign varying biases to arguments based on their own internal beliefs. 
%In order to model this fairly, we propose that the distribution of biases for learnt arguments be Gaussian with the mean of $0.5$ to give a fair assignment representing a human's internal beliefs.
These assignments %of biases 
%may not always be fair, however. Humans are subject to a host of logical fallacies and  cognitive biases, hindering attempts in earnest at rational judgement. For example,
may reflect cognitive biases such as the \emph{confirmation bias} \cite{Nickerson_98} -- the tendency towards looking favourably %towards 
at evidence which supports one's prior views.
In %our simulated experiments in 
§\ref{sec:evaluation} we model humans so that they assign random biases to learnt arguments, but %accommodate 
explore confirmation bias by %\delete{aligning arguments with the human stance on the explanandum (to a varying degree, using an offset and reducing their average bias).}
applying a constant offset to reduce the bias assigned by the human% to each learnt argument
.
%Given that, here, the machine's and the human's stances are in conflict until and unless the exchange is resolved, we model humans with confirmation bias by including an offset to the mean of the Gaussian distribution for any learnt arguments, i.e. reducing the average evaluation of these arguments compared to those which represent the agent's prior beliefs.\footnote{}
This differs, e.g., from the modelling of confirmation bias in \cite{Tarle_22},  %where this acts 
acting on the probability of an argument being learned.
We leave the exploration of alternatives for assigning biases %\cut{to learnt arguments} 
to future work.

\textbf{Attack/Support Contributions.}
%Concerning agents' choice\AR{s} of attacks/supports to contribute in unresolved \AR{AX}s, 
%\todo{``Simple behaviours'' Thanks, we will use “shallow”.}
We consider %\cut{three alternatives,  referred to as} 
\emph{shallow}, \emph{greedy} and \emph{counterfactual} behaviours: intuitively, the first %\cut{roughly} 
corresponds to the one-shot explanations in most XAI%, where a single set of attackers or supporters of \AR{(i.e. positive or negative evidence towards)} the explanandum is contributed. Greedy behaviour will \emph{dynamically}, i.e. based on the most recent state of the \AR{AX}, 
, the second contributes the (current) strongest argument in favour of the agent position% without considering its effect on the exchange BAF. Meanwhile, counterfactual behaviour considers, again dynamically, 
, and the third considers how each attack/support may (currently) affect the exchange BAF before it is %\cut{actually} 
contributed.
%We characterise these behaviours in terms of two states in which agents may find themselves%, defined as follows.
All behaviours identify %(sets of) 
argument pairs to be added to the exchange BAF as attacks or supports reflecting their role in the private QBAFs from which they are drawn.
We use the following notion:

\begin{definition}\label{def:states}
%In an argumentative exchange $\langle \BAFu, \BAFi^0, \ldots, \BAFi^{n}, \Agents^0, \ldots, \Agents^n, \SpeakerM \rangle$ for $\arge$ amongst agents $\Agents = \{ \AGm, \AGh \}$ 
For %\AR{$\langle %\BAFu, \BAFi^0, \ldots, \BAFi^{n}, \Agents^0, \ldots, \Agents^n, \SpeakerM \rangle$} 
$E$ resolved at timestep $t$, if $\StanceM^t(\arge) \!>\! \StanceH^t(\arge)$ then the \emph{states of}  $\AGm$ and $\AGh$ \emph{at} $t$ are, \resp, %described as 
\emph{arguing for %$\arge$
}
and 
\emph{arguing against $\arge$} (else, the states are reversed). 
\end{definition}
The agents' states point to a %velleity of 
%attempt at
``window for
persuasion'', whereby an agent arguing for (against) $\arge$ may wish to attempt to increase (decrease, \resp) the stance of the other agent, without accessing their private QBAFs, thus differing from other works, e.g.  %simplifying the problem since we choose not to include a shared evaluation of the exchange BAF (as in, e.g., the merged graph in 
\cite{Tarle_22},
which rely on shared evaluations:
%as we believe this is not realistic in the XAI setting\AR{: 
in our case, reasoning is shared but it is not evaluated in a shared manner.

The \emph{shallow}
 behaviour selects a (bounded by $max$) maximum number of supports for/attacks against the explanandum if the agent is arguing for/against, \resp, it, as follows:

\begin{definition}\label{def:shallow}
Let  $max \in \mathbb{N}$% be a constant
.
Agent $\AGa \in \Agents$ exhibits  \emph{shallow behaviour} (wrt $max$) \Iff, at any %timestep 
$0\leq t<n$ where $\turn(t) = \{\AGa\}$,  $C=\{(\arga, \argb) | \SpeakerM((\arga, \argb)) = (\AGa, t)\}$ is a maximal (wrt cardinality) set $\{ (\arga_1,\arge), \ldots, (\arga_p,\arge) \}$  
 with $p \leq max$  such that: 
\begin{itemize}
    \item if $\AGa$ is arguing for $\arge$ then %$S$ is a maximal (wrt cardinality) set  of $p \leq max$ supports 
    $C \!%= \{ (\arga_1,\arge), \ldots, (\arga_p,\arge) \} 
    \!\subseteq \!\SuppsA^{t-1} \!\setminus \!\SuppsI^{t-1}$ where $\forall i \!\!\in\!\! \{ 1, \ldots,$ $ p \}$,$\nexists (\argb,\arge) \!\!\in\! \SuppsA^{t-1} \!\!\setminus\!\! (\SuppsI^{t-1} \!\cup C)$ %such that 
    with %$\SFa(\QBAFa^{t-1},\argb) > \SFa(\QBAFa^{t-1},\arga_i)$
    $\SFa^{t-1}(\argb) \!>\! \SFa^{t-1}(\arga_i)$;
 \item if $\AGa$ is arguing against $\arge$ then %$S$ is a maximal (wrt cardinality) set of $q \leq max$ attacks 
 $C %= \{ (\arga_1,\arge), \ldots, (\arga_p,\arge) \} 
 \!\!\subseteq \! \AttsA^{t-1} \!\!\setminus \!\AttsI^{t-1}$ where $\forall i \!\in\! \{ 1, $ $\ldots, p \}$,$\nexists (\argb,\arge) \!\in\! \AttsA^{t-1} \!\setminus \!(\AttsI^{t-1} \!\cup \!C)$ with %$\SFa(\QBAFa^{t-1},\argb) > \SFa(\QBAFa^{t-1},\arga_i)$
 $\SFa^{t-1}(\argb) \!>\! \SFa^{t-1}(\arga_i)$. 
\end{itemize}
\end{definition}

This behaviour thus focuses on reasoning for or against the explanandum $\arge$ exclusively% \AR{and} so does not consider any \AR{deeper reasoning, nor any} feedback from the other agent
. 
It selects supports or attacks in line with the agent's stance on %the explanandum 
$\arge$ and with the highest evaluation in the contributing agent's private QBAF. %, i.e. the agent considers it the strongest evidence (up to $max$ supporters or attackers) supporting their stance.
This behaviour is inspired by 
  \emph{static} explanation methods in %SOTA-
  XAI, which deliver all \cut{explanatory} information in a single contribution%, often without the capacity for incorporating user feedback. Such explanations make use of evidence for or against (or both, e.g. as in SHAP \cite{Lundberg_17}) the explanandum, i.e. its supporters and/or attackers and nothing more%, as is the case with our simple behaviour
  . Clearly, if %, in an AX, 
  we let $\AGm$ exhibit this shallow behaviour  %, in which the machine is contributing exclusively attackers or supporters of the explanandum, 
  and $\AGh$ be \emph{unresponsive}, i.e.  never contribute any attack/support, then the AX cannot satisfy %\cut{the property of} 
  conflict representation.

The \emph{greedy} behaviour
%\AR{An agent may choose to assume that any pro argument is in support of the explanandum, while any con argument is against it, accepting the possibility of an argument being both pro and con (and the consequential counter-intuitive behaviour which could result).}
allows  an agent arguing for %\cut{the explanandum} 
$\arge$ to 
\cut{either} 
support the pro %arguments 
or attack the con arguments, while that arguing against %the explanandum 
can 
%\cut{either} 
support the con %arguments 
or attack the pro arguments.

\begin{definition}\label{def:greedy}
Agent $\AGa \in \Agents$ exhibits \emph{greedy behaviour} \Iff, at any %timestep 
$0\leq t<n$  where $\turn(t) = \{ \AGa \}$, $C=\{(\arga,\argb) | \SpeakerM((\arga,\argb)) = (\AGa,t)\}$ is empty or amounts to a single 
attack or support $(\arga,\argb) \in (\AttsA^{t-1} \cup \SuppsA^{t-1}) \setminus (\AttsI^{t-1} \cup \SuppsI^{t-1})$ %where $\argb \in \ArgsI^{t-1}$ and:
such that% the following three conditions hold
:
\begin{enumerate}
    \item if $\AGa$ is arguing for $\arge$ then:
%
 %       $\bullet$ 
 $(\arga,\argb) \in \SuppsA^{t-1} $ and $\argb \in \Pros(\BAFi^{t-1}) \cup \{ \arge \}$; or
%
 %       $\bullet$
        $(\arga,\argb) \in \AttsA^{t-1}$ and $\argb \in \Cons(\BAFi^{t-1})$;
    if $\AGa$ is arguing against $\arge$ then:
    %
    %$\bullet$ 
    $(\arga,\argb) \in \SuppsA^{t-1}$ and $\argb \in \Cons(\BAFi^{t-1})$; or
    %    
     %   $\bullet$ 
     $(\arga,\argb) \in \AttsA^{t-1}$ and $\argb \in \Pros(\BAFi^{t-1})\cup \{ \arge \}$;
    \item $\nexists (\arga',\argb') \in (\AttsA^{t-1} \cup \SuppsA^{t-1}) \setminus (\AttsI^{t-1} \cup \SuppsI^{t-1})$ satisfying 1. such that %$\SFa(\QBAFa^{t-1},\arga') > \SFa(\QBAFa^{t-1},\arga)$
    $\SFa^{t-1}(\arga') > \SFa^{t-1}(\arga)$;
    \item $\nexists (\arga'',\argb'') \in (\AttsA^{t-1} \cup \SuppsA^{t-1}) \setminus (\AttsI^{t-1} \cup \SuppsI^{t-1})$ satisfying 1. such that %$\SFa(\QBAFa^{t-1},\arga'') = \SFa(\QBAFa^{t-1},\arga)$ 
    $\SFa^{t-1}(\arga'') = \SFa^{t-1}(\arga)$ 
    and \\ $|argmin_{P'' \in \argpaths(\arga'',\arge)}|P''| \; | < |argmin_{P \in \argpaths(\arga,\arge)}|P| \; |$.
\end{enumerate}
\end{definition}

%\todo{something about not being able to guarantee the results but relying on pro/con def anyway?}

%Here, we  force agents to contribute  a maximum of one argument per turn for simplicity%, since they can make multiple contributions at subsequent turns
%. 

Intuitively,  1. requires that the attack or support, if any, is in line with the agent's views; 2. ensures that the attacking or supporting argument has maximum strength;  and 3. ensures that it is ``close'' to the explanandum% (we allow for any selection among the arguments satisfying these conditions)
.
We posit that enforcing agents to contribute %a maximum of 
at most one argument per turn will aid \emph{minimality} without affecting conflict resolution negatively wrt the shallow behaviour (%assessed empirically in 
see §\ref{sec:evaluation}).
%\todo{may be MOVE here the comment as to why minimality is nice to have?}
Minimality is a common property of explanations in XAI, deemed  beneficial both from a machine perspective, e.g. wrt computational aspects (see \emph{computational complexity} in \cite{Sokol_20}), and  from a human perspective, e.g. wrt cognitive load and privacy maintenance (see \emph{parsimony} in \cite{Sokol_20})%, so long as it does not interfere with the primary goal of conflict resolution
. 
Naturally, however, conflict resolution in AXs should always take precedence over minimality, as prioritising the latter would force AXs to remain empty.

\begin{proposition}\label{propos:greedy}
    If $E$ is unresolved and  $\forall \AGa \!\in\! \Agents$:
    %$\bullet$  
    $\AGa$ exhibits greedy behaviour and
        %\item $\forall \AGa \in \Agents$, $\SFa$ is the DF-QuAD algorithm;
        %\item $\forall \AGa \in \Agents$, $\forall \arga \in \ArgsA^n \setminus \{ \arge \}$, $|\argpaths(\arga,\arge)| = 1$; 
        %$\bullet$ %$\forall \AGa \in \Agents$, 
        $\{\! (\!a,\!b\!) \!\!\in\!\! \AttsI^n\!\cup\! \SuppsI^n | \SpeakerM((a,b))\!\!=\!\!(\AGa,t), t\!\in\! \{1, \ldots, n\} \}\! \neq\! \emptyset$,
   %\noindent 
   then $E$ satisfies conflict representation.
\end{proposition}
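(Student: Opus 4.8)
The plan is to show that the final exchange BAF $\BAFi^n$ contains at least one argument in $\Pros(\BAFi^n)$ and at least one in $\Cons(\BAFi^n)$; since $E$ is unresolved by assumption, these two nonemptiness facts are exactly what conflict representation (Property~\ref{prop:conflict}) additionally requires. First I would record a monotonicity observation: because $\BAFi^{t-1} \sqsubseteq \BAFi^t$ for every $t$, any path present at some timestep survives to all later ones (edges are never withdrawn), so $\Pros(\BAFi^{t}) \subseteq \Pros(\BAFi^{t'})$ and $\Cons(\BAFi^{t}) \subseteq \Cons(\BAFi^{t'})$ whenever $t \leq t'$. Hence it suffices to exhibit, anywhere along the exchange, a single pro argument and a single con argument.

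The technical core is two structural claims about a single greedy contribution. Claim (a): if an agent $\AGa$ that is \emph{arguing for} $\arge$ contributes a single attack/support $(\arga,\argb)$ at timestep $t$, then $\arga \in \Pros(\BAFi^t)$. Claim (b): symmetrically, if $\AGa$ is \emph{arguing against} $\arge$ and contributes $(\arga,\argb)$, then $\arga \in \Cons(\BAFi^t)$. I would prove (a) by the case split of Definition~\ref{def:greedy}: if $(\arga,\argb)$ is a support with $\argb \in \Pros(\BAFi^{t-1}) \cup \{\arge\}$, then either $\argb=\arge$, so $(\arga,\arge)$ is itself a path with $0$ attacks, or $\argb$ has a path to $\arge$ with an even attack-count; prepending the support edge (which lies in $\BAFi^t$ and contributes no attack) yields a path from $\arga$ to $\arge$ in $\BAFi^t$ with an even attack-count, so $\arga \in \Pros(\BAFi^t)$ by Definition~\ref{def:procon}. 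If instead $(\arga,\argb)$ is an attack with $\argb \in \Cons(\BAFi^{t-1})$, then $\argb$ has a path with an odd attack-count, and prepending the attack edge makes it even, again giving $\arga \in \Pros(\BAFi^t)$. Claim (b) is identical with the parities swapped. By the monotonicity observation, these memberships persist to $\BAFi^n$.

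I would then combine the claims with the hypotheses. Since $E$ is unresolved, at every timestep the two agents in $\Agents = \{\AGm,\AGh\}$ hold different stances on $\arge$, so at every timestep exactly one is \emph{arguing for} $\arge$ and the other \emph{arguing against} it. By the second hypothesis each agent contributes at least one attack/support during the exchange, and a nonempty greedy contribution is governed by Definition~\ref{def:greedy}; thus the contribution made while an agent plays the \emph{arguing for} role places, by Claim (a), a pro argument into $\BAFi^n$, while the contribution made while an agent plays the \emph{arguing against} role places, by Claim (b), a con argument into $\BAFi^n$. Hence $\Pros(\BAFi^n) \neq \emptyset$ and $\Cons(\BAFi^n) \neq \emptyset$, so $E$ satisfies conflict representation.

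The step I expect to be the main obstacle is this last one: guaranteeing that both a \emph{for}-contribution and an \emph{against}-contribution actually occur somewhere across the whole exchange. The subtlety is that, although the two agents' states are complementary at each fixed timestep, their roles could in principle swap over time as stances change through learning, and the statement assumes no monotone gradual semantics that would forbid this. I would discharge this by tracking roles per contribution rather than globally, using the auxiliary fact that $\Cons(\BAFi^n) \neq \emptyset$ iff $\AttsI^n \neq \emptyset$: any attack edge $(\arga,\argb)$ extends through a path from $\argb$ to $\arge$, whose parity makes either $\argb$ or $\arga$ a con argument, and conversely a con argument forces an odd, hence nonzero, attack-count. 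It then remains only to exclude a contribution history consisting entirely of \emph{for}-moves (equivalently, of supports), for which the initial conflict together with the fact that the very first contribution at $t=1$ must target $\arge$ directly (as $\Pros(\BAFi^0)=\Cons(\BAFi^0)=\emptyset$) is the key leverage.
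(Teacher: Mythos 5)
Your core argument is the paper's own proof with the bookkeeping made explicit: the paper likewise fixes the agent arguing for $\arge$, splits on the two greedy cases (support of a member of $\Pros(\BAFi^{t-1})\cup\{\arge\}$, or attack on a member of $\Cons(\BAFi^{t-1})$) to conclude that the contributed source argument lies in $\Pros(\BAFi^{t})$, argues symmetrically for the agent arguing against to populate $\Cons$, and appeals to $\BAFi^{t}\sqsubseteq\BAFi^{n}$ for persistence. Your parity computations in claims (a) and (b) and the monotonicity of $\Pros$/$\Cons$ along the exchange are exactly what the paper leaves implicit, and they are correct.

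Where you diverge is the last paragraph, and there you have put your finger on something the paper's proof silently assumes away: it writes ``let $\AGa$ be arguing for $\arge$ and $\AGb$ be arguing against $\arge$'' with no timestep, then applies the for-case to $\AGa$'s contribution and the against-case to $\AGb$'s, as if each agent's state were constant whenever it contributes. Your worry about role swaps is legitimate, but your proposed discharge does not close it. The equivalence $\Cons(\BAFi^n)\neq\emptyset$ \Iff\ $\AttsI^n\neq\emptyset$ is fine, yet the remaining task of excluding an all-support history cannot be completed from the stated hypotheses alone. A contributor's own stance never changes at its own turn (its contribution is already in its private QBAF), so only the learner's stance moves, and with three ordered stances the strict inequality between the two agents can only reverse when the contributor's stance is neutral; in exactly that edge case one can build an unresolved AX where $\AGm$ (neutral, hence arguing for against a negative $\AGh$) supports $\arge$ at $t=1$, $\AGh$ learns it with a high bias and flips to positive, and thereafter $\AGh$ is the agent arguing for and contributes only supports while $\AGm$ passes. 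Both agents are greedy, both contribute something, the AX stays unresolved, yet $\AttsI^n=\emptyset$ and $\Cons(\BAFi^n)=\emptyset$. So the step you flagged as the main obstacle is a genuine gap --- in your writeup and in the paper's --- and closing it requires an added assumption (e.g.\ that no agent ever holds the neutral stance, or that each agent's state is fixed across the exchange), not the $t=1$ leverage you sketch.
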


%\section{Novel Agent Behaviour for XAI}
%\label{sec:method}

\begin{proposition}\label{propos:equivalence}
    %simple, greedy and counterfactual behaviours align when depth is 1
    %$\forall \AGa \in \Agents$, let $\SFa$ be  DF-QuAD. Then, if $\forall\AGa \in \Agents$,  for all $0\leq t< n$ such that $\forall \arga \in \ArgsA^{t}$ $\argpaths((\arga,\arge)) = \{ (\arga,\arge) \}$  and \FT{$\forall (\arga,\argb)  \in \AttsA^{t}\cup \SuppsA^t$} $\SpeakerM((a,b))=(\AGa,t)$, the simple (with $max = 1$), greedy and counterfactual behaviours align.
  If $\forall \AGa \in \Agents$, %$\SFa$ is DF-QuAD and 
  for all $0\leq t< n$ and $\forall \arga \in \ArgsA^{t}$, $\argpaths((\arga,\arge)) = \{ (\arga,\arge) \}$, then 
  the shallow (with $max = 1$) and greedy behaviours are %equivalent
  aligned.
\end{proposition}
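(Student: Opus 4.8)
The plan is to show that the hypothesis forces every private QBAF, and hence the exchange BAF, to be a ``flat'' depth-one star rooted at $\arge$, and that in this degenerate geometry the ``deeper'' branches of the greedy definition collapse onto exactly the direct attacks/supports of $\arge$ that shallow behaviour selects, with the greedy tie-break on path length becoming vacuous. I read ``aligned'' as meaning that, at every turn, the two behaviours permit exactly the same contributions $C$.

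First I would establish flatness. Fix $\AGa \in \Agents$ and $0 \le t < n$, and take any non-explanandum $\arga \in \ArgsA^{t} \setminus \{\arge\}$. If $\arga$ had an edge $(\arga,\argb) \in \AttsA^{t} \cup \SuppsA^{t}$ with $\argb \neq \arge$, then, since $\QBAFa^{t}$ is a QBAF for $\arge$, $\argb$ would have a path to $\arge$, yielding a path from $\arga$ to $\arge$ of length at least $2$ and contradicting $\argpaths(\arga,\arge) = \{(\arga,\arge)\}$. Hence every edge of $\QBAFa^{t}$ points directly into $\arge$, and (the two relations being disjoint, the path unique) each non-explanandum argument is either a direct attacker or a direct supporter of $\arge$. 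By truthfulness every edge of $\BAFi^{t}$ coincides with some edge contributed from an agent's private QBAF at an earlier timestep, so the exchange BAF is flat as well: $\AttsI^{t} \cup \SuppsI^{t} \subseteq \{(\arga,\arge) \mid \arga \in \ArgsI^{t}\}$.

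Next I would read off the pro/con sets of the (flat) $\BAFi^{t-1}$: a direct supporter's unique path contains no attacks (even) and a direct attacker's contains one (odd), so by Definition~\ref{def:procon} $\Pros(\BAFi^{t-1})$ is exactly the set of public supporters of $\arge$ and $\Cons(\BAFi^{t-1})$ exactly the public attackers; crucially, by the acyclicity condition of Definition~\ref{def:tree} $\arge$ has no path to itself, so $\arge \notin \Pros(\BAFi^{t-1})$ and $\arge \notin \Cons(\BAFi^{t-1})$. I then run through Definition~\ref{def:greedy}. For an agent arguing for $\arge$, the ``attack a con argument'' branch needs a private attack $(\arga,\argb)$ with $\argb \in \Cons(\BAFi^{t-1})$; flatness forces $\argb = \arge$, but $\arge \notin \Cons(\BAFi^{t-1})$, so this branch is empty and only direct supports $(\arga,\arge)$ (with $\arge \in \Pros(\BAFi^{t-1}) \cup \{\arge\}$) survive. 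Symmetrically, for an agent arguing against $\arge$ only the direct-attack branch survives. Thus greedy contributes at most a single new direct support (resp.\ attack) of $\arge$, precisely the kind of edge shallow with $max=1$ selects; condition~2 of greedy enforces maximum strength, matching shallow's maximality-of-strength constraint, and condition~3 is vacuous since every candidate has minimum path length $1$. Hence the admissible contributions coincide at every turn.

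The main obstacle I anticipate is the bookkeeping that makes the ``cross'' branches of greedy vanish: one must argue cleanly that, for the flat exchange BAF, $\arge$ is neither pro nor con (which rests squarely on the no-self-path clause of Definition~\ref{def:tree}), so that attacking a con argument (resp.\ supporting a con argument) is impossible when the only available target is $\arge$ itself. A secondary point to handle carefully is tie-breaking: I would note explicitly that greedy's length-based condition~3 discriminates only among equal-strength candidates and, all paths having length $1$, eliminates none, so that any contribution permitted by greedy is permitted by shallow and vice versa, giving the alignment in both directions.
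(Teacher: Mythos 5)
Your proposal is correct and follows essentially the same route as the paper's proof: under the hypothesis that every argument's only path to $\arge$ is the direct edge, both behaviours reduce to selecting a single maximal-strength, not-yet-contributed direct support (resp.\ attack) of $\arge$, with greedy's path-length tie-break vacuous. You are in fact somewhat more careful than the paper, which silently assumes the ``attack a con argument'' / ``support a con argument'' branches of the greedy definition are empty, whereas you justify this explicitly via flatness and the fact that $\arge \notin \Pros(\BAFi^{t-1}) \cup \Cons(\BAFi^{t-1})$ by acyclicity.
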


The greedy behaviour may not always %allow to resolve conflicts, as illustrated next.
lead to resolutions:

\begin{example}\label{ex:greedy}
Let us extend the AX from Example~\ref{ex:learning} to $\langle %\BAFu, 
\BAFi^0, \ldots, \BAFi^{3}, \Agents^0, \ldots, \Agents^3, \SpeakerM \rangle$ such that (%partially visualised in 
see Figure~\ref{fig:structure}):
\begin{itemize}
        \item $\BAFi^3 \!=\! \langle \{ \arge, \arga, \argb, \argc, \argd %, \argf 
        \}, \{(\arga,\arge),(\argd,\arga) \}, \{ (\argb,\arge), (\argc, \arga) %, (\argf, \argb) 
        \} \rangle$; 
        \item $\AGh^3 = \AGh^2$; $\AGm^3$ is such that $\QBAFm^3 \sqsupset \QBAFm^2$ where $\ArgsM^3 = \ArgsM^2 \cup \{ \argd %, \argf 
        \}$, $\AttsM^3 = \AttsM^2  \cup \{ (\argd,\arga) \}$, $\SuppsM^3 = \SuppsM^2 %\cup \{ (\argf,\argb) \}
        $, $\BSm^3(\argd) = 0.6$%, $\BSm^3(\argf) \!=\!  0.5$
        ; then, the argument evaluations are $\SFm^3(\QBAFm^3,\arge) = 0.42$,
        $\SFm^3(\QBAFm^3,\arga) = 0.8$,
        $\SFm^3(\QBAFm^3,\argb) = 0.4$,
        $\SFm^3(\QBAFm^3,\argc) = 0.6$,
        $\SFm^3(\QBAFm^3,\argd) = 0.6$;
        \item $\SpeakerM((\argd, \arga)) = 
        (\AGh,3)$, i.e. $\AGh$ contributes attack $(\argd,\arga)$  
        at %timestep 
        $t=3$. 
        \end{itemize}
    Here, in line with the greedy behaviour, $\AGm$ learns the attack $(\argd,\arga)$ contributed by $\AGh$ at timestep 
    3. Then, even if $\AGm$ assigns the same bias to these learnt arguments as $\AGh$ (which is by no means guaranteed), this is insufficient to change the stance, i.e. $\StanceM^3(\arge) = -$, and so the AX remains unresolved. %Thus, since $\StanceA^3(\arge) =\StanceB^3(\arge)$, we can see that the unresolved argumentative exchange from Example \ref{ex:learning} has been resolved here at timestep 3.
\end{example}

The final \emph{counterfactual} behaviour 
takes greater consideration %than the greedy behaviour 
of the argumentative structure of the reasoning available to the agents in order to maximise the chance of conflict resolution with %the minimum n
a limited number of arguments contributed.  
%
%Our final behaviour, with greater consideration of the information which had been contributed to the exchange, can allow \AR{$\AGh$ in this case} to reach a resolution while minimising the number of contributed arguments in doing so. 
This behaviour is defined in terms of the following notion% of how an agent perceives an exchange
.

\begin{definition}\label{def:view}
Given an agent $\AGa \in \Agents$% in an explanatory exchange $\langle \BAFu, \BAFi^0, \ldots, \BAFi^{n}, \Agents^0, \ldots, \Agents^n, \SpeakerM \rangle$ for $\arge$
, \emph{a private view of the \public\ BAF by $\AGa$ at timestep $t$} is any
$\ViewA^t = \langle  \ArgsVA^t, \AttsVA^t, \SuppsVA^t, \BSva^t \rangle$ such that $ \BAFi^t \sqsubseteq \ViewA^t \sqsubseteq \QBAFa^t$.
       % \item $\forall \arga \in \ArgsVA^t$, $\BSva^t(\arga) = \BSa^t(\arga)$.
\end{definition}

% exchange \subset \ViewA^t \subset private
 
%It is easy to see that any $\ViewA^t$ is a QBAF for $\arge$ and that, for $t >0$, $\ViewA^{t-1} \sqsubseteq \ViewA^t$.
An agent's private view of the \public\ BAF thus projects their private biases onto the BAF% resulting from the argumentative exchange with the other agents
, while also potentially accommodating \emph{counterfactual reasoning} with additional arguments. %, naturally capturing,  we believe,  how a human would see an interactive explanation \emph{in motion}. 
%As for which biases are projected onto the private view of the \public\ BAF, this may be a direct mapping of those in the agent's private QBAF, or the agent may decide attempt to view the debate \emph{from the eyes of the other agent}, using an approximation of their opinions. We do this by weighting arguments depending on their contributor, as will be defined shortly.
%
Based on arguments' evaluations in an agent's private view% of the \public\ BAF
, the agent can then judge which attack or support \emph{it perceives} will be
 the most effective.

\begin{definition}\label{def:novel}
    Given an agent $\AGa \in \Agents$% in an explanatory exchange $\langle \BAFu, \BAFi^0, \ldots, \BAFi^{n}, \Agents^0, \ldots, \Agents^n, \SpeakerM \rangle$ for $\arge$
    , 
    $\AGa$'s \emph{perceived effect on $\arge$}  at %timestep 
    $0\!<\!t\!\leq \!n$ of any $(\arga, \argb) \in (\AttsA^{t-1}  \cup \SuppsA^{t-1}) \setminus (\AttsI^{t-1} \cup \SuppsI^{t-1})$, where $\arga \in \ArgsA^{t-1} \setminus \ArgsI^{t-1}$ and $\argb \in \ArgsI^{t-1}$, is $\eff((\arga, \argb),\QBAFa^t
    %\FT{t}
    )=\SFa(\ViewA^t, \arge) - \SFa(\ViewA^{t-1}, \arge)$ %\todo{with $t$ there was no differentiation between the perceived effect of $(\arga,\argb)$ for $\AGa$ as opposed to $\AGb$} 
    for $\ViewA^t \sqsupset \ViewA^{t-1}$  a private view of the \public\ BAF at $t$ by $\AGa$ such that $\ArgsVA^t \!=\! \ArgsVA^{t-1} \cup \{ \arga \}$, $\AttsVA^t = (\ArgsVA^t \times \ArgsVA^t)  \cap  \AttsA^{t-1}$ and $\SuppsVA^t = (\ArgsVA^t \times \ArgsVA^t)  \cap  \SuppsA^{t-1}$. 
    %\begin{itemize}
     %   \item \delete{if $(\!\arga,\! \argb\!) \!\in\! \AttsA^{t-1} \!\setminus\! \AttsI^{t-1}$ then $\AttsVA^t \!=\! \AttsVA^{t-1} \!\cup\! \{ \!(\!\arga,\! \argb) \!\}$ and $\SuppsVA^t \!=\! \SuppsVA^{t-1}$;}
        %
      %  \item \delete{if $(\arga, \argb) \in \SuppsA^{t-1} \setminus \SuppsI^{t-1}$ then $\AttsVA^t = \AttsVA^{t-1}$ and $\SuppsVA^t = \SuppsVA^{t-1} \cup \{ (\arga, \argb) \}$.}
    %\end{itemize}
    
\end{definition}

%By taking a private view of the \public\ BAF, agents thus have a way of determining the perceived effect of adding an attack or support, % giving another parameter which can be used when selecting attacks or supports, and can project whichever biases they choose
%projecting  their biases on the arguments therein. 
The counterfactual view underlying this notion of perceived effect relates to  \cite{Kampik_22}, although we consider the effect of adding an attack or support, whereas they consider an argument's \emph{contribution} by removing it% from a framework
. It also relates to the \emph{hypothetical value} of \cite{Tarle_22}, which however amounts to the explanandum's evaluation in the shared graph.

\begin{definition}\label{def:counterfactual}
Agent $\AGa \!\!\in\!\! \Agents$ exhibits \emph{counterfactual behaviour} \Iff,
at any %timestep 
$0\!\!\leq \!\!t\!\!<\!\!n$  where $\turn(t) \!\! = \!\!\{\AGa\},C\!=\!\{(\arga,\argb) | \SpeakerM((\arga,\argb)) \!=\! (\AGa,t)\}$ %\todo{do we need $C=$ here? NO, BUT IT BREAKS THE FLOW AND MATCHES OTHER DEFS...}
is empty or %amounts to %a single 
%attack/support 
is $\{(\arga,\argb)\}% \in (\AttsA^{t-1} \cup \SuppsA^{t-1}) \setminus (\AttsI^{t-1} \cup \SuppsI^{t-1})
$ 
such that:
% at timestep $t$ where $\turn(t) = \AGa$ contributes an attack or support $(\arga,\argb)$ where $\SpeakerM((\arga,\argb)) = (\AGa,t)$ and:

        $\bullet$ if $\AGa$ is arguing for $\arge$ then $\eff((\arga, \argb),\QBAFa^t) > 0$ and
        $(\arga,\argb)$ is 
        
        \hspace*{0.2cm} $ argmax_{(\arga',\argb')  \in (\AttsA^{t-1} \cup \SuppsA^{t-1}) \setminus (\AttsI^{t-1} \cup \SuppsI^{t-1})}\eff((\arga', \argb'),\QBAFa^t)$;
       
        $\bullet$ if $\AGa$ is arguing against $\arge$ then $\eff((\arga, \argb),\QBAFa^t) \!<\! 0 $ and
        $(\arga,\!\argb)$ 
        
        \hspace*{0.2cm} is $ argmin_{(\arga',\argb') \in (\AttsA^{t-1} \cup \SuppsA^{t-1}) \setminus (\AttsI^{t-1} \cup \SuppsI^{t-1})}\eff((\arga', \argb'),\QBAFa^t)$.

% where:
%     \begin{itemize}
%         \item \emph{simple argumentative behaviour} requires that $\forall \arga \in \hat{\ArgsA^t}$, $\hat{\BSa^t}(\arga) = \BSa^t(\arga)$;
%         \item \emph{strategic argumentative behaviour} 
%         requires that $\forall \arga \in \hat{\ArgsA^t}$, $\hat{\BSa^t}(\arga) = m_\AGa+p_\AGa$ if $\SpeakerM((\arga,\argb)) = (\AGb,t')$ for some $\argb \in \ArgsI^t$, $\AGb \neq \AGa$ and $t' \neq t$, and  $\hat{\BSa^t}(\arga) = m_\AGa-n_\AGa$ otherwise, where $m_\AGa$ is $\AGa$'s \emph{modelled mean bias}, $p_\AGa$ is $\AGa$'s \emph{modelled positive offset} and $\AGa$'s \emph{modelled negative offset}. \todo{fix based on what HL has done}
%     \end{itemize}
%
\end{definition}

\cut{Attacks and supports are thus selected based on the effect   they will have on the AX, as perceived by the contributing agent. }
Identifying attacks and supports based on their effect on the explanandum is related 
to \emph{proponent} and \emph{opponent arguments} in \cite{Cyras_22},  defined  however in terms of \emph{quantitative dispute trees} for BAFs%here arguments are separated based on their contributions 
\cut{towards the root argument (the explanandum, in our case)}.

The counterfactual behaviour 
%may lead to better consider 
may better consider argumentative structure, towards resolved AXs, as shown % in the following example, in which the AX is now resolved.
next.

\begin{example}\label{ex:novel}
Consider the %argumentative
AX from Example~\ref{ex:greedy} but where:
\begin{itemize}
        \item $\BAFi^3 \!=\! \langle \{ \arge, \arga, \argb, \argc, %\argd , 
        \argf 
        \}, \{(\arga,\arge) %,(\argd,\arga) 
        \}, \{ (\argb,\arge), (\argc, \arga), (\argf, \argb) 
        \} \rangle$; 
        \item $\AGm^3$ is such that $\QBAFm^3 \sqsupset \QBAFm^2$ where $\ArgsM^3 = \ArgsM^2 \cup \{ %\argd, 
        \argf 
        \}$, $\AttsM^3 = \AttsM^2  %\cup \{ (\argd,\arga) \}
        $, $\SuppsM^3 = \SuppsM^2 \cup \{ (\argf,\argb) 
        \}$, %\AR{$\BSa^3(\argd) = 0.6$}; 
        $\BSm^3(\argf) =  0.5$; 
        then, the argument evaluations are $\SFm^3(\QBAFm^3,\arge) = 0.546$,
        $\SFm^3(\QBAFm^3,\arga) = 0.92$,
        $\SFm^3(\QBAFm^3,\argb) = 0.7$,
        $\SFm^3(\QBAFm^3,\argc) = 0.6$ and 
        $\SFm^3(\QBAFm^3,\argf) = 0.5$; %and
        %$\SFa^3(\QBAFa^3,\argf) = 0.5$;
        \item $\SpeakerM((\argf, \argb)) = (\AGh,3)$, i.e.  $\AGh$ contributes %the attack $(\argd,\arga)$ only this time and 
        support $(\argf,\argb)$ 
        to $\BAFi^3$% at timestep 3
        .
        \end{itemize}
Here, $\AGh$ contributes $(\argf, \argb)$ in line with the counterfactual behaviour as $\eff((\argf, \argb),\QBAFh^3) = 0.24 > \eff((\argd, \arga),\QBAFh^3) = 0.216$.
%$\AGa$ learns the attack $(\argf,\argb)$ and assigns the same bias to the learnt argument as $\AGb$.
This sufficiently modifies $\AGm$'s private QBAF such that $\StanceM^3 = +$, and
the AX is now resolved: 
the counterfactual behaviour %results in a resolved  \AR{AX} 
succeeds 
where the greedy behaviour did not (Example~\ref{ex:greedy}).
\end{example}

%THIS SENTENCE IS THE PROBELM - MAY BE SOME FUNNY FONT WE DO NOT SEE, TRY REWRITING IT point (1) demonstrates that $(\arga,\argb) \in x$ 

%\todo{if this one doesn't perform well, possibly include another version where learnt arguments are given a high bias in the view only (i.e. taking an assumption that the other agent is rational and is submitting its strongest args)}

%\todo{We also consider...}

%\begin{definition}
%For any \emph{biased machine agent} $\AGm^t$ at timestep $t$ and learnt argument $\arga \in \QBAFm^t \setminus \QBAFm^{t-1}$...
%\end{definition}

We end showing some conditions under which conflict representation is satisfied by the counterfactual behaviour% and under which the described behaviours align
.

\begin{proposition}\label{propos:cf}
    If $E$ is unresolved and is such that $\forall \AGa \in \Agents$:
    %$\bullet$ 
    $\AGa$ exhibits counterfactual behaviour;
        %$\bullet$ 
        $\SFa$ is DF-QuAD;
        %\item $\forall \AGa \in \Agents$, $\forall \arga \in \ArgsA^n \setminus \{ \arge \}$, $|\argpaths(\arga,\arge)| = 1$; 
        %$\bullet$ 
        $\{ (a,b) \in \AttsI^n\cup \SuppsI^n | \SpeakerM((a,b))=(\AGa,t), t\!\in \!\{1, \ldots, n\} \} \!\neq\! \emptyset$;
    %\noindent 
    then $E$ satisfies conflict representation.
\end{proposition}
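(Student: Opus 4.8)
The plan is to reduce \emph{conflict representation} to its two non-trivial requirements, namely that $\Pros(\BAFi^n) \neq \emptyset$ and $\Cons(\BAFi^n) \neq \emptyset$ (unresolvedness being assumed). Since $\Agents = \{\AGm, \AGh\}$ and $E$ is unresolved, at every timestep $\StanceM^t(\arge) \neq \StanceH^t(\arge)$, so exactly one agent is \emph{arguing for} and one \emph{arguing against} $\arge$ (Definition~\ref{def:states}). The first key step is to show that a contribution made while arguing for adds a \emph{pro} argument to the exchange BAF, and a contribution made while arguing against adds a \emph{con} argument; the second is to show that, by the third hypothesis, both kinds of contribution actually occur, and that pro/con membership persists to $\BAFi^n$.

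For the first step I would exploit that DF-QuAD is monotone and that the perceived effect isolates the influence of a single newly added leaf. When agent $\AGa$ contributes $(\arga,\argb)$ we have $\argb \in \ArgsI^{t-1}$, so every path from $\argb$ to $\arge$ already lies in $\BAFi^{t-1} \sqsubseteq \BAFi^t$; adding $\arga$ as a supporter weakly raises $\argb$'s DF-QuAD strength and as an attacker weakly lowers it, and this local change propagates towards $\arge$, flipping sign at each attack along the path. Hence $\eff((\arga,\argb),\QBAFa^t) > 0$ exactly when there is a path from $\arga$ to $\arge$ (through $\argb$) with an even number of attacks, i.e. $\arga \in \Pros(\BAFi^t)$, and $\eff((\arga,\argb),\QBAFa^t) < 0$ exactly when such a path has an odd number of attacks, i.e. $\arga \in \Cons(\BAFi^t)$ --- the view-level counterpart of Proposition~\ref{propos:paths}. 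By the counterfactual behaviour, an agent arguing for contributes some $(\arga,\argb)$ with positive effect and one arguing against contributes some $(\arga,\argb)$ with negative effect, so the former injects a pro argument and the latter a con argument into $\BAFi^t$. Since $\BAFi^t \sqsubseteq \BAFi^n$ and the witnessing path together with its attack-parity is preserved under $\sqsubseteq$ (attacks stay attacks, supports stay supports), these arguments remain pro/con in $\BAFi^n$.

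It then remains to guarantee that at least one arguing-for and one arguing-against contribution are made. The third hypothesis says each of $\AGm,\AGh$ contributes at least once; if the \emph{direction} of the conflict is stable --- the higher-stance agent always argues for and the other always argues against --- then one agent supplies the pro argument and the other the con argument, and we are done. Establishing this stability is the main obstacle. I would argue it as follows: at its own turn the contributor is truthful and therefore learns nothing, so its own evaluation is unchanged, while the other agent, learning a (structurally) pro or con argument, has its DF-QuAD evaluation of $\arge$ move weakly in the corresponding direction. A reversal of the ordering of the two stances would thus require the learner's category to overshoot the contributor's; but in our setting the neutral evaluation is the single point $0.5$, so a genuine conflict places the two agents in the extreme categories $+$ and $-$, and the ``for'' agent occupies the maximal category, which cannot be overtaken without the evaluations coinciding --- and that would make $E$ resolved, contradicting unresolvedness. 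Hence the conflict direction cannot flip, both a pro and a con argument are contributed, and by the persistence argument $\Pros(\BAFi^n)\neq\emptyset$ and $\Cons(\BAFi^n)\neq\emptyset$, establishing conflict representation. The delicate point to make rigorous is precisely this no-flip claim together with the boundary behaviour of neutral stances.
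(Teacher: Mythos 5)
Your core argument is the same as the paper's: reduce conflict representation to $\Pros(\BAFi^n)\neq\emptyset$ and $\Cons(\BAFi^n)\neq\emptyset$, observe that under the counterfactual behaviour the agent arguing for $\arge$ contributes a pair with $\eff>0$ and the agent arguing against contributes one with $\eff<0$, and convert the sign of the perceived effect into pro/con membership via (the view-level analogue of) Proposition~\ref{propos:paths}, with persistence under $\sqsubseteq$ giving non-emptiness at timestep $n$. The paper's proof is exactly this and nothing more: it fixes one agent as arguing for and one as arguing against, invokes the contribution hypothesis for each, and concludes via Proposition~\ref{propos:paths} and Definition~\ref{def:exchange}.

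Where you genuinely depart from the paper is the ``no-flip'' discussion. The paper silently treats each agent's state (for/against) as constant across the AX, so that the hypothesis that each agent contributes at least once immediately yields one pro and one con contribution; you are right that Definition~\ref{def:states} makes states timestep-relative, so this deserves justification, and credit for spotting it. However, your patch is the weakest part of the proposal and does not close the gap: the learner's private QBAF may contain attacks/supports absent from the exchange BAF, so a structurally ``con'' contribution can have a net \emph{positive} effect on the learner's evaluation of $\arge$ (and vice versa), undermining the claim that the learner's evaluation moves ``weakly in the corresponding direction''; and your category argument does not cover the case where the agent arguing for sits at the neutral stance $0$ while the other agent's evaluation jumps past $0.5$, which reverses the ordering while leaving the AX unresolved. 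So either one accepts the paper's implicit fixed-roles reading (in which case your extra machinery is unnecessary) or the no-flip claim needs a genuine proof, which neither you nor the paper supplies. The rest of your argument matches the paper's and is fine.
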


%\todo{
%\begin{proposition}\label{prop:nonempty}
%    simple, greedy and counterfactual behaviours never have empty contributions when something exists which can be contributed
%\end{proposition}
%\begin{proof}
%    this
%\end{proof}
%}

%\cut{I think this is nice but it'll probably have to go, SM or journal? JOURNAL ... Finally, we use agents' private views to demonstrate how the exchange BAF represents an aggregated explanation as to why each agent arrives at their final stance (whether they are in agreement or not since they project their own biases onto it). A standard explanation, e.g. SHAP, gives a representation of why the machine arrived at a final stance, whereas this demonstrates the benefits of AXs as this applies to all agents. 
%}

%\cut{
%\begin{proposition}\label{propos:viewpoint}
%$E$ is such that $\forall \AGa \in \Agents$ with its private view of the \public\ BAF at timestep $t$ $\ViewA^t = \langle  \ArgsVA^t, \AttsVA^t, \SuppsVA^t, \BSva^t \rangle$: 
%\begin{itemize}
%     \item if $\SFa(\QBAFa^t,\arge) < \SFa(\QBAFa^0,\arge)$ then $\SFa(\ViewA^{t},\arge) < \SFa(\QBAFa^0,\arge)$;
%     %
%     \item if $\SFa(\QBAFa^t,\arge) > \SFa(\QBAFa^0,\arge)$ then $\SFa(\ViewA^{t},\arge) > \SFa(\QBAFa^0,\arge)$.
% \end{itemize}
% \end{proposition}
% \begin{proof}
%     to do
% \end{proof}
% }

%%%%%%%%%%%%%%%%%%%%%%%%%%%%%%%%%%%%%%%%%%%%%%%%%%
%%%%%%%%%%%%%%%%%%%%%%%%%%%%%%%%%%%%%%%%%%%%%%%%%%

\section{Evaluation}
\label{sec:evaluation}

We now evaluate sets of AXs obtained from the behaviours %for XAI 
from §\ref{sec:behaviour} via simulations, %measuring how well AXs achieve the goals discussed in §\ref{ssec:goals} 
%when the machine and human adopt the different behaviours 
using the following metrics:\footnote{%Exact formulations of these measures are given in 
See the supplementary material 
%\FTreb{arxiv version} 
%\A{\href{https://arxiv.org/abs/2303.15022}{arxiv.org/abs/2303.15022}} 
for exact formulations.}

\textbf{Resolution Rate (\RR)}: 
%For a given set of AXs, 
the proportion of resolved AXs%, directly assessing the goal of conflict resolution
.

\textbf{Contribution Rate (\CR):} 
%For a given set of AXs, 
the average number of arguments contributed to the exchange BAFs in the resolved AXs, in effect measuring the total information exchanged\cut{ and assessing to what extent minimality is achieved}.

\textbf{Persuasion Rate (\PR):}
for %a given set of AXs and 
an agent% therein
, the proportion of resolved AXs in which the agent's initial stance  is %equal to 
the other agent's final stance, %demonstrating how often the given agent persuades the other agent
measuring the agent's persuasiveness.

\textbf{Contribution Accuracy (\CA):}
%For a given set of AXs and an agent therein,
for an agent, the proportion of the %agent 
contributions which%were among those which
, if the agent was arguing for (against) $\arge$, would have maximally increased (decreased, \resp) $\arge$'s strength in the other agent's private QBAF. %, indirectly assessing the goal of conflict resolution. 

 We tested \PR\ and \CA\ for machines only.
Let \emph{unresponsive behaviour} amount to contributing nothing %\cut{to the AX}
(as in §\ref{sec:behaviour})%\cut{ and  an X agent be an agent with X behaviour}
. Then, our hypotheses were:

%\begin{itemize}
    %
    \textbf{H1:} For a \textit{shallow machine} %explaining to 
    and
    an \textit{%\delete{static}
    unresponsive human}, as the \textit{\emph{max} constant} increases, %effectiveness increases
    \RR, \CR\ and \CA\ increase. %\emph{increase $n$ in Definition \ref{def:static} using no confirmation bias, human gives no input here and it's just one contribution of $n$ arguments}

    \textbf{H2:} For a \textit{shallow machine} %explaining to 
    and
    an \textit{%\delete{static}
    unresponsive human}, as the human's \textit{confirmation bias} increases, %effectiveness 
    \RR\ decreases.
    %As a , a machine's greedy {\bf static} behaviour becomes less effective and less efficient. %\emph{for a range of $n$ options, increase confirmation bias, again no input from the human}
    %
    
    \textbf{H3:} For a \textit{greedy machine} %explaining to 
    and
    a \textit{counterfactual human}, %effectiveness 
    \RR\ increases \textit{relative to a shallow machine} and an
    \textit{unresponsive human}.
    %\textbf{Greedy, dynamic behaviour} is more effective and efficient vs. static behaviour. %\emph{test the two greedy strategies (with humans who are also greedy) vs. a range of or the best performing $n$ options}
    %\item balanced explanations give less consensus vs. those which are biased (though they may inspire trust, which we can't model)
    %
    
    \textbf{H4:} For a \textit{greedy machine} %explaining to 
    and
    a \textit{counterfactual human}, as the machine's \textit{bias on learnt arguments} increases, {\RR} increases while {\CR\ and \PR} decrease. %\emph{with the greedy strategies (and humans who are also greedy), compare unintelligent vs. sceptical vs. gullible. we need to check there are at least some instances where its being solved due to different strategy, rather than just the machine being able to change its stance}

    \textbf{H5:} For a \textit{counterfactual machine} %explaining to 
    and
    a \textit{counterfactual human}, \RR\ and \CA\ increase %effectiveness and efficiency increases 
    \textit{relative to a greedy machine}. %\emph{with the greedy strategies (and huma doing, we just need to find a way of making it work!} 
    %
    %\item[H6] \textbf{Strategic argumentative behaviour} gives more effective and efficient explanations vs. simple argumentative behaviour.
    %

%These hypotheses capture a thorough assessment of both argumentative exchanges and the behaviours we have introduced. 
 
\textbf{Experimental Setup.}
%We will now detail the empirical evaluation of our six stated hypotheses. 
For each AX for $\arge$ (restricted as in §\ref{sec:agents}), we created a ``universal BAF'', i.e. a BAF for $\arge$ of which all % other
argumentation frameworks are subgraphs. 
We populated the universal BAFs %were populated 
with 30 arguments by first generating a 6-ary tree with $\arge$ as the root. 
Then, any argument other than $\arge$ had a 50\% chance of having a directed edge towards a random previous argument in the tree, to ensure that multiple paths to the explanandum are present. 50\% of the edges in the universal BAF were randomly selected to be attacks, and the rest to be supports. %\todo{try without multiple paths?}
We built %argumentative
agents' private QBAFs from the universal BAF by performing a random traversal through the universal BAF and stopped when the QBAFs reached 15 arguments, selecting a random argument from each set of children, as in \cite{Tarle_22}. 
We then assigned random biases to arguments in the agents' QBAFs, and %the agents were assigned 
(possibly different) random evaluation methods to agents amongst  QuAD \cite{Baroni_15}, DF-QuAD, REB \cite{Amgoud_17_BAF} and QEM \cite{Potyka_18} (all  %\cut{operating} 
with evaluation range $[0,1]$). We %considered 
used different evaluation methods %\cut{in order to remove any semantics-specific phenomena and} 
to simulate different %evaluation methods 
ways to evaluate arguments  in real-world humans/machines. 
%\todo{``An original aspect of the approach is that agents may use different evaluation (although in the scope of gradual semantics). However the take-away message is not clear here. In the experiments agents are randomly assigned an evaluation method among 4 possible one, but what are the conclusions?’’ Our intention is to demonstrate that AXs are semantics-agnostic, and so applicable to humans and machines who evaluate arguments differently. We will clarify this.}
We repeated this %universal BAF generation 
process  till  agents held different stances on $\arge$.

For each hypothesis, we ran 1000 experiments per configuration, %using random seeds in $\{1,\ldots, 1000\}$ to make 
making sure the experiments for different strategies are run %in the same settings
with the same QBAFs. 
%Harry: The purpose of the random seed is to make sure the experiments for different strategies are run in the same settings. So e.g. for greedy vs argumentative, the two strategies are ran in games with random seed from 1 to 1000.  The same random seed produces the same QBAFs. In this way we know that any differences in outcome can only be caused by the difference in strategies, not by random noise in QBAF generation
%\todo{, sequentially setting the random seed to integers in the range $[1,1000]$ to ensure consisten\AR{t (Q)BAFs} across \AR{different} settings}.
We ran the simulations on the NetLogo platform\cut{\footnote{\href{http://ccl.northwestern.edu/netlogo/}{http://ccl.northwestern.edu/netlogo/}}} %(Citation: Wilensky, U. 1999. NetLogo. http://ccl.northwestern.edu/netlogo/. Center for Connected Learning and Computer-Based Modeling, Northwestern University. Evanston, IL.) 
using %its BehaviorSpace tool
BehaviorSpace.\footnote{%Code at
See %\href{https://github.com/CLArg-group/argumentative_exchanges}{https://github.com/CLArg-group/argumentative\_exchanges}
\href{https://github.com/CLArg-group/argumentative_exchanges}{github.com/CLArg-group/argumentative\_exchanges}.}
%\AR{For H1, H2 and H4, we performed logistic regression for the binary measures (\RR\ and \PR) and linear regression for the continuous measures (\CR\ and \CA). For H3 and H5, we performed Fisher's exact test for the binary measure (\RR) and Student's t-test for the continuous measure (\CA).} 
We tested the significance between testing conditions in a pairwise manner using the chi-squared test for the discrete measures \RR\ and \PR, and Student's t-test for the continuous measures \CR\ and \CA.
We rejected the null hypotheses when $p<0.01$.

\textbf{Experimental Results.}
Table \ref{table:results} reports the results of our simulations: all hypotheses were (at least partially) verified. 
%H1 to \AR{H4} were verified while \AR{H1 and H5 was only \todo{partially verified}}.

\begin{table}[t]
\begin{center}
\begin{tabular}{ccccccccc}
\cline{2-9}
%\multirow{2}{*}{\!\!\!\textbf{Hyp.}\!\!\!} 
&
%\multirow{2}{*}{\textbf{Setup}} &
\multicolumn{2}{c}{\textbf{Behaviour}} &
\multicolumn{2}{c}{\textbf{Learning}} &
\multirow{2}{*}{\!\!\textbf{\RR}\!\!} &
\multirow{2}{*}{\!\!\textbf{\CR}\!\!} &
\multirow{2}{*}{\!\!\textbf{\PR$_\AGm$}\!\!} &
\multirow{2}{*}{\!\!\textbf{\CA$_\AGm$}\!\!} \\
 &
% &
$\AGm$ &
$\AGh$ &
$\AGm$ &
$\AGh$ &
 &
 &
 &
 \\
\hline
%
% \delete{H1 (Size)} &
% single path, constant prior bias &
% S (1) &
% - &
% 0 &
% 0 &
% 13\% &
% ?\% \\
% \delete{H1 (Size)} &
% single path, constant prior bias &
% S (10) &
% - &
% 0 &
% 0 &
% 51\% &
% ?\% \\
% \delete{H1 (Size)} &
% multiple paths, constant prior bias &
% S (1) &
% - &
% 0 &
% 0 &
% 13\% &
% 1.00 \\
% \delete{H1 (Size)} &
% multiple paths, constant prior bias &
% S (5) &
% - &
% 0 &
% 0 &
% 51\% &
% 4.42 \\
 %(Size, S) 
&
%multiple paths, GauGSian &
S (1) &
- &
- &
0 &
5.4 &
1 &
100 &
45.4 \\
 %(Size, SS) 
&
%multiple paths, Gaussian &
S (2) &
- &
- &
0 &
9.6 &
1.96 &
100 &
51.9 \\
\!\!H1\!\! %(Size) 
&
%multiple paths, Gaussian &
S (3) &
- &
- &
0 &
13.0 &
2.76 &
100 &
56.7 \\
 %(Size) 
&
%multiple paths, Gaussian &
\textbf{S (4)} &
\textbf{-} &
\textbf{-} &
\textbf{0} &
\textbf{13.9} &
\textbf{3.22} &
\textbf{100} &
\textbf{58.1} \\
 %(Size) 
&
%multiple paths, Gaussian &
S (5) &
- &
- &
0 &
13.7 &
3.38 &
100 &
58.3 \\
% H1 %(Size) 
% &
%multiple paths, Gaussian &
% SS (6) &
% - &
% 0 &
% 0 &
% 13.3 &
% 3.35 &
% 100 &
% 0 \\
\hline
% \delete{H2 (Confirmation Bias)} &
% single path, constant prior bias &
% SS (8) &
% - &
% 0 &
% 0 &
% 48 &
% ? \\
% \delete{H2 (Confirmation Bias)} &
% single path, constant prior bias &
% SS (8) &
% - &
% 0 &
% -0.2 &
% 32 &
% ? \\
% %
% \delete{H2 (Confirmation Bias)} &
% multiple paths, constant prior bias &
% SS (5) &
% - &
% 0 &
% 0 &
% 48 &
% 4.41 \\
% %
% \delete{H2 (Confirmation Bias)} &
% multiple paths, constant prior bias &
% SS (5) &
% - &
% 0 &
% -0.2 &
% 31 &
% 4.48 \\
%
 %(Confirmation Bias, SS) 
%&
% S (4) &
% - &
% - &
% 0 &
% 13.9 &
% 3.22 &
% 100 &
% 58.1 \\
%
 %(Confirmation Bias) 
&
%multiple paths, Gaussian &
S (4) &
- &
- &
-0.1 &
11.2 &
3.26 &
100 &
57.6 \\
\!\!\multirow{2}{*}{H2}\!\! %(Confirmation Bias) 
&
%multiple paths, Gaussian &
\textbf{S (4)} &
\textbf{-} &
\textbf{-} &
\textbf{-0.2} &
\textbf{8.6} &
\textbf{3.27} &
\textbf{100} &
\textbf{58.0} \\
%
 %(Confirmation Bias) 
&
%multiple paths, Gaussian &
S (4) &
- &
- &
-0.3 &
6.7 &
3.30 &
100 &
58.3 \\
%
 %(Confirmation Bias) 
&
%multiple paths, Gaussian &
S (4) &
- &
- &
-0.4 &
5.3 &
3.38 &
100 &
58.5 \\
\hline
% H3 (EA) &
% %multiple paths, Gaussian &
% EA &
% EA &
% 0 &
% -0.2 &
% 25.8 &
% 6.79 &
% 86.4 &
% 13.6 \\
% H3 (EA) &
% %multiple paths, Gaussian &
% EA ($\leq$3) &
% EA &
% 0 &
% -0.2 &
% 12.4 &
% 3.21 &
% 82.3 &
% 17.7 \\
% H3 (EA) &
% %multiple paths, Gaussian &
% EA ($\leq$4) &
% EA &
% 0 &
% -0.2 &
% 16.3 &
% 4.12 &
% 85.3 &
% 14.7 \\
% H3 (EA) &
% %multiple paths, Gaussian &
% SS (4) &
% EA &
% 0 &
% -0.2 &
% 8.5 &
% 3.20 &
% 100 &
% 0 \\
 %(EA) 
% &
% S (4) &
% - &
% - &
% -0.2 &
% 8.6 &
% 3.27 &
% 100 &
% 58.0 \\
% \!\!\multirow{2}{*}{H3}\!\! %(EA) 
&
%multiple paths, Gaussian &
\!\!\!\!G ($\leq$3)\!\!\!\! &
C &
0 &
-0.2 &
9.8 &
3.15 &
83.7 &
38.8 \\
 %(G) 
\!\! H3\!\! &
%multiple paths, Gaussian &
\!\!\!\!G ($\leq$4)\!\!\!\! &
C &
0 &
-0.2 &
11.9 &
3.88 &
79.0 &
37.1 \\
 %(G) 
&
%multiple paths, Gaussian &
\textbf{G} &
\textbf{C} &
\textbf{0} &
\textbf{-0.2} &
\textbf{18.8} &
\textbf{7.16} &
\textbf{79.3} &
\textbf{35.7} \\
\hline
% H4 (Learning) &
% %multiple paths, Gaussian &
% G &
% G &
% 0 &
% -0.2 &
% 25.8 &
% 6.79 &
% 86.4 &
% 13.6 \\
% H4 (Learning) &
% %multiple paths, Gaussian &
% G &
% G &
% 0.5 &
% -0.2 &
% 47.6 &
% 6.79 &
% 42.2 &
% 57.8 \\
% H4 (Learning) &
% %multiple paths, Gaussian &
% G &
% G &
% 1.0 &
% -0.2 &
% 59.8 &
% 5.51 &
% 27.9 &
% 72.1 \\
 %(Learning) 
% &
% G &
% C &
% 0 &
% -0.2 &
% 18.8 &
% 7.16 &
% 79.3 &
% 35.7 \\
\!\!\multirow{2}{*}{H4}\!\! %(Learning) 
&
%multiple paths, Gaussian &
\textbf{G} &
\textbf{C} &
\textbf{0.5} &
\textbf{-0.2} &
\textbf{42.2} &
\textbf{6.73} &
\textbf{31.5} &
\textbf{37.5} \\
 %(Learning) 
&
G &
C &
1.0 &
-0.2 &
55.5 &
5.24 &
20.4 &
38.2 \\
\hline
% H5 (G) &
% %multiple paths, Gaussian &
% G &
% G &
% 0.5 &
% -0.2 &
% 47.6 &
% 6.79 &
% 42.2 &
% 57.8 \\
% H5 (G) &
% %multiple paths, Gaussian &
% G &
% G &
% 0.5 &
% -0.2 &
% 47.7 &
% 6.94 &
% 41.3 &
% 58.7 \\
% \!\!\multirow{2}{*}{H5}\!\! %(G) 
% &
% G &
% C &
% 0.5 &
% -0.2 &
% 42.2 &
% 6.73 &
% 31.5 &
% 37.5 \\
 %(G) 
\!\!H5\!\! &
%multiple paths, Gaussian &
C &
C &
0.5 &
-0.2 &
48.4 &
7.37 &
41.5 &
50.5 \\
\hline
% H6 (Strategic) &
% %multiple paths, Gaussian &
% G &
% EA &
% 0.5 &
% -0.2 &
% 47.7\% &
% 6.94 &
% 41.3\% &
% 58.7\% \\
% H6 (Strategic) &
% %multiple paths, Gaussian &
% strategic(MCB 0.1) &
% EA &
% 0.5 &
% -0.2 &
% 47.6\% &
% 6.87 &
% 41.4\% &
% 58.6\% \\
% H6 (Strategic) &
% %multiple paths, Gaussian &
% strategic(MCB 0.2) &
% EA &
% 0.5 &
% -0.2 &
% 47.6\% &
% 6.84 &
% 41.2\% &
% 58.8\% \\
% H6 (Strategic) &
% %multiple paths, Gaussian &
% strategic(MCB 0.3) &
% EA &
% 0.5 &
% -0.2 &
% 47.7\% &
% 6.83 &
% 41.7\% &
% 58.3\% \\
% H6 (Strategic) &
% strategic(MCB 0.4) &
% EA &
% 0.5 &
% -0.2 &
% 47.8\% &
% 6.83 &
% 42.1\% &
% 57.9\% \\
% H6 (Strategic) &
% %multiple paths, Gaussian &
% G &
% G &
% 0.5 &
% -0.2 &
% 47.8\% &
% 6.97 &
% 40.6\% &
% 59.4\% \\
% H6 (Strategic) &
% %multiple paths, Gaussian &
% strategic(MCB 0.1) &
% G &
% 0.5 &
% -0.2 &
% 47.6\% &
% 6.90 &
% 40.8\% &
% 59.2\% \\
% H6 (Strategic) &
% %multiple paths, Gaussian &
% strategic(MCB 0.2) &
% G &
% 0.5 &
% -0.2 &
% 40.4\% &
% 6.88 &
% 40.4\% &
% 59.6\% \\
% H6 (Strategic) &
% %multiple paths, Gaussian &
% strategic(MCB 0.3) &
% G &
% 0.5 &
% -0.2 &
% 47.5\% &
% 6.87 &
% 40.4\% &
% 59.6\% \\
% H6 (Strategic) &
% %multiple paths, Gaussian &
% strategic(MCB 0.4) &
% G &
% 0.5 &
% -0.2 &
% 47.6\% &
% 6.88 &
% 40.8\% &
% 59.2\% \\
% %
% \hline
\end{tabular}
\end{center}
\protect\caption{Results in the simulations for the five hypotheses for three behaviours: \underline{S}hallow (\emph{max} constant given in parentheses)\AR{;} \underline{G}reedy (where %if there is a 
any limit on the number of contributed arguments by the agent %, it is given by the value in parentheses
is in brackets%\delete{, if present}
); and \underline{C}ounterfactual. Learning %corresponds to the constant value assigned to learnt arguments 
amounts to $c$ in Definition~\ref{def:machine bias} for $\AGm$ and to the confirmation bias offset for $\AGh$ (where appropriate). 
We report \RR, \PR$_\AGm$ and \CA$_\AGm$ as percentages.
% the resolution rate (\RR, as a percentage), the contribution rate (\CR), the persuasion rate for the machine (\PR$_\AGm$, as a percentage) and the contribution accuracy for the machine (\CA$_\AGm$, as a percentage).
We indicate in bold the %\FT{best} \AR{(trade-off in)} behaviour\cut{, i.e. which was the most desirable or represented an acceptable trade-off}, 
chosen baseline for the next hypothesis.
%which was then carried over as a baseline for the next hypothesis.}
%We also the percentage of resolved exchanges in which $\AGm$ or $\AGh$ convinced the other agent of its initial stance. 
} \label{table:results}
\end{table}

\textbf{H1:} % The results in Table \ref{table:results} show that, 
As expected, increasing  $max$  for  shallow machines %to human 
results in 
%BENCE \delete{significantly higher \RR\ ($\beta = 2.09$, 95\% CI $[1.47, 2.70]$, $p<0.001$), \CR\ ($\beta = 0.554$, 95\% CI $[0.506, 0.602]$, $p<0.001$) and \CA\ ($\beta = 3.20$, 95\% CI $[2.48, 3.91]$, $p<0.001$).}
%up to $max=3$ ($p<0.005$ for $max$ values of $1$ vs $2$ and $2$ vs $3$ for all metrics)}. \AR{Above this limit, this trend weakened in all three of these metrics, which we initially attributed to} the \AR{limited} size of the universal BAFs used. However, \AR{this was still the case} when we tested with larger universal BAFs, e.g. 10-ary trees resulted in \AR{similar results} at around $max=4$\AR{, indicating that this may be due to the semantics' characteristics}.
significantly higher \RR, \CR\ and \CA\ up to $max=3$ ($p<0.005$ for %all pairwise comparisons between values 
$max$ values of $1$ vs $2$ and $2$ vs $3$
for all metrics). Above this limit ($max$ values of $3$ vs $4$ and $4$ vs $5$), this trend was no longer apparent\cut{ in all three of these metrics}, suggesting that there was a limit to the effectiveness of contributing arguments at this distance from $\arge$.
%which we initially attributed to} the \AR{limited} size of the universal BAFs used. However, \AR{this was still the case} when we tested with larger universal BAFs, e.g. 10-ary trees resulted in \AR{similar results} at around $max=4$\AR{, indicating that this may be due to the semantics' characteristics}.
Note that the machine's \PR\ is always 100\% here, since the (unresponsive) human does not contribute\cut{ to the AXs and so cannot persuade the machine}.%, which mainly validates our experimental set up
%\todo{as the size increases: effectiveness and accuracy up, efficiency down. 100\% resolution since human does not argue here. Low effectiveness generally shows the need for deeper reasoning.}
%\todo{could mention that the first hypotheses validate the setup by testing some parameters like the hypotheses in \cite{Tarle_22}}

%Bence:
%\todo{$R^2$, coefficient/gradient, p}

\textbf{H2:} We fixed %the \emph{max} constant for the simple machine to four
$max=4$ (the value with the maximum \RR\ for H1) and found that increasing the confirmation bias in the human significantly decreased the machine's \RR\ 
%BENCE ($\beta = -21.7$, 95\% CI $[-27.3, -16.1]$, $p<0.001$), 
initially
($p<0.01$ for $0$ vs $-0.1$ and $-0.1$ vs $-0.2$), before the effect tailed off as \RR\ became very low ($p=0.09$ for $-0.2$ vs $-0.3$ and $p=0.03$ for $-0.3$ vs $-0.4$), 
demonstrating the need
for behaviours which consider deeper reasoning %\delete{than 4} %(when it is available) 
than the shallow behaviour to achieve higher resolution rates.
\cut{The intuitive results from H1 and H2 %, while straightforward, 
confirmed \AR{that} the \AR{AX}s were functioning appropriately.} 

\textbf{H3:} From here onwards we tested with a counterfactual human\footnote{%We also experimented with greedy humans %but found that the results 
%\AR{and the findings}
%in all tests were %identical to those shown
%as for the counterfactual human
%were \AR{very similar}.
Experiments with greedy humans gave similar findings\cut{ to those which we describe}.} and fixed the level of confirmation bias therein to $-0.2$. We %then
compared shallow against greedy machines, also limiting the number of arguments they contributed %by the greedy machines 
to maxima of three and four to compare fairly with the shallow machine with the fixed \emph{max} constant\cut{ (note that %when \AR{the human contributes to the exchange (i.e. is not unresponsive), 
\CR\ accounts for human contributions also in addition to those from the machine)}.  
%It can be seen that 
\RR\ increased significantly with the greedy behaviour ($p<0.001$), % as the interactions became more complex, 
%\AR{advantages 
over the shallow machine which remained statistically significant when we restricted the greedy machine's contributed arguments to 4 ($p<0.005$), but not to 3 ($p=0.202$).
\cut{Some of these gains are down to the machine now being able to be persuaded by the human's contributions, but not their entirety, as shown by the machine's \PR\ falling when the restricted greedy method is introduced but remaining stable as the \RR\ increases with the easing restrictions.}
%\AR{It should be noted that although the accuracy drop for the greedy method here\todo{...HL, not sure how to explain this?! I had a note saying that accuracy is not fair to the greedy strategy but I can't remember why?}}
%Note that here the machine still assigns biases of zero to the learnt arguments but it is not uncommon that the resolution results in the machine reaching the initial stance of the human. This is because the human can contribute attacks and supports between arguments which are already in $\AGm$'s private QBAF.
%\todo{resolution isn't 100\% since we contribute edges, not arguments}

\textbf{H4:} %We then used a greedy machine explaining to a counterfactual human and varied the constant biases assigned by the machine to the learnt arguments. 
%The advantages of learning in the machine were stark, with 
\RR\ increased significantly with the bias on learnt arguments %BENCE \AR{($\beta = 36.7$, 95\% CI $[32.6, 40.8]$, $p<0.001$)}.
($p<0.001$ for both comparisons of learning configurations: $0$ vs $0.5$ and $0.5$ vs $1$). 
However, %it \AR{was not without its downsides:}
the machine's \CR\ 
%\AR{($\beta = -2.15$, 95\% CI $[-2.83, -1.47]$, $p<0.001$)} 
and \PR\ 
%\AR{($\beta = -51.1$, 95\% CI $[-58.0, -44.3]$, $p<0.001$)}} 
fell significantly 
($p<0.001$ for similar pairwise comparisons, except for $0$ vs $0.5$ for \CR, where $p=0.27$). 
highlighting the %\delete{gullible}
naive nature of machines learning %without scepticism 
credulously (i.e. assigning all learnt arguments the top bias). 
\cut{This demonstrates, importantly, that interactivity can be a powerful mediator
when machines learn from humans. }
%\AR{Interestingly, the \CA\ was also penalised by this lack of scepticism, further demonstrating this need.}

\textbf{H5:} 
The counterfactual behaviour outperformed the greedy behaviour significantly in terms of both \RR\ ($p<0.01$) and \CA\ (%$t(999)= 1.96$, 
$p<0.001$), %demonstrating
showing, even in this limited setting, the %\cut{clear} 
advantages in taking a counterfactual view, % of the AX 
%\cut{before %selecting an argument to be contributed 
%contributing to the AX}, 
given that the strongest argument (as selected by the greedy behaviour) may not always be the most effective in persuading%\cut{ the other agent}
.
%\delete{In this limited setting where both behaviours will eventually select the same arguments, albeit possibly in different orders, the latter be does not outperform the former in \RR, partially falsifying H5, while the \CR\ and \PR\ for both behaviours are similar. However, the latter's \CA\ is significantly higher (%$t(1000) = 2.55$, $p = 0.011$), demonstrating that taking the counterfactual view is worthwhile even in this constrained setting.}} %Here, the results for the two machine behaviours were almost identical, which shows that no advantages were gained wrt these metrics in deploying the counterfactual over the greedy behaviour. This highlights the need for future work investigating the settings which exhibit the effect shown in Example \ref{ex:novel}. 

% https://www.scribbr.com/statistics/statistical-tests/

% https://www.scribbr.com/statistics/t-test/

% https://www.scribbr.com/statistics/students-t-table/

%%%%%%%%%%%%%%%%%%%%%%%%%%%%%%%%%%%%%%%%%%%%%%%%%%%%%%%%%%%%%%%%%%%%%%%%%%%%%%%%%%%%%%%%%%%%%%%%%%%%

\section{Conclusions% and Future Work
}
\label{sec:conclusions}

We  defined the novel concept of AXs, and deployed AXs in the %interactive 
XAI setting where a machine and a human engage in interactive explanations%.  We showed the potential of AXs in providing argumentative explanations to allow for rich, dynamic interactions 
, powered by non-shallow reasoning, contributions from %multiple 
both agents and modelling of agents' learning and explanatory behaviour.
\cut{In the dominant XAI perspective, the human only consumes the explanation produced by the machine (e.g. using %local explanation 
methods such as SHAP~\cite{Lundberg_17}). Instead, when the machine and the human interact within AXs, both agents can produce and consume information, interactively.}
This work opens several avenues for future  work, besides those already mentioned. 
%We plan to User study justifying all the assumptions, links to trust
It would be interesting to experiment with any number of agents, besides the two %agents only 
that are standard in XAI, and to identify  restricted cases %, specifically for XAI, 
where hypotheses H1-H5 are %analytically
guaranteed to hold.
It would also be interesting to accommodate mechanisms for  machines to model humans, e.g. %building upon work on 
as in opponent modelling%as in
~\cite{Hadjinikolis_13}.  
Also fruitful could be an investigation of how closely AXs can represent machine and human behaviour.
Further, while we used AXs in XAI,  they may be %potentially 
%applicable 
usable in various multi-agent settings%where conflicts need resolving
\cut{, e.g.~\cite{Fan_12,
Fan_12_MD,Tarle_22}}.

\section*{Acknowledgements}

This research was partially funded by the  ERC under the
EU’s Horizon 2020 research and innovation programme (%grant agreement 
No. 101020934, ADIX) and by J.P. Morgan and by the Royal
Academy of Engineering, UK%, under the Research Chairs and Senior Research Fellowships scheme
.  %Any views or opinions expressed herein are solely those of the authors.

%% The file kr.bst is a bibliography style file for BibTeX 0.99c
\bibliographystyle{kr}
\bibliography{bib_short}

\newpage

\appendix
\section{Supplementary Material \\ (Interactive Explanations by Conflict Resolution via Argumentative Exchanges) }

\subsection{Proofs of Propositions in the Main Body}

Proof for Proposition \ref{propos:basic}:
\begin{proof}
    Let $E=\langle \BAFi^0,\ldots, \BAFi^{n}, \Agents^0, \ldots, \Agents^n, \SpeakerM \rangle$ be an AX for $\arge$.
    For Properties \ref{prop:connectedness} and \ref{prop:acyclicity}, $E$ trivially satisfies connectedness and acyclicity given that $\BAFi^n$ is a BAF for $\arge$. 
    For Property \ref{prop:contributor}, consider that $\forall \AGa \in \Agents'$, $\StanceA^n({\QBAFa^n}',\arge)$ depends only on $\SFa^n({\QBAFa^n}',\arge)$ by Definition \ref{def:stance}. 
    By Definition \ref{def:exchange}, we can see that ${\QBAFa^n}' \sqsupset {\BAFi^n}'$ such that ${\ArgsA^n}' = {\ArgsA^0}' \cup {\ArgsI^n}'$, ${\AttsA^n}' = {\AttsA^0}' \cup {\AttsI^n}'$ and ${\SuppsA^n}' = {\SuppsA^0}' \cup {\SuppsI^n}'$, regardless of $\SpeakerM'$, so it can be seen that $\SFa^n({\QBAFa^n}',\arge) = \SFa^n(\QBAFa^n,\arge)$ and $\StanceA^n({\QBAFa^n}',\arge) = \StanceA^n(\QBAFa^n,\arge)$, thus $E$ satisfies contributor irrelevance.
    %For Property \ref{prop:existence}, since the BAF $\BAF'$ is a BAF for $\arge$, then, by Definition \ref{def:exchange}, in $\SpeakerM$ agents may contribute arguments in turn until $\BAFi^n = \BAF'$.
\end{proof}

Proof for Proposition \ref{propos:paths}:
\begin{proof}
    Let Case 1 for $\arge$ be that $\SFa(\QBAFa^t,\arge) > \SFa(\QBAFa^{t-1},\arge)$ and Case 2 for $\arge$ be that $\SFa(\QBAFa^t,\arge) < \SFa(\QBAFa^{t-1},\arge)$.
    It can be seen from Definition \ref{def:exchange} that $\BSa^t(\arga) = \BSa^{t-1}(\arga)$ $\forall \arga \in \ArgsA^t$ for $t>0$. 
    Then, by the definition of DF-QuAD (see §\ref{sec:preliminaries}), Case 1 must be due to: 
    1a.) an added supporter of $\arge$, i.e. $\SuppsA^t(\arge) \supset \SuppsA^{t-1}(\arge)$; 
    1b.) a strengthened supporter of $\arge$, i.e. $\exists \arga \in \SuppsA^t(\arge)$ such that $\SFa(\QBAFa^t,\arga) > \SFa(\QBAFa^{t-1},\arga)$; or
    1c.) a weakened attacker of $\arge$, i.e. $\exists \argb \in \AttsA^t(\arge)$ such that $\SFa(\QBAFa^t,\argb) < \SFa(\QBAFa^{t-1},\argb)$.
    Similarly, and also by the definition of DF-QuAD, Case 2 must be due to: 
    2a.) an added attacker of $\arge$, i.e. $\AttsA^t(\arge) \supset \AttsA^{t-1}(\arge)$; 
    2b.) a weakened supporter of $\arge$, i.e. $\exists \arga \in \SuppsA^t(\arge)$ such that $\SFa(\QBAFa^t,\arga) < \SFa(\QBAFa^{t-1},\arga)$; or
    2c.) a strengthened attacker of $\arge$, i.e. $\exists \argb \in \AttsA^t(\arge)$ such that $\SFa(\QBAFa^t,\argb) > \SFa(\QBAFa^{t-1},\argb)$.
    In Cases 1a and 2a, it can be seen that $\Pros(\BAFi^t) \supset \Pros(\BAFi^{t-1})$ and $\Cons(\BAFi^t) \supset \Cons(\BAFi^{t-1})$, \resp\
    In Cases 1b and 2b, we repeat Cases 1 and 2 for the supporter $\arga$.
    In Cases 1c and 2c, we repeat Cases 1 and 2  for the attacker $\argb$, noting that the $\Pros$ and $\Cons$ sets will be inverted due to the extra attack in the path to $\arge$.
    Since $\QBAFa^t$ is a(n acyclic) QBAF for $\arge$, all paths through the multitree eventually reach leaves and Cases 1a and 2a apply, thus the proposition holds.
\end{proof}

Proof for Proposition \ref{propos:resolution}:
\begin{proof}
     It can be seen, by Definition \ref{def:stance}, that if $\StanceA^n(\arge) > \StanceA^0(\arge)$, then $\SFa(\QBAFa^n,\arge) > \SFa(\QBAFa^0,\arge)$. Then, by Definition \ref{def:exchange} and Proposition \ref{propos:paths}, $\Pros(\BAFi^n) \neq \emptyset$. 
    Analogously, it can be seen, by Definition \ref{def:stance}, that if $\StanceA^n(\arge) < \StanceA^0(\arge)$, then $\SFa(\QBAFa^n,\arge) < \SFa(\QBAFa^0,\arge)$. Then, by Definition \ref{def:exchange} and Proposition \ref{propos:paths}, $\Cons(\BAFi^n) \neq \emptyset$.
    Thus, resolution representation is satisfied.
\end{proof}

Proof for Proposition \ref{propos:greedy}:
\begin{proof}
    Since $E$ is unresolved and by Definition \ref{def:states}, let $\AGa \in \Agents$ be arguing for $\arge$ and $\AGb \in \Agents$ be arguing against $\arge$. 
    By Definition \ref{def:greedy}, we know that 
    $\AGa$ contributed some $(\arga,\argb)$ such that either: 
    $(\arga,\argb) \in \SuppsA^{t-1}$ and $\argb \in \Pros(\BAFi^{t-1}) \cup \{ \arge \}$; or
    $(\arga,\argb) \in \AttsA^{t-1}$ and $\argb \in \Cons(\BAFi^{t-1})$.
    By Definition \ref{def:procon}, it can be seen that in both cases, $\arga \in \Pros(\BAFi^t)$ and so, by Definition \ref{def:exchange}, $\Pros(\BAFi^n) \neq \emptyset$. 
    Similarly, by Definition \ref{def:greedy}, we know that 
    $\AGb$ contributed some $(\arga,\argb)$ such that either: 
    $(\arga,\argb) \in \SuppsA^{t-1}$ and $\argb \in \Cons(\BAFi^{t-1})$; or
    $(\arga,\argb) \in \AttsA^{t-1}$ and $\argb \in \Pros(\BAFi^{t-1}) \cup \{ \arge \}$.
    By Definition \ref{def:procon}, it can be seen that in both cases, $\arga \in \Cons(\BAFi^t)$ and so, by Definition \ref{def:exchange}, $\Cons(\BAFi^n) \neq \emptyset$. 
    Thus, conflict resolution is satisfied.
\end{proof}

Proof for Proposition \ref{propos:equivalence}:
\begin{proof}
    Let $(\arga,\arge)$ be such that $\SpeakerM((\arga,\arge)) = (\AGa, t)$. 
    If $\AGa$ is arguing for $\arge$, then the shallow behaviour, by Definition \ref{def:shallow}, requires that $(\arga,\arge) \in \SuppsA^{t-1}$ and $\nexists (\argb,\arge) \in \SuppsA^{t-1} \setminus (\SuppsI^{t-1} \cup \{ (\arga,\arge) \})$ such that $\SFa(\QBAFa^{t-1},\argb) > \SFa(\QBAFa^{t-1},\arga)$.
    Meanwhile, the greedy behaviour, by Point 1 of Definition \ref{def:greedy}, requires that $(\arga,\arge) \in \SuppsA^{t-1}$, and by Point 2, requires that $\nexists (\argb,\arge) \in \SuppsA^{t-1} \setminus (\SuppsI^{t-1} \cup \{ (\arga,\arge) \})$ such that $\SFa(\QBAFa^{t-1},\argb) > \SFa(\QBAFa^{t-1},\arga)$. (Point 3 is not relevant here since $\argpaths((\argc,\arge)) = \{ (\argc,\arge) \}$ $\forall \argc \in \ArgsA^{t-1}$ at $t>0$.) The greedy behaviour is thus aligned with the shallow behaviour when $\AGa$ is arguing for $\arge$.
    %
    %\delete{Finally, the counterfactual behaviour, by Definition \ref{def:counterfactual}, requires that $(\arga,\arge) = argmax_{(\arga',\arge)  \in (\SuppsA^{t-1}) \setminus ( \SuppsI^{t-1})}\eff((\arga', \arge),\QBAFa^t)$, given that only supporters can increase $\arge$'s evaluation in this restricted case, where $\argpaths((\argc,\arge)) = \{ (\argc,\arge) \}$ $\forall \argc \in \ArgsA^{t}$. Given the definition of DF-QuAD (see §\ref{sec:preliminaries}), it is clear that, here, this is equivalent to the simple and greedy behaviours when $\AGa$ is arguing for $\arge$.}
    Similarly for when $\AGa$ is arguing against $\arge$, the shallow behaviour, by Definition \ref{def:shallow}, requires that $(\arga,\arge) \in \AttsA^{t-1}$ and $\nexists (\argb,\arge) \in \AttsA^{t-1} \setminus (\AttsI^{t-1} \cup \{ (\arga,\arge) \})$ such that $\SFa(\QBAFa^{t-1},\argb) > \SFa(\QBAFa^{t-1},\arga)$.
    Meanwhile, the greedy behaviour, by Point 1 of Definition \ref{def:greedy}, requires that $(\arga,\arge) \in \AttsA^{t-1}$, and by Point 2, requires that $\nexists (\argb,\arge) \in \AttsA^{t-1} \setminus (\AttsI^{t-1} \cup \{ (\arga,\arge) \})$ such that $\SFa(\QBAFa^{t-1},\argb) > \SFa(\QBAFa^{t-1},\arga)$. (Point 3 is not relevant here since $\argpaths((\argc,\arge)) = \{ (\argc,\arge) \}$ $\forall \argc \in \ArgsA^{t-1}$ at $t>0$.) The greedy behaviour is thus aligned with the shallow behaviour when $\AGa$ is arguing against $\arge$.
    %
    %\delete{Finally, the counterfactual behaviour, by Definition \ref{def:counterfactual}, requires that $(\arga,\arge) = argmax_{(\arga',\arge)  \in (\AttsA^{t-1}) \setminus ( \AttsI^{t-1})}\eff((\arga', \arge),\QBAFa^t)$, given that only attackers can increase $\arge$'s evaluation in this restricted case, where $\argpaths((\argc,\arge)) = \{ (\argc,\arge) \}$ $\forall \argc \in \ArgsA^{t}$. Given the definition of DF-QuAD, it is clear that, here, this is equivalent to the simple and greedy behaviours.}
    %
    %\delete{Thus, the three behaviours are equivalent both when $\arga$ is arguing for and against $\arge$ and so the proposition holds.}
    Thus, the proposition holds.
\end{proof}

Proof for Proposition \ref{propos:cf}:
\begin{proof}
    Since $E$ is unresolved and by Definition \ref{def:states}, let $\AGa \in \Agents$ be arguing for $\arge$ and $\AGb \in \Agents$ be arguing against $\arge$. 
    By Definition \ref{def:counterfactual}, we know that $\AGa$ contributed some $(\arga,\argb)$ such that $\eff((\arga, \argb),\QBAFa^t) > 0$. 
    Thus, by Proposition \ref{propos:paths}, $\Pros(\BAFi^t) \supset \Pros(\BAFi^{t-1})$ and, by Definition \ref{def:exchange}, $\Pros(\BAFi^n) \neq \emptyset$.
    Similarly, by Definition \ref{def:counterfactual}, we know that $\AGb$ contributed some $(\arga,\argb)$ such that $\eff((\arga, \argb),\QBAFb^t) < 0$. 
    Thus, by Proposition \ref{propos:paths}, $\Cons(\BAFi^t) \supset \Cons(\BAFi^{t-1})$ and, by Definition \ref{def:exchange}, $\Cons(\BAFi^n) \neq \emptyset$.
    Thus, conflict resolution is satisfied.
\end{proof}

 \subsection{Additional Propositions for §\ref{sec:behaviour}}

We now give more detail on the effects of varying the biases applied to learnt arguments by the machine.
The first case demonstrates the difficulty in achieving consensus between agents when one or all agents are incapable of learning.

\begin{proposition}\label{prop:unintelligent}
    Let, $\forall \AGa \in \Agents$, $\SFa$ be the DF-QuAD semantics.
    Then, for any $\AGa \in \Agents$ and $t>1$, if $\ArgsA^t = \ArgsA^{t-1} \cup \{ \arga \}$ where $\BSa^t(\arga) = 0$ and $\AttsA^t(\arga) \cup \SuppsA^t(\arga) = \emptyset$, then $\forall \argb \in \ArgsA^t \setminus \{ \arga \}$, $\SFa(\QBAFa^t,\argb) = \SFa(\QBAFa^{t-1},\argb)$.
\end{proposition}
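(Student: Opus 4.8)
The plan is to exploit two facts about DF-QuAD: the newly added argument $\arga$ has dialectical strength $0$, and the strength aggregation function $\Sigma$ absorbs $0$ as an identity element. First I would compute $\SFa(\QBAFa^t,\arga)$ directly from the DF-QuAD definition in §\ref{sec:preliminaries}. Since $\arga$ has no attackers and no supporters and base score $\BSa^t(\arga)=0$, we get $\SFa(\QBAFa^t,\arga)=c(0,\Sigma(()),\Sigma(()))=c(0,0,0)$, and because $0\geq 0$ the combination function yields $c(0,0,0)=0-0\cdot\mid 0-0\mid=0$.

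Next I would establish the key lemma that inserting $0$ into an aggregation leaves it unchanged, i.e.\ $\Sigma((v_1,\ldots,v_k,0))=\Sigma((v_1,\ldots,v_k))$. This follows because the binary case gives $\Sigma((w,0))=w+0-w\cdot 0=w$, while the pairwise operation $v_1+v_2-v_1 v_2=1-(1-v_1)(1-v_2)$ is commutative and associative, so $\Sigma$ is independent of the order of its arguments (as already noted in §\ref{sec:preliminaries}); hence any $0$ entry can be moved to the end and discarded without altering the value.

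Then I would prove the claim by induction over the arguments of $\QBAFa^t$ taken in a topological order, which exists because $\QBAFa^t$ is acyclic (condition iii of Definition~\ref{def:tree}), so that every attacker or supporter of an argument precedes it. Fix $\argb\in\ArgsA^t\setminus\{\arga\}$. By Definition~\ref{def:exchange} the base score is preserved, $\BSa^t(\argb)=\BSa^{t-1}(\argb)$, and the attackers (\resp\ supporters) of $\argb$ in $\QBAFa^t$ are exactly those in $\QBAFa^{t-1}$ together with at most $\arga$, the only new argument. In the base case $\argb$ is a source, and since $\QBAFa^{t-1}\sqsubseteq\QBAFa^t$ it is a source in both frameworks, so its strength is $\BSa(\argb)$ throughout. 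In the inductive step the inductive hypothesis gives that every attacker/supporter of $\argb$ other than $\arga$ has unchanged strength, while $\arga$, if present, contributes strength $0$; by the lemma the aggregates $\Sigma(\SFa(\QBAFa^t,\AttsA^t(\argb)))$ and $\Sigma(\SFa(\QBAFa^t,\SuppsA^t(\argb)))$ coincide with their $\QBAFa^{t-1}$ counterparts. Applying $c$ to the unchanged base score and unchanged aggregates then gives $\SFa(\QBAFa^t,\argb)=\SFa(\QBAFa^{t-1},\argb)$, covering all $\argb\neq\arga$.

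The main obstacle I anticipate is bookkeeping rather than conceptual: arguing cleanly that the only structural difference seen by an existing argument $\argb$ is the possible insertion of $\arga$ into its attacker or supporter set, so that no new edges between pre-existing arguments are introduced, and organising the topological induction so the absorption lemma applies uniformly. Once the identity property and order-independence of $\Sigma$ are secured, the propagation from leaves towards $\arge$ is routine.
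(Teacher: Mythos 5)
Your proof is correct and follows essentially the same route as the paper's: compute $\SFa(\QBAFa^t,\arga)=0$ from the DF-QuAD definition, then observe that a zero-strength argument has no effect on anything it attacks or supports, hence on any pre-existing argument. The paper states the second step in a single line, whereas you make it precise via the absorption identity $\Sigma((v_1,\ldots,v_k,0))=\Sigma((v_1,\ldots,v_k))$ and a topological induction over the acyclic QBAF; this is a welcome tightening of the same argument rather than a different one.
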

\begin{proof}
    Since $\BSa^t(\arga) = 0$ and $\AttsA^t(\arga) \cup \SuppsA^t(\arga) = \emptyset$, by the definition of DF-QuAD (see §\ref{sec:preliminaries}), it must be the case that $\SFa(\QBAFa^t,\arga) = 0$. Also by the definition of DF-QuAD, an argument with an evaluation of zero has no effect on the arguments it attacks or supports, and thus any argument in $\AGa$'s private QBAF, i.e. $\SFa(\QBAFa^t,\argb) = \SFa(\QBAFa^{t-1},\argb)$ $\forall \argb \in \ArgsA^t \setminus \{ \arga \}$, thus the proposition holds. 
\end{proof}

We conjecture (but again leave to future work) that this behaviour is not limited to agents which evaluate arguments with DF-QuAD but also any semantics which ignore arguments with the minimum strength (e.g. as is described in \cite{Baroni_19} and by the property of \emph{neutrality} in \cite{Amgoud_17_BAF}).  %We leave more comprehensive studies of other semantics and the roles of properties in \AR{AX}s to future work.

%\todo{It can be seen that if agents' evaluation methods  and if agents assign a minimum bias on all learnt arguments, then such \AR{AX}s cannot be resolved.} 

The second case, meanwhile, guarantees that any learnt argument will have an effect all other downstream arguments' (including the explanandum's) strengths so long as no arguments in the private QBAF are assigned the minimium or maximum biases. %(Note that fewer restrictions are required for this effect if agents evaluate arguments with a semantics which satisfies \emph{strict monotonicity} \cite{Baroni_19} or \emph{strict bi-variate monotony/reinforcement} \cite{Amgoud_18}.)}

\begin{proposition}\label{prop:sceptical}
    %in partially sceptical agents with bias in all initial arguments between the extremities, every learnt argument is guaranteed to have some effect
    Let, $\forall \AGa \in \Agents$, $\SFa$ be the DF-QuAD semantics.
    Then, for any $\AGa \in \Agents$ and $t>1$ where $0<\BSa^t(\arga)<1$ $\forall \arga \in \ArgsA^t$, if $\ArgsA^t = \ArgsA^{t-1} \cup \{ \argb \}$, then $\forall \argc \in \ArgsA^t$ such that $|\argpaths(\argb,\argc)| = 1$, $\SFa(\QBAFa^t,\argc) \neq \SFa(\QBAFa^{t-1},\argc)$.
\end{proposition}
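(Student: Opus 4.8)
The plan is to prove the statement by induction along the \emph{unique} path from $\argb$ to $\argc$, after first establishing a global invariant that keeps every dialectical strength strictly inside $(0,1)$. First I would prove, by structural induction over the acyclic QBAF $\QBAFa^t$, the auxiliary fact that under DF-QuAD every argument has strength in $(0,1)$: a leaf $\arga$ has strength $c(\BSa^t(\arga),0,0)=\BSa^t(\arga)\in(0,1)$, and for an internal $\arga$ the inductive hypothesis gives aggregated attacker/supporter values $v^-,v^+\in[0,1)$ (since $\Sigma(w_1,\dots,w_k)=1-\prod_{i}(1-w_i)$ stays below $1$ whenever every $w_i<1$), whence $|v^+-v^-|<1$, and a short case split on the two branches of $c$ yields $c(\BSa^t(\arga),v^-,v^+)\in(0,1)$. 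This invariant is exactly what rules out the degenerate endpoints at which the DF-QuAD update could go flat.

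Next I would record two strict-monotonicity facts unlocked by the invariant: (i) $\Sigma$ is strictly increasing in each argument, since replacing one component $s$ while fixing the rest gives $1-(1-s)\prod_{\text{others}}(1-w_j)$ with the product strictly positive; and (ii) for fixed $v^0\in(0,1)$ the combination $c(v^0,\cdot,\cdot)$ is strictly increasing in $v^+$ and strictly decreasing in $v^-$ (each linear piece has nonzero slope $v^0$ or $1-v^0$, and the pieces agree at $v^-=v^+$). Composing (i) and (ii), any genuine change in the strength of a \emph{single} in-neighbour of an argument, with all other in-neighbours unchanged, forces a change in that argument's strength. I would also note that a DF-QuAD strength depends only on an argument's ancestors, the recursion being well-founded by acyclicity.

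The heart of the argument is the induction. Writing the unique path as $(\argc_0,\argc_1),\dots,(\argc_{m-1},\argc_m)$ with $\argc_0=\argb$ and $\argc_m=\argc$, I would first observe that in a DAG every walk is a simple path, so uniqueness of the path from $\argb$ to $\argc$ descends to uniqueness of the path from $\argb$ to each intermediate $\argc_i$ (two paths to $\argc_i$, each extended by the fixed suffix, would give two paths to $\argc$). The crucial consequence is that $\argc_{i-1}$ is the \emph{only} in-neighbour of $\argc_i$ reachable from $\argb$: any other reachable in-neighbour $\argd$ would produce a second path from $\argb$ to $\argc_i$. Since the sole structural change between $\QBAFa^{t-1}$ and $\QBAFa^t$ is the insertion of $\argb$ with its incident edges, every in-neighbour of $\argc_i$ other than $\argc_{i-1}$ keeps its strength across the two timesteps. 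In the base case ($i=1$) the argument $\argb$ is added as a brand-new attacker/supporter of $\argc_1$ with strength in $(0,1)$ by the invariant, so by (i) the relevant aggregate $v^\pm$ moves strictly and by (ii) $\SFa(\cdot,\argc_1)$ changes. The inductive step feeds the already-established change in $\argc_{i-1}$'s strength through the single edge $(\argc_{i-1},\argc_i)$, again via (i) then (ii); at $i=m$ this is the claim for $\argc$.

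The main obstacle is not the algebra of $c$ and $\Sigma$ but guaranteeing \emph{no cancellation}: in a general multitree the perturbation from $\argb$ could reach $\argc$ along several routes and cancel at some node. The hypothesis $|\argpaths(\argb,\argc)|=1$ is precisely what forecloses this, and the delicate part is converting that single-path hypothesis into the per-node statement that exactly one in-neighbour is affected. A secondary point to pin down is the scope of the inter-timestep update: I would make explicit that $\QBAFa^t\setminus\QBAFa^{t-1}$ contributes only $\argb$ together with edges incident to $\argb$ (so that no edge between two pre-existing arguments silently alters an in-neighbour's strength), which is the intended reading of $\ArgsA^t=\ArgsA^{t-1}\cup\{\argb\}$ within the exchange framework.
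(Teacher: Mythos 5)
Your proof is correct and follows essentially the same route as the paper's: first establish that all DF-QuAD strengths stay strictly inside $(0,1)$ when all base scores do, then propagate the perturbation caused by $\argb$ along the unique path using strict monotonicity of $\Sigma$ and of the combination function $c$. You are in fact more careful than the paper, whose proof compresses the propagation step into ``this same logic follows'': your reduction of the single-path hypothesis to the per-node statement that exactly one in-neighbour of each $\argc_i$ can change (thereby ruling out cancellation of effects arriving by multiple routes) is precisely the point the paper leaves implicit.
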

\begin{proof}
    It can be seen from the definition of the DF-QuAD semantics (see §\ref{sec:preliminaries}) that if $\nexists \arga \in \ArgsA^t$ such that $\BSa^t(\arga)=0$ or $\BSa^t(\arga)=1$, then $\nexists \arga \in \ArgsA^t$ such that $\SFa(\QBAFa^t,\arga)=0$ or $\SFa(\QBAFa^t,\arga)=1$.
    Then, also by the definition of DF-QuAD, it must be the case that if $\ArgsA^t = \ArgsA^{t-1} \cup \{ \argb \}$, then $\forall \argc \in \ArgsA^t$ such that $(\argb,\argc) \in \AttsA^t \cup \SuppsA^t$, $\SFa(\QBAFa^t,\argc) \neq \SFa(\QBAFa^{t-1},\argc)$. This same logic follows $\forall \argd \in \ArgsA^t$ such that $|\argpaths(\argb,\argd)| = 1$, and thus the proposition holds.
\end{proof}

Finally, the third case demonstrates the potential of incorporating credulity in machines with guarantees of rejection/weakening or acceptance/strengthening of arguments which are attacked or supported, \resp, by learnt arguments.

\begin{proposition}\label{prop:credulous}
    Let, $\forall \AGa \in \Agents$, $\SFa$ be the DF-QuAD semantics.
    Then, for any $\AGa \in \Agents$ and $t>1$, if $\ArgsA^t = \ArgsA^{t-1} \cup \{ \arga \}$ where $\BSa^t(\arga) = 1$ and $\AttsA^t(\arga) \cup \SuppsA^t(\arga) = \emptyset$, then for any $\argb \in \ArgsA^t$ such that $\arga \in \AttsA^t(\argb) \cup \SuppsA^t(\argb)$: 
    \begin{itemize}
        \item if $\arga \in \AttsA^t(\argb)$:
        \begin{itemize}
            \item if $\SuppsA^t(\argb) = \emptyset$ then $\SFa(\QBAFa^t,\argb) = 0$;
            \item if $\{ \argc \in \SuppsA^t(\argb) | \SFa(\QBAFa^t,\argc) = 1 \} = \emptyset$ then $\SFa(\QBAFa^t,\argb) < \BSa^t(\argb)$;
        \end{itemize}
        \item if $\arga \in \SuppsA^t(\argb)$:
        \begin{itemize}
            \item if $\AttsA^t(\argb) = \emptyset$ then $\SFa(\QBAFa^t,\argb) = 1$;
            \item if $\{ \argc \in \AttsA^t(\argb) | \SFa(\QBAFa^t,\argc) = 1 \} = \emptyset$ then $\SFa(\QBAFa^t,\argb) > \BSa^t(\argb)$.
        \end{itemize}
    \end{itemize}
    %in credulous agents, the presence of full strength attackers/supporters in the absence of: a.) full strength supporters/attackers or b.) supporters/attackers, gives a.) decreased/increased strength wrt bias or b.) minimum/maximum strength.
\end{proposition}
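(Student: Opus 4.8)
The plan is to reduce everything to elementary facts about the two DF-QuAD functions $\Sigma$ and $c$ recalled in \S\ref{sec:preliminaries}, applied locally at the single argument $\argb$. First I would record that the freshly learnt argument $\arga$ is evaluated in isolation: since $\BSa^t(\arga)=1$ and it has no attackers or supporters, $\SFa(\QBAFa^t,\arga)=c(1,\Sigma(()),\Sigma(()))=c(1,0,0)=1$. The one structural lemma I need is the closed form $\Sigma((v_1,\dots,v_n))=1-\prod_{i=1}^{n}(1-v_i)$, which follows by a one-line induction on $n$ from the recursive definition of $\Sigma$ (using $\Sigma((s,v))=s+v-sv=1-(1-s)(1-v)$). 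Two consequences are all I use: (i) any aggregate over a tuple that contains a value equal to $1$ is itself $1$; and (ii) the aggregate equals $1$ only if some entry equals $1$, so if every entry is strictly below $1$ then the aggregate is strictly below $1$.

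Next I would treat the attack case, $\arga\in\AttsA^t(\argb)$. Writing $v^0=\BSa^t(\argb)$ and $v^-,v^+$ for the attacker/supporter aggregates at $\argb$, fact (i) gives $v^-=1$ because $\arga$ (of strength $1$) is among the attackers. If $\SuppsA^t(\argb)=\emptyset$ then $v^+=\Sigma(())=0$, and since $v^-\ge v^+$ the combination function yields $c(v^0,1,0)=v^0-v^0\cdot|0-1|=0$, giving the first bullet. If instead no supporter of $\argb$ has strength $1$, fact (ii) gives $v^+<1$; again $v^-=1\ge v^+$, so $c(v^0,1,v^+)=v^0-v^0(1-v^+)=v^0\cdot v^+$, which lies strictly below $v^0$. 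The support case $\arga\in\SuppsA^t(\argb)$ is entirely symmetric, with $v^+=1$ by fact (i): if $\AttsA^t(\argb)=\emptyset$ then $v^-=0<v^+$, so $c(v^0,0,1)=v^0+(1-v^0)\cdot 1=1$; and if no attacker has strength $1$ then $v^-<1=v^+$, so $c(v^0,v^-,1)=v^0+(1-v^0)(1-v^-)>v^0$.

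The calculations themselves are routine once the two consequences of the $\Sigma$ lemma are in place, so the only genuinely delicate point is the strictness in the second bullet of each case: $v^0\cdot v^+<v^0$ requires $v^0>0$, and symmetrically $v^0+(1-v^0)(1-v^-)>v^0$ requires $v^0<1$. I would therefore either flag these as implicit non-degeneracy assumptions ($\BSa^t(\argb)\in(0,1)$, which is anyway the regime in which ``weakening/strengthening'' is meaningful), or, if base scores are allowed at the endpoints, weaken those two bullets to non-strict inequalities. Everything else follows verbatim from the definitions, with no appeal to the tree structure of $\QBAFa^t$ beyond the fact that $\argb$'s attackers and supporters are well defined.
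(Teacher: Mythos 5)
Your proof takes essentially the same route as the paper's: both reduce the claim to a direct unfolding of the DF-QuAD aggregation and combination functions at $\argb$, after first observing that the learnt argument $\arga$, having base score $1$ and no attackers or supporters, evaluates to $1$. The difference is one of explicitness. The paper's proof asserts each bullet ``by the definition of DF-QuAD'' without computation, whereas you derive the closed form $\Sigma((v_1,\ldots,v_n))=1-\prod_{i}(1-v_i)$ and evaluate $c$ explicitly, obtaining $\SFa(\QBAFa^t,\argb)=v^0\cdot v^+$ in the attack case and $v^0+(1-v^0)(1-v^-)$ in the support case. This extra care is what lets you notice that the strict inequalities in the second bullet of each case are not unconditionally true: if $\BSa^t(\argb)=0$ in the attack case then $\SFa(\QBAFa^t,\argb)=0=\BSa^t(\argb)$, and symmetrically if $\BSa^t(\argb)=1$ in the support case then $\SFa(\QBAFa^t,\argb)=1=\BSa^t(\argb)$. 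The paper's proof silently skips these endpoint cases, so your proposed non-degeneracy assumption $\BSa^t(\argb)\in(0,1)$ (or the weakening to non-strict inequalities) is a genuine correction to the statement rather than pedantry. Everything else in your argument matches the intended proof.
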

\begin{proof}
    Since $\BSa^t(\arga) = 1$ and $\AttsA^t(\arga) \cup \SuppsA^t(\arga) = \emptyset$, by the Definition of DF-QuAD (see §\ref{sec:preliminaries}), we can see that $\SFa(\QBAFa^t,\arga) = 1$. 
    Let $\arga \in \AttsA^t(\argb)$. If $\SuppsA^t(\argb) = \emptyset$ then, by the definition of DF-QuAD, $\SFa(\QBAFa^t,\argb) = 0$. Also by the definition of DF-QuAD, if $\{ \argc \in \SuppsA^t(\argb) | \SFa(\QBAFa^t,\argc) = 1 \} = \emptyset$, then $\SFa(\QBAFa^t,\argb) < \BSa^t(\argb)$.
    Similarly, we let $\arga \in \SuppsA^t(\argb)$. If $\AttsA^t(\argb) = \emptyset$ then, by the definition of DF-QuAD, $\SFa(\QBAFa^t,\argb) = 1$. Also by the definition of DF-QuAD, if $\{ \argc \in \AttsA^t(\argb) | \SFa(\QBAFa^t,\argc) = 1 \} = \emptyset$, then $\SFa(\QBAFa^t,\argb) > \BSa^t(\argb)$.
    Thus, the proposition holds. 
\end{proof}

%\begin{definition}\label{def:gullible} For any \emph{gullible machine agent} $\AGm$ at timestep $t$ and learnt argument $\arga \in \ArgsM^t \setminus \ArgsM^{t-1}$, $\BSm^t(\arga) = 1$. \end{definition}
%\begin{definition}\label{def:sceptical} For any \emph{sceptical machine agent} $\AGm$ at timestep $t$ and learnt argument $\arga \in \ArgsM^t \setminus \ArgsM^{t-1}$, $\BSm^t(\arga) = 0.5$. \end{definition}

%\delete{note about behaviour not being present in strict monotonicity-satisfying semantics?}

\subsection{Evaluation Measures for §\ref{sec:evaluation}}

The following gives the formal definitions for the first three of the evaluation measures used in the simulations. In all measures, when the denominator is zero, the measure is also.

\begin{definition}
Let $\mathcal{E}$ be a set of AXs for the same explanandum $\arge$ between agents $\Agents=\{\AGm, \AGh\}$, and let $\mathcal{R} \subseteq \mathcal{E}$ be the set of all resolved exchanges in $\mathcal{E}$. Then
\begin{itemize}
    \item the \emph{resolution rate} (\RR) of $\mathcal{E}$ is defined as 
    $\RR(\mathcal{E}) = 
    \frac{|\mathcal{R}|}{| \mathcal{E} |}$;
    \item the \emph{contribution rate} (\CR) of $\mathcal{E}$ is defined as 
    $\CR(\mathcal{E}) = \frac{\sum_{ E \in %\{ E \in \mathcal{E} | E \text{ is resolved} \}
    \mathcal{R}}{E_{\#}}}
    {| %\{ E \in \mathcal{E} | E \text{ is resolved} \} 
    \mathcal{R}|}$, where for  %any $E \in \mathcal{E}$ such that 
    $E = \langle %\BAFu, 
    \BAFi^0, \ldots, \BAFi^{n}, \Agents^0, \ldots, \Agents^n, \SpeakerM \rangle$, $E_{\#} = | \AttsI^n \cup \SuppsI^n |$;
    \item the  \emph{persuasion rate} (\PR) of $\AGa \in \Agents$ over $\mathcal{E}$ is defined as 
    $\PR(\AGa,\mathcal{E}) = 
    \frac{|\{ E \in \mathcal{R} | \forall \AGb \in \Agents, \StanceB(\QBAFb^n, \arge) = \StanceA(\QBAFa^0, \arge) \} |}
    {|\mathcal{R}|}$. 
\end{itemize}
\end{definition}

Before giving the final measure we let the following indicate the set of arguments which would have had the biggest effect on the explanandum in \emph{the other agent}'s private QBAF: 
\begin{align}
\text{ if }& \AGa \text{ is arguing for } \arge, \text{ then:  } 
\SpeakerM_{max}(\AGa,t) =  \nonumber \\ 
\{ \! (\!\arga, & \argb\!) \!\in\! argmax_{(\arga'\!, \argb') \in (\AttsA^{t} \! \cup \SuppsA^{t}) \!\setminus \! (\AttsI^{t} \! \cup \SuppsI^{t})} \SFb(\QBAFb^{t+1}\!\!,\arge) | \ArgsB^{t+1} \!\!=\! \ArgsB^{t} \!\cup\! \{ \arga' \}\! \} \nonumber \\
\text{ if }& \AGa \text{ is arguing against } \arge, \text{ then:  }
\SpeakerM_{max}(\AGa,t) =  \nonumber \\ 
\{ \! (\!\arga, & \argb\!) \!\in\! argmin_{(\arga'\!, \argb') \in (\AttsA^{t} \! \cup \SuppsA^{t}) \!\setminus \! (\AttsI^{t} \! \cup \SuppsI^{t})} \SFb(\QBAFb^{t+1}\!\!,\arge) | \ArgsB^{t+1} \!\!=\! \ArgsB^{t} \!\cup\! \{ \arga' \}\! \} \nonumber 
\end{align}

We can then define the contribution accuracy.

\begin{definition} 
Let $\mathcal{E}$ be a set of AXs for the same explanandum $\arge$, each between two agents $\Agents=\{\AGm, \AGh\}$. Then, the \emph{contribution accuracy} (\CA) of $\AGa \in \Agents$ over $\mathcal{E}$ is defined as $\CA(\AGa, \mathcal{E}) = \frac{\sum_{ E \in \mathcal{E}}{acc(\AGa,E)}}{| \mathcal{E} |}$, 
where for $E = \mathcal{E}$, $acc(\AGa, E) = 0$ if $\{ (a,b) \in \AttsI^n\cup \SuppsI^n | \SpeakerM((a,b))=(\AGa,t) \} = \emptyset$, otherwise: %\langle \BAFi^0, \ldots, \BAFi^{n}, \Agents^0, \ldots, \Agents^n, \SpeakerM \rangle$: 
$$acc(\AGa, E) = 
\frac{|\{ (a,b) \in \SpeakerM_{max}(\AGa,t) | \SpeakerM((a,b))=(\AGa,t) \} |}
{|\{ (a,b) \in \AttsI^n\cup \SuppsI^n | \SpeakerM((a,b))=(\AGa,t) \} | }$$.
\end{definition}

\end{document}